\title{
Splitting  Steepest Descent 
for Growing Neural Architectures 
}
\author{
{\rm Qiang Liu}\\
{\rm UT Austin}\\
{\rm \texttt{lqiang@cs.utexas.edu}} 
\and
{\rm Lemeng Wu \thanks{equal contribution}}\\
{\rm UT Austin}\\
{\rm \texttt{lmwu@cs.utexas.edu}} 
\and
{\rm  Dilin Wang \textsuperscript{*}}\\
{\rm UT Austin}\\
{\rm \texttt{dilin@cs.utexas.edu}} 
}
\renewcommand{\todo}[1]{}
\renewcommand{\red}[1]{}
\begin{document}

\maketitle

\begin{abstract}
%
%
We develop a progressive training approach for neural networks which adaptively grows the network structure by splitting existing neurons to multiple off-springs. 
By leveraging a functional steepest descent idea,  
we derive a simple criterion for deciding the best subset of neurons to split 
and a \emph{splitting gradient} for optimally updating the off-springs. 
Theoretically, our splitting strategy is a second-order functional steepest descent for escaping saddle points in an $\Linfty$-Wasserstein metric space, 
on which the standard parametric gradient descent is a first-order steepest descent.  
Our method provides a new practical approach for 
optimizing 
neural network structures,  
especially for learning lightweight neural architectures in resource-constrained settings. 
\end{abstract}
\vspace{-1em}

\section{Introduction}


Deep neural networks (DNNs) have achieved remarkable empirical successes recently. 
However, efficient and automatic optimization of model architectures remains to be a key challenge. 
Compared with parameter optimization which has been well addressed by gradient-based methods (a.k.a. back-propagation), 
optimizing model structures involves significantly more  
challenging discrete optimization with large search spaces and high evaluation cost. 
%
Although there have been  rapid progresses recently,  
designing  the best architectures still requires a lot of expert knowledge and trial-and-errors for most practical tasks.

This work 
targets 
extending the power of gradient descent to the domain of model structure optimization of neural networks.  
In particular, we consider the problem of progressively  
growing a neural network by ``splitting'' existing neurons into several ``off-springs'',  
and develop a simple and practical approach for deciding 
the best subset of neurons to split and how to split them, adaptively based on the existing model structure.  
We derive the optimal splitting strategies by
considering the \emph{steepest descent} of the loss when  
 the off-springs are 
 infinitesimally close to the original neurons,
 yielding a \emph{splitting steepest descent} 
 that monotonically decreases the loss 
 in the space of model structures. 
%

Our main method, shown in Algorithm~\ref{alg:main}, 
alternates between 
a standard \emph{parametric descent phase} in which we update the parameters to minimize the loss with a fixed model structure, 
and a \emph{splitting phase} 
in which we update the model structures by splitting neurons. 
The splitting phase is triggered when 
no further improvement can be made by only updating parameters, and allow us to escape the parametric local optima by augmenting the neural network in a locally optimal fashion. 
%
Theoretically, these two phases can be viewed as performing functional steepest descent on an $\Linfty$-Wasserstein metric space, in which the splitting phase is a \emph{second-order descent} for escaping saddle points in the functional space, while the parametric gradient descent corresponds to a \emph{first-order descent}. 
Empirically, our algorithm is simple and practical, 
and provides a promising tool for many challenging problems, especially for 
learning lightweight and energy-efficient neural architectures for resource-constrained settings.

\paragraph{Related Works}  
 The idea of progressively growing neural networks by node splitting is not new, 
 but 
 previous works are mostly based on heuristic or purely random splitting strategies \citep[e.g.,][]{wynne1992node, chen2015net2net}. 
 A different  approach for progressive training is  
 the  Frank-Wolfe or gradient boosting  based strategies \citep[e.g.,][]{schwenk2000boosting,bengio2006convex, bach2017breaking}, 
 which iteratively add new neurons derived from functional conditional gradient, while keeping the previous neurons fixed. 
However, these methods are not suitable for large scale settings,    
because adding each neuron requires to solve a difficult non-convex optimization problem, 
and keeping the previous neurons fixed prevents us from correcting the mistakes made in earlier iterations.   
A practical alternative of  Frank-Wolfe is to simply add new randomly initialized neurons and co-optimize the new and old neurons together. 
However, random initialization 
does not allow us to leverage the information of the existing model and takes more time to converge.  
 In contrast, 
 splitting neurons from the existing network allows us to inherent the knowledge from the existing model (see \citet{chen2015net2net}), and 
 is faster to converge in settings like continual learning, when the previous model is not far away from the optimal solution.  

An opposite direction of progressive training is to \emph{prune} large pre-trained neural networks to obtain compact network structures \citep[e.g.,][]{han2015deep, li2016pruning, liu2017learning}. 
In comparison, our splitting method requires no large pre-trained models
and is more suitable 
for learning \emph{very small}  network structures, which is of critical importance  for resource-constrained settings like mobile devices and Internet of things.
As shown in our experiments, 
our method can outperform existing pruning methods in  
learning more accurate models with small model sizes. 

More broadly, there has been a series of recent works on 
neural architecture search, based on various strategies from combinatorial optimization, 
including reinforcement learning (RL)~\citep[e.g.,][]{pham2018efficient, cai2018proxylessnas, zoph2016neural},
evolutionary algorithms (EA)~\citep[e.g.,][]{stanley2002evolving, real2018regularized},
and continuous relaxation~\citep[e.g.,][]{liu2018darts, xie2018snas}. 
However, these general-purpose black-box optimization methods do not efficient leverage the inherent geometric structure of the loss landscape, and are highly computationally expensive due to the need of evaluating the candidate architectures based on  inner training loops. 

\paragraph{Background: Steepest Descent and Saddle Points} 
Stochastic gradient descent is the driving horse for solving large scale optimization in machine learning and deep learning. 
Gradient descent can be viewed as a steepest descent procedure that iteratively improves the  solution by following the direction that maximally decreases the loss function within a small neighborhood of the previous solution.   
Specifically, for minimizing a loss function $L(\theta)$, 
each iteration of steepest descent updates the parameter via $\theta \gets \theta + \epsilon \delta$, where $\epsilon$ is a small step size and $\delta$ is an update direction chosen to maximally decrease the loss $L(\theta + \epsilon \delta)$ of the updated parameter under a norm constraint $\norm{\delta}\btmptwo\leq 1$,  
where $\norm{\cdot}$ denotes the Euclidean norm.   
When $\nabla L(\theta)\neq 0$ and $\epsilon$ is infinitesimal, 
the optimal descent direction $\delta$ equals the negative gradient direction, that is,  
$\delta = -\nabla L(\theta)/\norm{\nabla L(\theta)}$, 
 yielding a descent of $
L(\theta + \epsilon \delta)
- L(\theta) \approx -\epsilon \norm{\nabla L(\theta)}$. 
%
At a critical point with a zero gradient ($\nabla L(\theta)=0$), 
the steepest descent direction depends on the spectrum of the Hessian matrix $\nabla^2 L(\theta)$. Denote by $\lambda_{min}$ the minimum eigenvalue of $\nabla^2 L(\theta)$ and $v_{min}$ its associated eigenvector. 
When $\lambda_{min}>0$, the point $\theta$ is a stable local minimum and no further improvement can be made in the infinitesimal neighborhood.  
When $\lambda_{min} < 0$, 
the point $\theta$ is a saddle point or local maximum,  
and the steepest descent direction equals the eigenvector $\pm v_{min}$, 
which yields an $\epsilon^2\lambda_{min}/2$ decrease on the loss.\footnote{The property of the case when  $\lambda_{min}=0$ depends on higher order information.} 
In practice, it has been shown that there is no need to explicitly calculate the negative eigenvalue direction, because saddle points and local maxima are unstable and can be escaped by using gradient descent with random initialization or stochastic noise  \citep[e.g.,][]{lee2016gradient, jin2017escape}.  

\section{Splitting Neurons Using Steepest Descent} \label{sec:main}
We introduce our main method in this section. 
We first illustrate the idea with the simple case of splitting a single neuron 
in Section~\ref{sec:single}, 
and then consider the more general case of simultaneously splitting multiple neurons in deep networks in Section~\ref{sec:general}, which yields our main progressive training algorithm (Algorithm~\ref{alg:main}).  
Section~\ref{sec:wasser} draws a theoretical discussion and interpret our procedure as a functional steepest descent of the distribution of the neuron weights under the $\Linfty$-Wasserstein metric. 

\subsection{Splitting a Single Neuron}\label{sec:single}
Let $\sigma(\theta, x)$ be a neuron 
inside a neural network that we want to learn from data, where $\theta$ is the parameter of the neuron and $x$  its input variable. 
Assume the loss of $\theta$ has a general form of 
\begin{align}\label{equ:Ltheta}
L(\theta) :=  \E_{x\sim\mathcal D}[\Phi(\sigma(\theta, x))], 
\end{align}
where $\mathcal D$ is a data distribution, and $\Phi$ is a map determined by the overall loss function. 
The parameters of the other parts of the network are assumed to be fixed or optimized using standard procedures and  are omitted for notation convenience.  

Standard gradient descent can only 
yield parametric updates of $\theta$. 
We introduce a generalized steepest descent procedure that allows us to incrementally grow the neural network by gradually introducing new neurons, 
achieved by ``splitting'' the existing neurons into multiple copies 
in a (locally) optimal fashion derived using ideas from steepest descent idea. 


\begin{wrapfigure}{r}{0.4\textwidth}
  \vspace{-20pt}
  \begin{center}
    \includegraphics[width=0.38\textwidth]{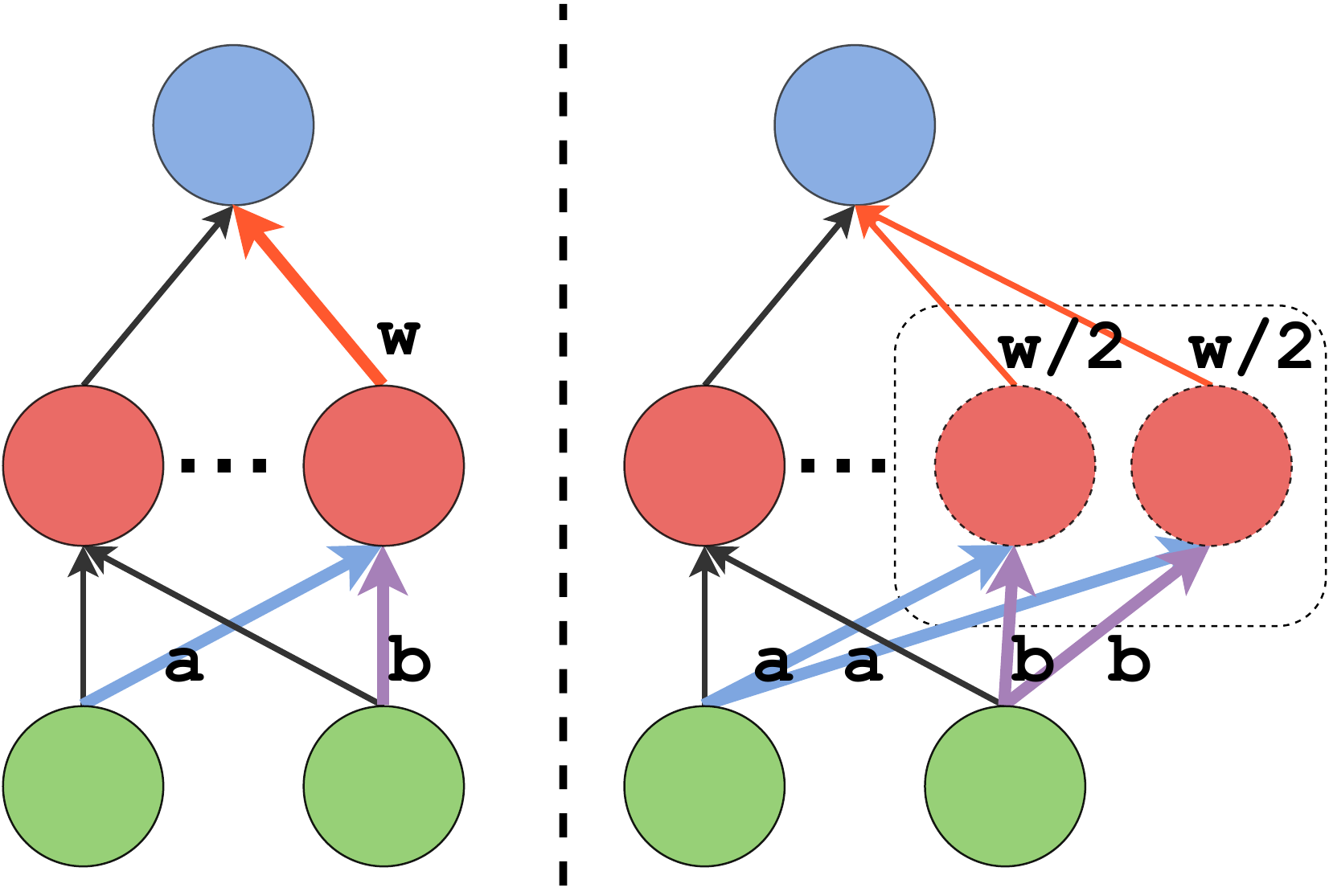}
  \end{center}
  \vspace{-20pt}
  \vspace{-10pt}
\end{wrapfigure}
In particular,  we  split $\theta$ into $m$ off-springs  $\vv\theta := \{\theta_i\}_{i=1}^m$, and replace the neuron $\sigma(\theta ,x)$ with a 
weighted sum of the off-spring neurons $\sum_{i=1}^m w_i \sigma(\theta_i, x)$, where $\vv w:= \{w_i\}_{i=1}^m$  
 is a set of positive weights assigned on the off-springs, and satisfies $\sum_{i=1}^m w_i = 1$, $w_i > 0$.  
This yields an augmented loss function on $\vv\theta$ and $\vv w$: 
\begin{align} \label{equ:LLtheta}
\Lm(\vv\theta, \vv w) :=  
\E_{x\sim \mathcal D}\left [\Phi \left (  \sum_{i=1}^m  w_i\sigma(\para_i, x)   \right ) \right]. 
\end{align}
A key property of this construction is that it introduces a smooth change on the loss function when the off-springs $\{\theta_i\}_{i=1}^m$ are close to the original parameter $\theta$:  
when 
$\theta_i = \theta$, $\forall i=1,\ldots,m$,
the augmented network and loss are equivalent to the original ones, that is, $\Lm(\theta \one_m, \vv w) = L(\theta)$, where $\one_m$ denotes the $m\times 1$ vector consisting of all ones; 
when all the $\{\theta_i\}$ are within an infinitesimal neighborhood of $\theta$, it yields an infinitesimal change on the loss, with which a steepest descent can be derive.  

Formally, 
consider the set of splitting schemes $(m, \vv\theta, \vv w)$ whose off-springs are  
$\epsilon$-close to the original neuron:  
$$
\{(m, \vv\theta,  \vv w)  
\colon 
m \in \mathbb N_+, ~ 
\norm{\theta_i -\theta}\btmptwo\leq \epsilon, ~ \sum_{i=1}^m w_i = 1, ~w_i > 0,~\forall i =1,\ldots, m\}.  
$$
We want to decide the optimal $(m, \vv\theta, \vv w)$ to maximize the decrease of loss $\L(\theta, \vv w)-L(\theta)$, when the step size $\epsilon$ is infinitesimal.  
Although this appears to be an infinite dimensional optimization because $m$ is allowed to be arbitrarily large, 
we show that the optimal choice is achieved with either $m=1$ (no splitting) or $m=2$ (splitting into two off-springs), with uniform weights $w_i = 1/m$.  
Whether a neuron should be split ($m=1$ or $2$)  
and the optimal values of the off-springs $\{\theta_i\}$   
are decided by 
the minimum eigenvalue and eigenvector of a \emph{splitting matrix},  
which plays a role similar to Hessian matrix for deciding saddle points. 
\begin{mydef}[\textbf{Splitting Matrix}]\label{def:splitting}
For $L(\theta)$ in \eqref{equ:Ltheta}, 
its \textbf{splitting matrix} $S(\theta)$ is defined as 
 \begin{align} \label{equ:A}
 S(\theta) =  \E_{x\sim \mathcal D}[\Phi'(\sigma(\theta, x)) \nabla_{\theta\theta}^2 \sigma(\theta,x)  ] . 
 \end{align} 
 We call the minimum eigenvalue $\lambda_{min}(S(\theta))$ of $S(
 \theta)$ the \textbf{splitting index} of $\theta$, 
 and the eigenvector $v_{min}(S(\theta))$ related to 
 $\lambda_{min}(S(\theta))$ the \textbf{splitting gradient} of $\theta$. 
 %
\end{mydef}

 The splitting matrix $S(\theta)$ is a $\RR^{d\times d}$ symmetric ``semi-Hessian'' matrix that involves the first derivative $\Phi'(\cdot)$, and the second  derivative of $\sigma(\theta,x)$.  
 It is useful to compare it with the typical gradient and Hessian matrix of $L(\theta)$: 
 \begin{align*} 
 \nabla_{\theta} L(
 \theta)  =  
 \E_{x\sim \mathcal D}[\Phi'(\sigma(\theta, x)) \nabla_{\theta} \sigma(\theta,x)  ], &~~~~~&
 \nabla_{\theta\theta}^2 
L(\theta)  =  S(\theta) + 
\underbrace{\E[  \Phi''(\sigma(\theta, x)) \nabla_{\theta} \sigma(\theta,x)^{\otimes 2}   ]}_{\text{\small\textcolor{blue}{$T(\theta)$}}}, 
 \end{align*}
 where $v^{\otimes 2} := vv^\top$ is the outer product. 
 The splitting matrix $S(\theta)$ differs from the gradient $\nabla_\theta L(\theta)$ in replacing $\nabla_\theta \sigma(\theta,x)$ with the second-order derivative $\nabla_{\theta\theta}^2\sigma(\theta,x)$, 
 and differs from the Hessian matrix $\nabla_{\theta\theta}^2L(\theta)$ in missing an extra term $T(\theta)$. 
 %
 %
 %
 We should point out that $S(\theta)$
 is the ``easier part'' of 
 the Hessian matrix,  
because the second-order derivative  $\nabla_{\theta\theta}^2\sigma(\theta,x)$ of the individual neuron $\sigma$ is much simpler than the second-order derivative $\Phi''(\cdot)$ of {``everything else''},  
which appears in the extra term $T(\theta)$. 
In addition, as we show in Section~\ref{sec:general}, 
$S(\theta)$ is block diagonal in terms of multiple neurons,  which is crucial for enabling practical computational algorithm. 
 
It is useful to decompose each $\theta_i$ into 
$\theta_i = \theta + \epsilon (\mu + \delta_i)$, where 
$\mu$ is an average displacement vector shared by all copies,  and $\delta_i$ is the splitting vector associated with  $\theta_i$, and satisfies $\sum_i w_i \delta_i = 0$ (which implies $\sum_i w_i \theta_i  = \theta + \epsilon \mu$).  
It turns out that the change of loss $\Lm(\vv \theta,\vv w) -L(\theta)$ naturally decomposes into two terms that reflect the effects of the average displacement and splitting, respectively. 
\begin{thm}\label{thm:Lthetasp}
Assume $\theta_i = \theta + \epsilon (\mu + \delta_i)$  
with $\sum_i w_i\delta_i = 0$ and $\sum_i w_i = 1$.  
For $L(\theta)$ and $\Lm(\vv\theta,\vv w)$ in \eqref{equ:Ltheta} and \eqref{equ:LLtheta}, assume 
$\Lm(\vv\theta,\vv w)$ has bounded third order derivatives w.r.t. $\vv\theta$.  
 We have 
\begin{align}\label{equ:LLLtaylor}
\Lm(\vv\theta, \vv w) - L(\theta) 
& = \underbrace{
\epsilon \nabla L(\theta)^\top \meandelta  
+ \frac{\epsilon^2}{2}\meandelta^\top \nabla^2 L(\theta) \meandelta  }_{
\text{\textcolor{blue}{\small $I(\mu; \theta)= L(\theta + \epsilon \mu) - L(\theta) + \obig(\epsilon^3)$}}} ~+~
\underbrace{\frac{\epsilon^2}{2} \sum_{i=1}^m w_i \delta_i^\top S(\theta)\delta_i}_{\text{\small\textcolor{blue}{$\II\left (\vv\delta, \vv w ; ~\theta\right)$}}}  
~+~ \obig(\epsilon^3), 
\end{align}
where the change of loss is decomposed into two terms: 
    the first term $I(\mu; \theta)$ is the effect of the average displacement $\mu$, and it is equivalent to applying the standard parametric update $\theta\gets \theta + \epsilon \mu$ on $L(\theta)$. 
The second term $\II\left(\vv\delta, \vv w 
;~ \theta\right)$ is the change of the loss caused by the splitting vectors $\vv\delta := \{\delta_i\}$. It depends on $L(\theta)$ only through the splitting matrix $S(\theta)$. 
\end{thm}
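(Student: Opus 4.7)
The plan is to Taylor expand $\Lm(\vv\theta,\vv w)$ around the base point $\vv\theta = \theta \one_m$ to second order in $\epsilon$, using the bounded-third-derivative hypothesis on $\Lm$ to control the remainder uniformly as $\obig(\epsilon^3)$. I would compute the first- and second-order partials of $\Lm$ with respect to the individual $\theta_i$'s at the base point, substitute $\theta_i - \theta = \epsilon(\mu + \delta_i)$, and then collapse the resulting sums using the constraints $\sum_i w_i = 1$ and $\sum_i w_i \delta_i = 0$.

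Direct differentiation of $\Lm(\vv\theta,\vv w) = \E[\Phi(\sum_j w_j \sigma(\theta_j,x))]$ yields $\nabla_{\theta_i}\Lm|_{\vv\theta = \theta\one_m} = w_i \nabla L(\theta)$. The second-order partial $\nabla^2_{\theta_i \theta_j}\Lm|_{\vv\theta = \theta\one_m}$ splits into a $\Phi''$-contribution $w_i w_j T(\theta)$, which is present for all $(i,j)$ and comes from the outer chain rule, plus a $\Phi'$-contribution $w_i S(\theta)$ which is diagonal in $(i,j)$ and comes from the inner second derivative of $\sigma$. This structural decomposition is precisely the identity $\nabla^2 L(\theta) = S(\theta) + T(\theta)$ from the excerpt, lifted to block form over the off-springs.

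Plugging the perturbation into the Taylor expansion produces three pieces. The linear piece $\epsilon \nabla L(\theta)^\top \sum_i w_i (\mu + \delta_i)$ collapses to $\epsilon \nabla L(\theta)^\top \mu$ by $\sum_i w_i \delta_i = 0$. The $T$-contribution $\frac{\epsilon^2}{2}\sum_{i,j} w_i w_j (\mu + \delta_i)^\top T(\theta)(\mu + \delta_j)$ factors as $\frac{\epsilon^2}{2}\bigl[\sum_i w_i(\mu + \delta_i)\bigr]^\top T(\theta)\bigl[\sum_j w_j(\mu + \delta_j)\bigr] = \frac{\epsilon^2}{2}\mu^\top T(\theta) \mu$ by the same constraint. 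The $S$-contribution $\frac{\epsilon^2}{2}\sum_i w_i (\mu + \delta_i)^\top S(\theta)(\mu + \delta_i)$ simplifies to $\frac{\epsilon^2}{2}\mu^\top S(\theta) \mu + \frac{\epsilon^2}{2}\sum_i w_i \delta_i^\top S(\theta) \delta_i$ once the mixed cross term $\mu^\top S(\theta)\sum_i w_i \delta_i$ vanishes. Combining with $S(\theta) + T(\theta) = \nabla^2 L(\theta)$ reassembles the first three scalars into $I(\mu;\theta)$, which itself equals $L(\theta + \epsilon\mu) - L(\theta) + \obig(\epsilon^3)$ by the standard second-order Taylor expansion of $L$ at $\theta$; the remaining sum is exactly $\II(\vv\delta, \vv w; \theta)$.

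The main obstacle is not any single step but careful bookkeeping: the constraint $\sum_i w_i \delta_i = 0$ is invoked three times, and one must recognize that the Hessian of $\Lm$ decomposes cleanly into an $(i,j)$-rank-one $T$-block plus a diagonal $S$-block over the off-springs. This decoupling is precisely what separates the average displacement $\mu$ from the splitting directions $\delta_i$, and it is what makes the result quantitatively meaningful for the subsequent steepest-descent analysis.
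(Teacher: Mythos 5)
Your proposal is correct and follows essentially the same route as the paper: Taylor-expand $\Lm$ around $\vv\theta = \theta\one_m$, compute the first- and second-order partials to reveal the block structure $\nabla^2_{\theta_i\theta_j}\Lm = w_i w_j T(\theta) + w_i\,\delta_{ij}\,S(\theta)$, and collapse the resulting sums using $\sum_i w_i = 1$ and $\sum_i w_i\delta_i = 0$. The only cosmetic difference is that the paper's proof reparameterizes with $\theta_i = \theta + \epsilon\delta_i$ and then sets $\bar\delta = \sum_i w_i\delta_i$ (so its $\bar\delta$ is your $\mu$ and its $\delta_i - \bar\delta$ is your $\delta_i$), whereas you work directly in the statement's variables, which is slightly cleaner.
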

%
 %
 %
 %
 Therefore, the optimal average displacement $\mu$ should be  decided by standard parametric steepest (gradient) descent, which yields a typical $\obig(\epsilon)$ decrease of loss at non-stationary points. 
 In comparison, 
 the splitting term $\II(\vv\delta, \vv w; ~ \theta)$ is always  $\obig(\epsilon^2)$, which is much smaller. 
Given that 
introducing new neurons increases model size, 
splitting should not be preferred 
unless it is impossible to achieve  
an $\obig(\epsilon^2)$ gain with pure parametric updates that do not increase the model size. 
%
Therefore, it is  motivated to introduce splitting only at stable local minima, 
when the optimal $\mu$ equals zero and no further improvement is possible with (infinitesimal) regular parametric descent on $L(\theta)$. In this case, we only need to minimize the splitting term $\II(\vv\delta,\vv w; \theta)$ to decide the optimal splitting strategy, which is shown in the following theorem. 

\begin{thm}\label{thm:opt}
%
%
a) If the splitting matrix is positive definite, that is, 
$\lambda_{min}(S(\theta))>0$, we have $\II(\vv \delta, \vv w; \theta)>0$ for any $\vv w > 0$ and $\vv\delta\neq 0$, and hence no infinitesimal splitting can decrease the loss. We call that $\theta$ is {splitting stable} in this case. 

b) If $\lambda_{min}(S(\theta)) <0$, 
an optimal splitting strategy that minimizes $\II(\vv\delta, \vv w; \theta)$ subject to $\norm{\delta_i} \leq 1$ is 
\begin{align*} 
m =2, &&
w_1 = w_2 = 1/2, &&
\text{and}  &&
\delta_1  = v_{min}(S(\theta)), && 
\delta_2  = - v_{min}(S(\theta)), 
\end{align*}
where $v_{min}(S(\theta))$, called the {splitting gradient}, is the eigenvector related to  $\lambda_{min}(S(\theta))$.  
Here we split the neuron into two copies of equal weights, and update each copy with the splitting gradient. 
The change of loss obtained in this case is $\II(\{\delta_1, -\delta_1\}, \{1/2,1/2\}; ~\theta)= -{\epsilon^2}\lambda_{min}(S(\theta))/2 < 0$. 
\end{thm}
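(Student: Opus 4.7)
The plan is to treat Theorem~\ref{thm:opt} as a constrained minimization of the quadratic form
$$
\II(\vv\delta,\vv w;\theta) = \frac{\epsilon^2}{2}\sum_{i=1}^m w_i\,\delta_i^\top S(\theta)\,\delta_i,
$$
supplied by Theorem~\ref{thm:Lthetasp}, subject to the (implicit) constraints $\sum_i w_i = 1$, $w_i > 0$, $\sum_i w_i \delta_i = 0$, together with the explicit bound $\|\delta_i\| \le 1$. Both claims then reduce to the Rayleigh-quotient characterization of the extreme eigenvalues of $S(\theta)$, with essentially no calculus beyond that.

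Part (a) I would handle first, since it is almost immediate. If $\lambda_{min}(S(\theta))>0$, Rayleigh gives $\delta_i^\top S(\theta)\delta_i \geq \lambda_{min}(S(\theta))\|\delta_i\|^2 > 0$ whenever $\delta_i \neq 0$. Since $w_i>0$ and the hypothesis $\vv\delta\neq 0$ forces at least one $\delta_i$ to be nonzero, the weighted sum is strictly positive and $\II(\vv\delta,\vv w;\theta)>0$. This is the splitting-stability statement.

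For part (b) I would first prove a uniform (in $m$) lower bound and then exhibit the proposed symmetric scheme as attaining it. Applying Rayleigh termwise, $\delta_i^\top S(\theta)\delta_i \geq \lambda_{min}(S(\theta))\|\delta_i\|^2$; because $\lambda_{min}(S(\theta))<0$ and $\|\delta_i\|\le 1$ the right-hand side is at least $\lambda_{min}(S(\theta))$, so averaging with weights that sum to $1$ gives $\II \geq \epsilon^2\lambda_{min}(S(\theta))/2$ for every admissible configuration, regardless of $m$. To see the bound is tight, I would just plug in the proposed scheme: $\delta_1 = v_{min}(S(\theta))$ and $\delta_2 = -v_{min}(S(\theta))$ each saturate Rayleigh with value $\lambda_{min}(S(\theta))$, uniform weights $w_1=w_2=1/2$ make the weighted average equal to $\lambda_{min}(S(\theta))$, and the zero-mean constraint $w_1\delta_1+w_2\delta_2=0$ holds by symmetry.

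The main subtlety — and the one step I would single out as needing care — is arguing that the implicit constraint $\sum_i w_i\delta_i=0$ is compatible with saturating Rayleigh at every $i$ simultaneously. Equality in Rayleigh forces each $\delta_i \in \{+v_{min},-v_{min}\}$, and then the zero-mean constraint forces the total weight on $+v_{min}$ to match that on $-v_{min}$; the $m=2$ symmetric split is the minimal such configuration. Since the termwise Rayleigh bound holds uniformly in $m$, no larger $m$ can improve on this optimum, which also explains why splitting into more than two off-springs is unnecessary and why the splitting descent direction aligns with the most negative eigenvector of $S(\theta)$.
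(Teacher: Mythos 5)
Your proof is correct and follows essentially the same route as the paper's: apply the Rayleigh quotient bound termwise to get $\delta_i^\top S(\theta)\delta_i \ge \lambda_{min}(S(\theta))\|\delta_i\|^2$, use $\lambda_{min}<0$ and $\|\delta_i\|\le 1$ to obtain the uniform lower bound $\II \ge \tfrac{\epsilon^2}{2}\lambda_{min}(S(\theta))$, and then exhibit the symmetric $m=2$ split as attaining it while respecting $\sum_i w_i\delta_i=0$. The only notable difference is that you are slightly more careful than the paper's proof (which tacitly takes $\|\delta_i\|=1$) about the inequality constraint and about verifying compatibility with the zero-mean condition, but this is an elaboration, not a different argument.
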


\paragraph{Remark} 
The splitting stability ($S(\theta)\succ 0$) does not necessarily ensure 
the standard parametric stability of $L(\theta)$ (i.e., $\nabla^2 L(\theta)= S(\theta) + T(\theta) \succ 0$), 
except when $\Phi(\cdot)$ is convex which ensures $T(\theta) \succeq 0$ (see Definition~\ref{def:splitting}). 
If both $S(\theta) \succ0$ and $\nabla^2 L(\theta) \succ 0$ hold, the loss can not be improved by any 
local update or splitting, no matter how many off-springs are allowed. 
Since stochastic gradient descent guarantees to escape unstable stationary points \citep{lee2016gradient, jin2017escape},  
we only need to calculate $S(\theta)$ to decide the splitting stability in practice.

\subsection{Splitting Deep Neural Networks} 
\label{sec:general}
In practice, we need to split multiple neurons simultaneously, 
which may be of different types, or  locate in different layers of a deep neural network. 
The key questions 
are 
if the optimal splitting strategies of different neurons influence each other in some way, 
and how to compare the gain of splitting different neurons and select the best subset of neurons to split under a budget constraint.  

It turns out the answers are simple.  
We show that 
the change of loss caused by splitting a set of neurons 
is simply the sum of the splitting terms $\II(\vv\delta, \vv w; \theta)$ of the individual neurons. 
Therefore, we can calculate the splitting matrix of each neuron independently without considering the other neurons,
and compare the ``splitting desirability'' of the different neurons by their minimum eigenvalues (splitting indexes).  
This motivates our main algorithm ({Algorithm \ref{alg:main}}), 
in which 
 we progressively split the neurons with the most negative splitting indexes following their own splitting gradients.  
 Since the neurons can be in different layers and of different types, this provides an adaptive way to grow neural network structures to fit best with data.  

\begin{algorithm*}[t] 
\caption{Splitting Steepest Descent for Optimizing Neural Architectures}
\begin{algorithmic} 
    \STATE \textbf{Initialize} a neural network with a set of neurons $\theta^{[1:n]}=\{\theta^{[\ell]}\}_{\ell=1}^n$ that can be split, whose loss satisfies \eqref{equ:lell}. 
    Decide a maximum number $m_{*}$ of neurons to split at each iteration, and a threshold $\lambda_* \leq 0$ of the splitting index.  A stepsize $\epsilon$. 
    \STATE \textbf{1. Update the parameters} using standard optimizers (e.g., stochastic gradient descent) until no further improvement can be made by only updating parameters. 
    \vspace{.3\baselineskip} 
    \STATE \textbf{2. Calculate the splitting matrices} $\{S^{[\ell]}\}$ of the neurons following \eqref{equ:sell}, as well as their minimum eigenvalues $\{\lambda^{[\ell]}_{min}\}$ and the associated eigenvectors $\{v^{[\ell]}_{min}\}$.  
    \vspace{.3\baselineskip} 
    \STATE 
    \textbf{3. Select the set of neurons to split} by 
    picking the top $m_*$ neurons with the smallest eigenvalues $\{\lambda^{[\ell]}_{min}\}$ and satisfies $\lambda_{min}^{[\ell]} \leq \lambda_*$.   
   \vspace{.3\baselineskip}     
    \STATE \textbf{4. Split each of the selected neurons} into two off-springs with equal weights, and update the neuron network by replacing each selected neuron  $\sigma_\ell(\theta^{[\ell]}, ~\cdot)$ with 
    \begin{align*}
    \frac{1}{2}(\sigma_\ell(\theta_1^{[\ell]}, ~\cdot) + \sigma_\ell(\theta_2^{[\ell]}, ~\cdot)),  
    &&
    \text{where} 
    &&
    \theta_1^{[\ell]} \gets \theta^{[\ell]} + \epsilon v_{min}^{[\ell]}, 
    && 
    \theta_2^{[\ell]} \gets \theta^{[\ell]} - \epsilon v_{min}^{[\ell]}. 
    \end{align*}
    Update the list of neurons. Go back to Step 1 or stop when a stopping criterion is met. 
\end{algorithmic}
\label{alg:main}  
\end{algorithm*}

To set up the notation, 
let $\theta^{[1:n]}=\{\theta^{[1]},\ldots \theta^{[n]}\}$ 
be the parameters of a set of  neurons (or any duplicable sub-structures) in a large neural network, 
where $\theta^{[\ell]}$ is the parameter of the $\ell$-th neuron. 
Assume we split $\theta^{[\ell]}$ into $m_\ell$ copies 
$\vv\theta^{[\ell]} := \{\theta_i^{[\ell]}\}_{i=1}^{m_\ell}$, with weights $\vv w^{[\ell]} = \{w_i^{[\ell]}\}_{i=1}^{m_\ell}$ satisfying $\sum_{i=1}^{m_\ell} w_i^{[\ell]} = 1$ and $w_i^{[\ell]}\geq0$, $\forall i=1,\ldots, m_\ell$. 
Denote by $L(\theta^{[1:n]})$ and $\L(\vv\theta^{[1:n]}, ~ \vv w^{[1:n]})$ the loss function of the original and augmented networks, respectively. 
It is hard to specify the actual expression of the loss functions in general cases,  but it is sufficient to know that $L(\theta^{[1:n]})$  depends on  each $\theta^{[\ell]}$ only through the output of its related neuron, 
\begin{align} \label{equ:lell}
L(\theta^{[1:n]})   
= \E_{x\sim \mathcal D}\left [\Phi_\ell 
\left (
\sigma_\ell \left (\theta^\supell,~~ h^\supell \right ); ~~  \theta^{[\neg \ell]} \right )\right ], ~~~~~~~~~~~h^\supell = g_\ell(x; ~~ \theta^{[\neg \ell]}), 
\end{align}
where $\sigma_\ell$ denotes the activation function of neuron $\ell$, 
and $g_\ell$ and $\Phi_\ell$ denote the parts of the loss that connect to the input and output of neuron $\ell$, respectively, 
both of which depend on the other parameters $\theta^{[\neg \ell]}$  in some complex way. 
Similarly, the augmented loss $\L(\vv\theta^{[1:n]}, ~ \vv w^{[1:n]})$ satisfies 
\begin{align} \label{equ:LLLLL}
\L(\vv\theta^{[1:n]}, \vv w^{[1:n]}) 
= \E_{x\sim \mathcal D}\left [\vv\Phi_\ell 
\left ( \sum_{i=1}^{m_\ell} w_i 
\sigma_\ell \left (\theta_i^\supell,~~  \vv h^\supell \right ); ~~ \vv \theta^{[\neg \ell]}, \vv w^{[\neg \ell]} \right )\right ], 
\end{align}
where $\vv h^\supell = \vv{g}_\ell(x; ~~\vv \theta^{[\neg \ell]}, \vv w^{[\neg \ell]})$, and 
$\vv g_\ell$, $\vv \Phi_\ell$ are the augmented variants of $g_\ell$, $\Phi_\ell$, respectively. 

Interestingly, although each equation in \eqref{equ:lell} and \eqref{equ:LLLLL} only provides a partial specification of the loss function of deep neural nets, 
they together are sufficient 
to 
establish the following key extension of  
Theorem~\ref{thm:Lthetasp} to the case of multiple neurons. 
\begin{thm}\label{thm:general} 
Under the setting above, 
assume $\theta_i^\supell = \theta^\supell + \epsilon(\mu^\supell~+~\delta_i^\supell)$ for $\forall \ell\in[1\!:\!n]$,  
where $\mu^{[\ell]}$ denotes the average displacement vector on $\theta^{[\ell]}$, 
and $\delta_i^\supell$ is the $i$-th splitting vector of $\theta^{[\ell]}$, with  $\sum_{i=1}^{m_\ell} w_i \delta_i^\supell = 0$. {Assume $\L(\vv\theta^{[1:n]}, \vv w^{[1:n]})$ has bounded third order derivatives w.r.t. $\vv\theta^{[1:n]}$.}    
We have 
$$\L(\vv\theta^{[1:n]}, \vv w^{[1:n]})  = L(\theta^{[1:n]} + \epsilon \mu^{[1:n]}) + \sum_{\ell=1}^n  
\underbrace{\frac{\epsilon^2}{2} \sum_{i=1}^{m_\ell} w_i^\supell  {\delta_i^\supell}^\top  S^\supell(\theta^{[1:n]}) \delta_i^\supell}_{ 
\textcolor{blue}{\text{\small $\II_\ell(\vv\delta^\supell, \vv w^\supell; ~ \theta^{[1:n]})$ }}}  +\obig(\epsilon^3), $$
where the effect of average displacement is again equivalent to that of the corresponding parametric update $\theta^{[1:n]}\gets \theta^{[1:n]} + \epsilon \mu^{[1:n]}$;  
the splitting effect equals the sum of the individual splitting terms 
$\II_\ell(\vv\delta^\supell, \vv w^\supell; ~ \theta^{[1:n]})$, which depends on the splitting matrix  
 $S^\supell(\theta^{[1:n]})$ 
of neuron $\ell$, 
\begin{align} \label{equ:sell}
S^\supell(\theta^{[1:n]}) = 
\E_{x\sim \mathcal D}\left [{\nabla_{\sigma_\ell}}\Phi_\ell
\left (\sigma_\ell\left (\theta^\supell,~~ h^\supell \right ); ~~
\theta^{[\neg \ell]} 
\right ) \nabla^2_{\theta\theta}  \sigma_\ell\left (\theta^\supell,~~ 
h^\supell
\right ) 
\right ]. 
\end{align}
\end{thm}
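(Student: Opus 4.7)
The plan is to reduce the theorem to an extension of the single-neuron argument behind Theorem~\ref{thm:Lthetasp}, by exploiting two structural features of \eqref{equ:LLLLL}: each neuron $\ell$ enters $\L$ only through the aggregate output $A^\supell := \sum_{i=1}^{m_\ell} w_i^\supell \sigma_\ell(\theta_i^\supell, \vv h^\supell)$, and the weighted zero-mean constraint $\sum_i w_i^\supell \delta_i^\supell = 0$ annihilates every linear-in-$\delta$ contribution produced by differentiating $\L$ with respect to the $\theta_i^\supell$. Concretely, I would let $F(\epsilon) := \L(\{\theta^\supell + \epsilon(\mu^\supell + \delta_i^\supell)\}_{\ell, i},\, \vv w^{[1:n]})$, which has a bounded third derivative by hypothesis, and write $F(\epsilon) = F(0) + \epsilon F'(0) + \tfrac{\epsilon^2}{2}F''(0) + \obig(\epsilon^3)$. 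Immediately $F(0) = L(\theta^{[1:n]})$, since collapsing every off-spring to the original parameter restores the unperturbed network.

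For $F'(0)$, the chain rule through the aggregate yields $\partial \L / \partial \theta_i^\supell = w_i^\supell \, \nabla_{\sigma_\ell}\Phi_\ell \, \nabla_\theta \sigma_\ell$ at $\epsilon = 0$, and then $\sum_i w_i^\supell (\mu^\supell + \delta_i^\supell) = \mu^\supell$ reduces the first-order contribution to $\sum_\ell (\mu^\supell)^\top \nabla_{\theta^\supell} L$, matching $\partial_\epsilon L(\theta^{[1:n]} + \epsilon\mu^{[1:n]})$ at $\epsilon = 0$. For $F''(0)$, the aggregate structure produces two kinds of second-derivative terms: (i) cross-derivative terms proportional to $w_i^\supell w_j^{[\ell']} \nabla_\theta \sigma_\ell \otimes \nabla_\theta \sigma_{\ell'}$ (including $\ell = \ell'$ with $i \neq j$, and terms arising from the dependence of $\vv h^\supell$ on upstream neurons), in which each $\delta$-factor carries a $w$-weighted sum that vanishes by zero-mean, leaving only $\mu\otimes\mu$ pieces; and (ii) a diagonal term $w_i^\supell \nabla^2_{\theta\theta}\sigma_\ell$ arising from differentiating the same $\theta_i^\supell$ twice inside one aggregate, in which the two $(\mu + \delta_i)$ factors share the index $i$ and yield $(\mu^\supell)^\top \nabla^2\sigma_\ell\, \mu^\supell + \sum_i w_i^\supell (\delta_i^\supell)^\top \nabla^2 \sigma_\ell\, \delta_i^\supell$. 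Summing (i) with the $\mu\mu$ piece of (ii) rebuilds exactly $\partial_\epsilon^2 L(\theta^{[1:n]} + \epsilon\mu^{[1:n]})|_{\epsilon = 0}$, while the remaining $\delta\delta$ piece, prefactored by $\nabla_{\sigma_\ell}\Phi_\ell$ and taken in expectation over $x\sim\mathcal D$, matches the definition of $S^\supell$ in \eqref{equ:sell} and produces $\sum_\ell \II_\ell(\vv\delta^\supell, \vv w^\supell; \theta^{[1:n]})$.

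The main obstacle is the cross-neuron bookkeeping. A priori, the dependence of $\vv h^\supell$ on the splittings of upstream neurons could generate cross terms $\delta^\supell \otimes \delta^{[\ell']}$ at order $\epsilon^2$, which would spoil the block-diagonal structure of the splitting contribution. The clean resolution is exactly the mechanism above: because every $\theta_i^\supell$ enters $\L$ only through $A^\supell$, any chain-rule derivative with respect to a $\theta_i^\supell$ carries an explicit $w_i^\supell$ prefactor, and projecting $(\mu^\supell + \delta_i^\supell)$ against $w_i^\supell$ erases the $\delta$ direction whenever the two differentiations are at distinct indices. Only the case of differentiating the same $\theta_i^\supell$ twice — which localizes to a single neuron — survives, giving the block-diagonal splitting matrices. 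Together with the uniform $\obig(\epsilon^3)$ Taylor remainder guaranteed by the third-derivative bound, this yields the claimed decomposition.
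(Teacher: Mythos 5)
Your proposal is correct and uses essentially the same mechanism as the paper: the $w_i^{[\ell]} w_{i'}^{[\ell']}$-factorization of every second derivative $\nabla^2_{\theta_i^{[\ell]}\theta_{i'}^{[\ell']}}\L$ at $\epsilon=0$ (except the lone $w_i^{[\ell]}\nabla^2_{\theta\theta}\sigma_\ell$ diagonal piece), combined with the zero-mean constraint $\sum_i w_i^{[\ell]}\delta_i^{[\ell]}=0$ to kill all $\delta$-contributions except those from differentiating the same off-spring twice. The only difference is organizational: the paper routes the argument through a separate decomposition lemma (showing the correction term $\L(\vv\theta^{[1:n]},\vv w^{[1:n]}) - L - \sum_\ell(\L_\ell - L)$ has vanishing first and second $\epsilon$-derivatives), then reuses the single-neuron Theorem~\ref{thm:Lthetasp} per block, and finally handles $\mu^{[1:n]}\neq 0$ by a substitution $\tilde\theta = \theta+\epsilon\mu$; you instead carry out one direct Taylor expansion of $F(\epsilon)=\L$ and classify all the second-derivative pieces at once. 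Your one-pass bookkeeping is slightly more economical, while the paper's factored version isolates the block-diagonality of the splitting effect as a reusable lemma. One small imprecision in your write-up: your class (i) should be understood as \emph{all} terms carrying the $w_i^{[\ell]} w_{i'}^{[\ell']}$ prefactor (including the $\ell=\ell'$, $i=i'$ case, which also contributes a $w_i^2\Phi''$-type piece alongside the $w_i\nabla^2\sigma_\ell$ piece in class (ii)), so that summing (i) over all index pairs is exactly what reconstructs $\mu^\top\nabla^2 L\,\mu$; as stated, your wording ``including $\ell=\ell'$ with $i\neq j$'' might be read as excluding the diagonal contribution to (i). The computation still closes correctly once this is accounted for.
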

The important implication of 
Theorem~\ref{thm:general} 
is that there is \emph{no crossing term} in the splitting matrix, 
unlike the standard Hessian matrix. 
Therefore, the splitting effect of an individual neuron 
only depends on its own splitting matrix and can be evaluated individually; 
the splitting effects of different neurons can be compared using their splitting indexes,  
allowing us to decide the best subset of neurons to split when a maximum number constraint is imposed. 
As shown in Algorithm~\ref{alg:main}, 
 we decide a maximum number $m_{*}$ of neurons to split at each iteration, 
 and a threshold $\lambda_* \leq 0$ of splitting index, 
 and split the neurons whose splitting indexes are ranked in top $m_*$ and smaller than $\lambda_*$. 

\paragraph{Computational Efficiency} 
The computational cost of exactly evaluating all the splitting indexes and gradients  
 on a data instance is $\obig(n d^3)$, where $n$ is the number of neurons and $d$ is the number of 
the parameters of each neuron. 
Note that this is much better than evaluating  
the Hessian matrix, which costs $\obig(N^3)$, where $N$ is the total number of parameters (e.g., $N \geq nd$). 
In practice, $d$ is not excessively large or can be controlled by identifying a subset of important neurons to split. 
Further computational speedup can be 
obtained by using efficient gradient-based large scale eigen-computation methods, which we investigate in future work.  
\myempty{[
For very large scale settings, 
we can approximate the eigenvalues and eigenvectors efficiently 
by gradient descent on Rayleigh quotient, which only requires to evaluate the matrix-vector product $S(\theta)v$ via $\nabla_y F_2(0)$ without expanding the whole splitting matrix, yielding a low computational cost similar to that of the standard parametric gradient descent (See Appendix~{\ref{sec:fast_grad_approx}} for more discussion). ]
}

\subsection{Splitting as $\Linfty$-Wasserstein Steepest Descent}  
\label{sec:wasser} 
We present a functional aspect of our approach,
in which we frame the co-optimization of the neural parameters and structures into a functional optimization in the space of distributions of the neuron weights, and
show that our splitting strategy can be viewed as
a second-order descent for escaping saddle points  
in the $\Linfty$-Wasserstein space of distributions, 
while the standard parametric gradient descent corresponds to a first-order descent in the same space. 

We illustrate our theory using the single neuron case in Section~\ref{sec:single}. 
Consider the augmented loss $\L(\vv\theta, \vv w)$ in \eqref{equ:LLtheta}.  
Because the off-springs of the neuron are exchangeable,  we can equivalently represent $\L(\vv\theta,\vv w)$ as a functional of the empirical measure of the off-springs, 
\begin{align}\label{equ:Lrho}
\L[\rho] = \E_{x\sim \mathcal D}\left [ \Phi\left (  \E_{\theta\sim\rho}[\sigma(\theta, x) ] \right ) \right], 
&&
\rho =\sum_{i=1}^{m}w_i \delta_{\theta_i}, 
\end{align} 
where $\delta_{\theta_i}$ denotes the delta measure on $\theta_i$ and $\L[\rho]$  is the functional representation of $\L(\vv\theta, \vv w)$. 
The idea is to optimize $\L[\rho]$ 
in the space of probability distributions (or measures) using a functional steepest descent. 
To do so, a notion of distance on the space of distributions  need to be decided. We consider the $p$-Wasserstein metric, 
\begin{align}\label{equ:dinfty}
\D_p(\rho, \rho') =   \inf_{\gamma\in\Pi(\rho,\rho')} \left(\E_{(\theta,\theta')\sim \gamma}[\norm{\theta-\theta'}^p] \right)^{1/p}, &&\text{for $p > 0$},
\end{align}
where $\Pi(\rho,\rho')$ denotes the set of probability measures whose first and second marginals are $\rho$ and $\rho'$, respectively, 
 and $\gamma$ can be viewed as describing a transport plan from $\rho$ to $\rho'$. 
We obtain the $\infty$-Wasserstein metric $\D_\infty(\rho, \rho')$ in the limit when $p\to+\infty$, in which case the $p$-norm reduces to an esssup norm, that is, 
$$\D_\infty(\rho, \rho') =   \inf_{\gamma\in\Pi(\rho,\rho')} \esssup_{(\theta,\theta')
\sim \gamma}[\norm{\theta-\theta'}],$$
where the $\esssup$ notation denotes the smallest number $c$ such that the set $\{(\theta,\theta')\colon~ \norm{\theta -  \theta'}\btmptwo>c\}$ has zero probability under $\gamma$.  
See more discussion in \citet{villani2008optimal} and Appendix~\ref{sec:wassappendix}. 

The $\Linfty$-Wasserstein metric yields a natural connection to node splitting.  
For each $\theta$,  
the conditional distribution $\gamma(\theta'~|~\theta)$  represents the distribution of points $\theta'$  transported from $\theta$, which can be viewed as the off-springs of $\theta$ in the context of node splitting. 
If $\D_\infty(
\rho,\rho')\leq \epsilon$, 
it means that $\rho'$ can be obtained from splitting $\theta\sim \rho$ 
such that all the off-springs are $\epsilon$-close, i.e., $\norm{\theta' -\theta}\btmptwo \leq \epsilon$.  
This is consistent with the augmented neighborhood introduced in Section~\ref{sec:single}, 
except that $\gamma$ here can be an absolutely continuous distribution, representing a continuously infinite number of off-springs; 
but this yields no practical difference because any distribution $\gamma$ can be approximated arbitrarily close using a countable number of particles.
Note that $p$-Wasserstein metrics with finite $p$ are not suitable for our purpose because $\D_p(\rho,\rho')\leq \epsilon$ with $p<\infty$  
does not ensure $\norm{\theta'-\theta}\leq \epsilon$ for all $\theta\sim \rho$ and $\theta'\sim \rho'$. 

Similar to the steepest descent on the Euclidean space, 
the $\Linfty$-Wasserstein steepest descent on $\L[\rho]$ should iteratively 
find new points that maximize the decrease of loss in an $\epsilon$-ball of the current points. 
Define 
\begin{align*} 
\rho^{*}=\argmin_{\rho'} \{\L[\rho']-\L[\rho] \colon~~~ \D_\infty(\rho,\rho')\leq \epsilon  \},
&& 
\Delta^*(\rho, \epsilon) =\L[\rho^{*}]-\L[\rho]. 
\end{align*}
We are ready to show the connection of Algorithm~\ref{alg:main} to the $\Linfty$-Wasserstein steepest descent. 
\begin{thm}\label{thm:wassdescent}
Consider the $\L(\vv\theta,\vv w)$ and $\L[\rho]$ in \eqref{equ:LLtheta} and \eqref{equ:Lrho}, connected with $\rho = \sum_i w_i \delta_{\theta_i}$. 
Define $G_\rho(\theta) = \E_{x\sim \mathcal D} \left [ \Phi'(f_\rho(x)) \nabla_{\theta } \sigma(\theta,x) \right ]$ and 
$S_\rho(\theta) = \E_{x\sim \mathcal D} \left [ \Phi'(f_\rho(x)) \nabla_{\theta\theta}^2\sigma(\theta,x) \right ]$ 
with $f_\rho(x)=\E_{\theta\sim\rho}[\sigma(\theta,x)]$,  
which are related to the gradient and splitting matrices of $\L(\vv\theta,\vv w)$, respectively. 
Assume $\L(\vv\theta,\vv w)$ has bounded third order derivatives w.r.t. $\vv\theta$. 

a) If $\L(\vv \theta,\vv w)$ is on a non-stationary point w.r.t. $\vv \theta$, 
then the steepest descent of $\L[\rho]$ is achieved by moving
all the particles of $\rho$ with gradient descent on $\L(\vv\theta,\vv w)$, that is, 
$$
\L[(I-\epsilon G_\rho) \sharp \rho] - \L[\rho] = \Delta^*(\rho, \epsilon) + \obig(\epsilon^2) 
= - \epsilon\E_{\theta\sim\rho}[\norm{G_\rho(\theta)}] + \obig(\epsilon^2), 
$$
where $(I-\epsilon G_\rho) \sharp \rho$ denotes 
the distribution of $\theta' = \theta  - \epsilon  G_\rho(\theta)/\norm{G_\rho(\theta)}$ when $\theta\sim \rho$. 

b) 
If  $\L(\vv \theta,\vv w)$ reaches a stable local optima w.r.t. $\vv\theta$, 
the steepest descent on $\L[\rho]$ is splitting each neuron with $\lambda_{min}(S_\rho(\theta)) <0$ into two copies of equal weights following their minimum eigenvectors, while keeping the remaining neurons to be unchanged.    
Precisely, denote by $(I\pm \epsilon v_{min}(S_\rho(\theta))_+) \sharp \rho$ the distribution obtained in this way, we have 
$$
\L[(I\pm \epsilon v_{min}(S_\rho(\theta))_+) \sharp \rho] - \L[\rho] = \Delta^*(\rho, \epsilon) + \obig(\epsilon^3) , 
$$
where we have $\Delta^*(\rho, \epsilon) = \epsilon^2  \E_{\theta\sim\rho}[\min(\lambda_{min}(S_\rho(\theta)), 0)]/2$.  
\end{thm}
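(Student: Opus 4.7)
The plan is to parametrize the admissible set $\{\rho' : \D_\infty(\rho,\rho')\leq\epsilon\}$ via transport plans and then Taylor-expand $\L[\rho']$ around $\L[\rho]$, optimizing order by order in $\epsilon$. By the definition of the $\infty$-Wasserstein distance, $\D_\infty(\rho,\rho')\leq\epsilon$ is equivalent to the existence of a coupling $\gamma\in\Pi(\rho,\rho')$ with $\norm{\theta-\theta'}\btmptwo\leq\epsilon$ $\gamma$-almost surely. Writing $\rho=\sum_i w_i\delta_{\theta_i}$, one disintegrates $\gamma$ into conditionals $\gamma_i := \gamma(\cdot\,|\,\theta_i)$ supported on the closed $\epsilon$-ball around $\theta_i$, so that $\rho' = \sum_i w_i \gamma_i$. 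Let $\bar\mu_i := \E_{\gamma_i}[\theta'-\theta_i]$ and $V_i := \mathrm{Cov}_{\gamma_i}(\theta')$. The only constraints they must satisfy are $V_i \succeq 0$ and $\norm{\bar\mu_i}^2 + \trace(V_i) \leq \epsilon^2$ (from $\E_{\gamma_i}[\norm{\theta'-\theta_i}^2]\leq\epsilon^2$), and any such pair is realized by a $\gamma_i$ supported on at most two points. Taylor-expanding $\E_{\gamma_i}[\sigma(\theta',x)]$ to second order in $\theta'-\theta_i$, then expanding $\Phi$ around $f_\rho(x)$, taking $\E_{x\sim\mathcal{D}}$, and recognizing the definitions of $G_\rho$ and $S_\rho$ (the same atom-wise computation as in Theorem~\ref{thm:Lthetasp}), I obtain the key decomposition
\begin{equation*}
\L[\rho']-\L[\rho] \;=\; \bigl[\L(\vv\theta+\bar\mu,\vv w)-\L(\vv\theta,\vv w)\bigr] \;+\; \tfrac{1}{2}\sum_i w_i\,\trace\!\bigl(V_i\,S_\rho(\theta_i)\bigr) \;+\; \obig(\epsilon^3),
\end{equation*}
where the bracketed piece is the standard parametric change in $\L$ induced by the means $\bar\mu_i$ (and carries all the $\Phi''$ cross-terms between different atoms) while the second piece is an additive splitting contribution driven only by the covariances $V_i$ and by the splitting matrices $S_\rho(\theta_i)$.

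\textbf{Part (a).} When $\L(\vv\theta,\vv w)$ is non-stationary in $\vv\theta$ some atom has $G_\rho(\theta_i)\neq 0$, so the bracketed term above expands as $\epsilon\sum_i w_i G_\rho(\theta_i)^\top \bar\mu_i + \obig(\epsilon^2)$ and dominates the $\obig(\epsilon^2)$ splitting term. Minimizing the linear functional $\sum_i w_i G_\rho(\theta_i)^\top \bar\mu_i$ subject to $\norm{\bar\mu_i}\btmptwo\leq\epsilon$ decouples across atoms and is solved (Cauchy–Schwarz) by $\bar\mu_i^* = -\epsilon G_\rho(\theta_i)/\norm{G_\rho(\theta_i)}$ with $V_i^* = 0$, i.e.\ a deterministic transport. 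This yields $\Delta^*(\rho,\epsilon) = -\epsilon\,\E_{\theta\sim\rho}[\norm{G_\rho(\theta)}] + \obig(\epsilon^2)$ and identifies the minimizer with the pushforward $(I-\epsilon G_\rho/\norm{G_\rho})\sharp\rho$.

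\textbf{Part (b) and main obstacle.} A stable local optimum of $\L(\vv\theta,\vv w)$ in $\vv\theta$ forces $G_\rho(\theta_i)=0$ on every atom and makes the parametric Hessian PSD, so the bracketed term in the decomposition is non-negative for every $\bar\mu$ and is minimized at $\bar\mu_i=0$. What remains is the atom-decoupled SDP
\begin{equation*}
\min_{V_i\succeq 0,\;\trace(V_i)\leq\epsilon^2}\; \tfrac{1}{2}\sum_i w_i\,\trace\!\bigl(V_i\,S_\rho(\theta_i)\bigr),
\end{equation*}
whose per-atom optimum equals $\epsilon^2\min(\lambda_{\min}(S_\rho(\theta_i)),0)$ and is attained at the rank-one $V_i^* = \epsilon^2 v_{\min}(S_\rho(\theta_i))v_{\min}(S_\rho(\theta_i))^\top$ when $\lambda_{\min}(S_\rho(\theta_i))<0$, else at $V_i^* = 0$ (using $\trace(VS)\geq\lambda_{\min}(S)\trace(V)$ for $V\succeq 0$, saturated on the eigenvector of $\lambda_{\min}$). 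This rank-one covariance is realized exactly by the equal-weight two-point distribution $\theta_i\mapsto\{\theta_i+\epsilon v_{\min},\,\theta_i-\epsilon v_{\min}\}$, giving the claimed $\Delta^*(\rho,\epsilon) = \tfrac{\epsilon^2}{2}\E_{\theta\sim\rho}[\min(\lambda_{\min}(S_\rho(\theta)),0)] + \obig(\epsilon^3)$. The main obstacle I anticipate is justifying the clean parametric/splitting separation in the initial decomposition: since $\L[\rho]$ does have $\Phi''$ cross-terms coupling different atoms, one must carefully verify that at the $\obig(\epsilon^2)$ order those cross-terms involve only the means $\bar\mu_i$ and not the covariances $V_i$, as $V_i$ enters $\Delta f(x) := f_{\rho'}(x)-f_\rho(x)$ only at order $\epsilon^2$ and thus contributes to $(\Delta f)^2$ only at order $\epsilon^4$. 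It is precisely this observation that lets the splitting SDP in part (b) decouple completely across atoms.
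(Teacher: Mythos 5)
Your proposal follows essentially the same route as the paper's proof: a second-order expansion of $\L[\rho']-\L[\rho]$ that separates a displacement term (carrying the means $\bar\mu_i$) from an additive splitting term (carrying the conditional covariances $V_i$), then Cauchy--Schwarz for part (a) and a per-atom trace minimization for part (b). The paper proves the same decomposition as Theorems~\ref{thm:wasseroneorder} and~\ref{thm:wasser} in the appendix, and the closing remark you make — that the covariances enter $f_{\rho'}-f_\rho$ only at $\obig(\epsilon^2)$ and so cannot survive in $\Phi''$-cross-terms below $\obig(\epsilon^4)$ — is exactly the observation that makes that decomposition hold.

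One place where your argument is actually \emph{tighter} than what the paper writes out, and worth keeping: for the covariance optimization in part (b) you derive and use the constraint $\norm{\bar\mu_i}^2 + \trace(V_i) \leq \epsilon^2$ (from $\E_{\gamma_i}[\norm{\theta'-\theta_i}^2]\leq\epsilon^2$), whereas the paper's supporting lemma only records $\lambda_{\max}(\Sigma_{\rho,\rho'}(\theta))\leq\epsilon^2$. The trace bound is what one really needs: minimizing $\trace(V S)$ over $V\succeq 0$ with only $\lambda_{\max}(V)\leq\epsilon^2$ would permit spreading $V$ across several negative eigendirections of $S$ and beating $\epsilon^2\lambda_{\min}(S)$, so the rank-one conclusion would not follow from the operator-norm bound alone. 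Your $\trace(V_i)\leq\epsilon^2$ pins the minimum to $\epsilon^2\min(\lambda_{\min}(S_\rho(\theta_i)),0)$, saturated precisely by the two-point split $\theta_i\pm\epsilon v_{\min}$, which is the statement of the theorem. Two small inaccuracies in your write-up that do not affect the result but should be fixed: (i) the stray factor of $\epsilon$ in the first-order expansion $\epsilon\sum_i w_i G_\rho(\theta_i)^\top\bar\mu_i$ should not be there, since $\bar\mu_i$ already carries the $\epsilon$ scaling; (ii) it is not true that \emph{every} $(\bar\mu_i,V_i)$ satisfying $\norm{\bar\mu_i}^2+\trace(V_i)\leq\epsilon^2$ is realized by a two-point measure supported in the $\epsilon$-ball (a full-rank $V_i$ cannot be), but this does not matter because the optimizers in both parts are boundary points ($V_i=0$ with $\norm{\bar\mu_i}=\epsilon$, or $\bar\mu_i=0$ with rank-one $V_i$) that are realizable by a one- or two-point transport.
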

\paragraph{Remark}  
There has been a line of theoretical works on analyzing gradient-based learning of neural networks via $2$-Wasserstein gradient flow by considering the \emph{mean field limit}  when the number of neurons $m$ goes to infinite  $(m\to\infty)$ \citep[e.g.,][]{mei2018mean, chizat2018global}. 
These analysis focus on the first-order descent on the $2$-Wasserstein space as a theoretical tool for understanding the behavior of gradient descent on overparameterized neural networks. 
Our framework is significant different, since we mainly consider the second-order descent on the $\infty$-Wasserstein space, and  the case of finite number of neurons $m$ in order to derive practical algorithms.  
\section{Experiments}
\label{sec:exp}
We test our method on both toy and realistic tasks, 
including learning interpretable neural networks, 
architecture search for  
image classification and energy-efficient keyword spotting. 
Due to limited space, 
many of the detailed settings are shown in Appendix,
in which we also include additional results on 
distribution approximation (Appendix~\ref{sec:toy_mmd}).

\begin{figure*}[ht]
\centering
\setlength{\tabcolsep}{0.3pt}
\hspace{-0.8em}
\begin{tabular}{cccc}
\includegraphics[height=0.16\textwidth]{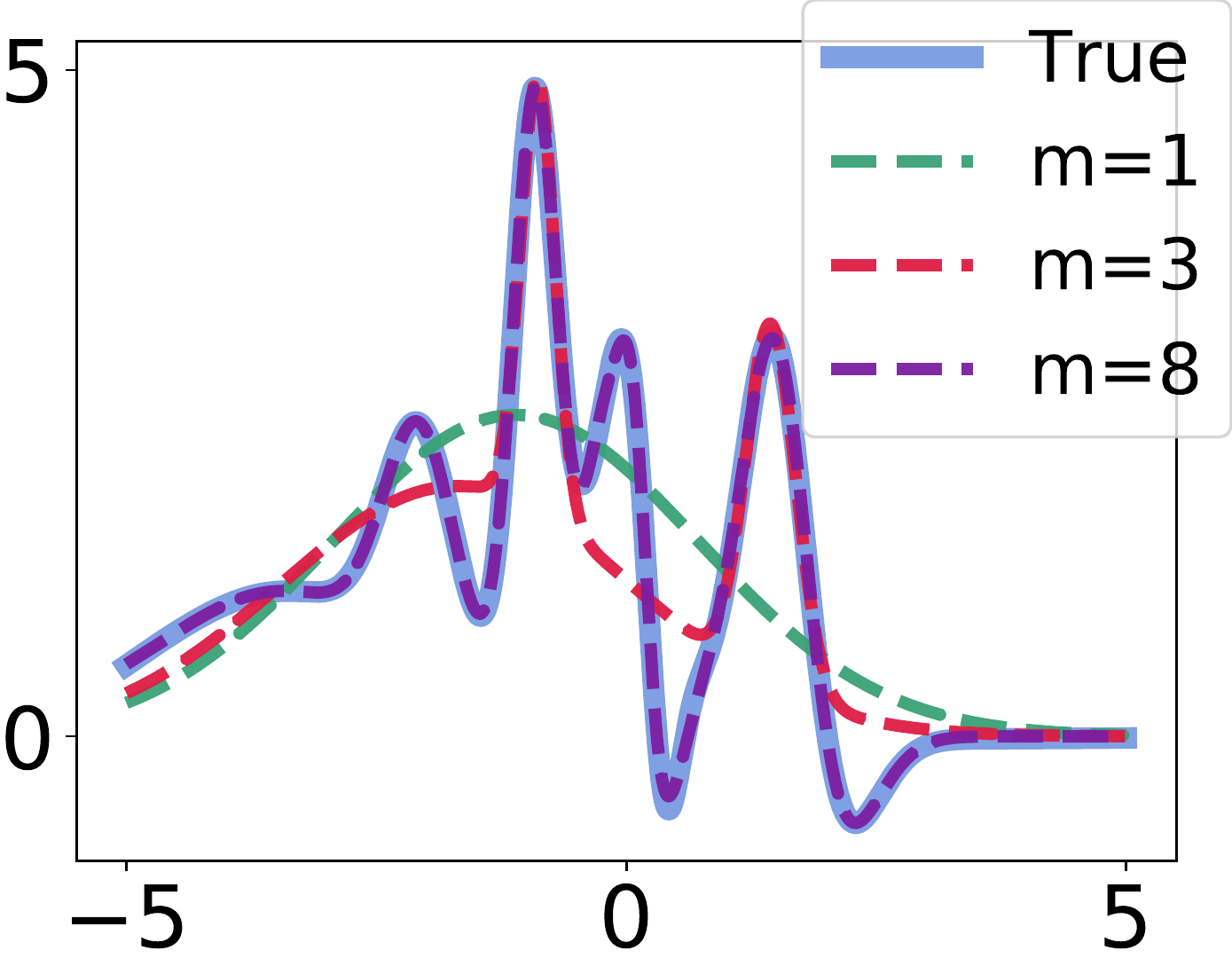} &
\raisebox{1.5em}{\rotatebox{90}{\scriptsize Eigenvalues}}
\includegraphics[height =0.17\textwidth]{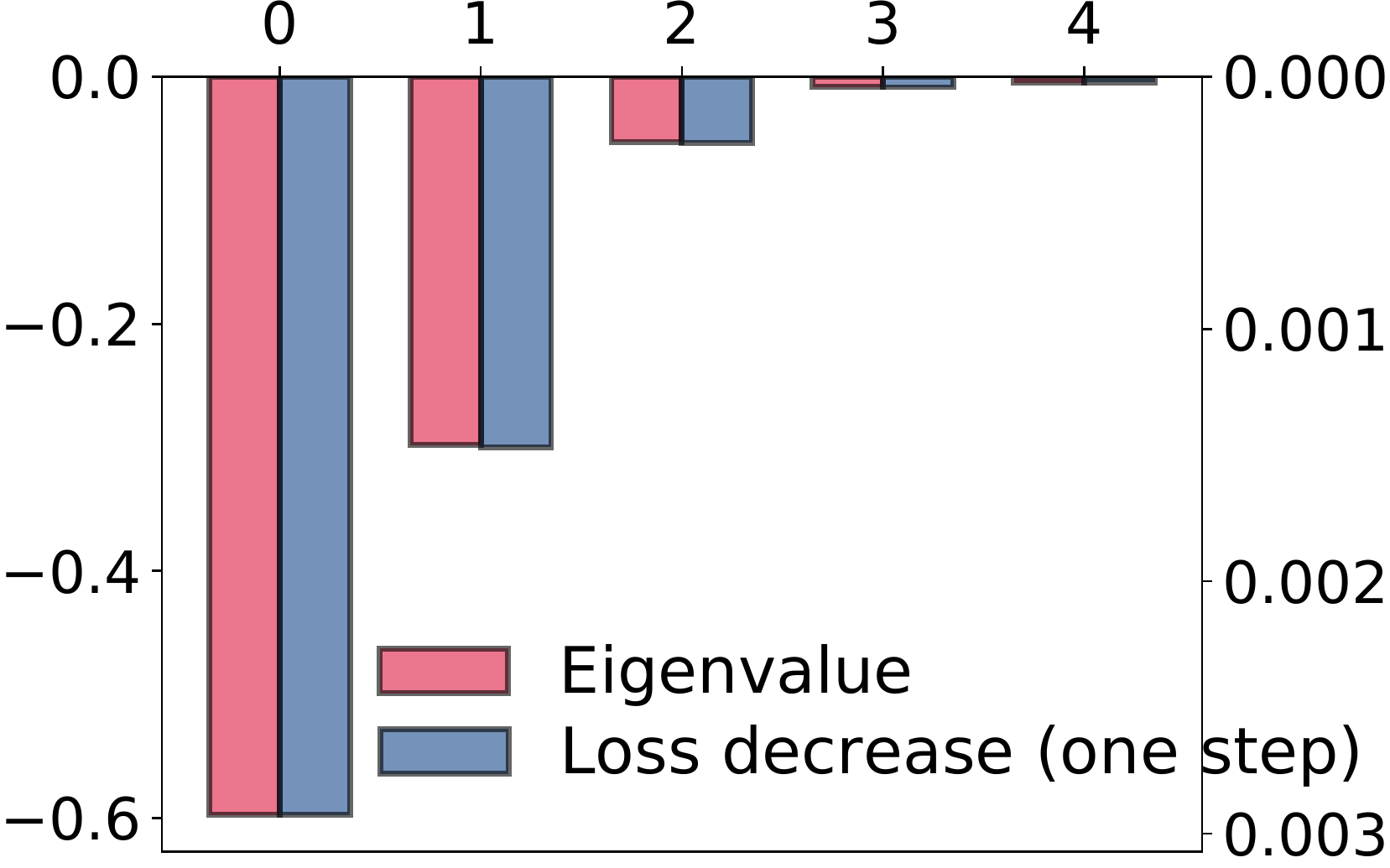}
\raisebox{1.0em}{\rotatebox{90}{\scriptsize Loss decrease }}~ &
\includegraphics[height =0.16\textwidth]{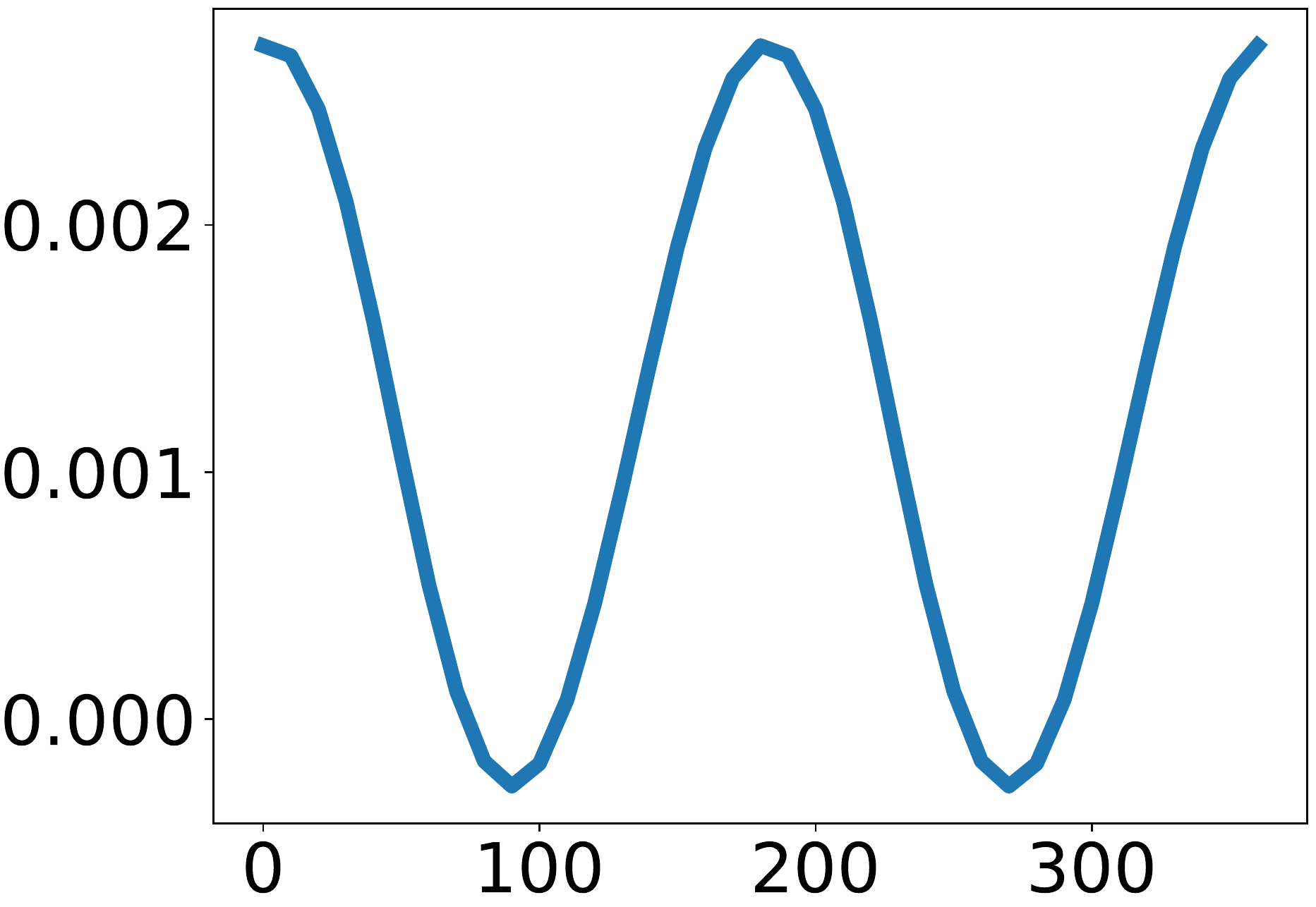}\vspace{0.1em} & 
~\raisebox{1.5em}{\rotatebox{90}{\scriptsize Training Loss}} \!\!\!
\includegraphics[height =0.17\textwidth]{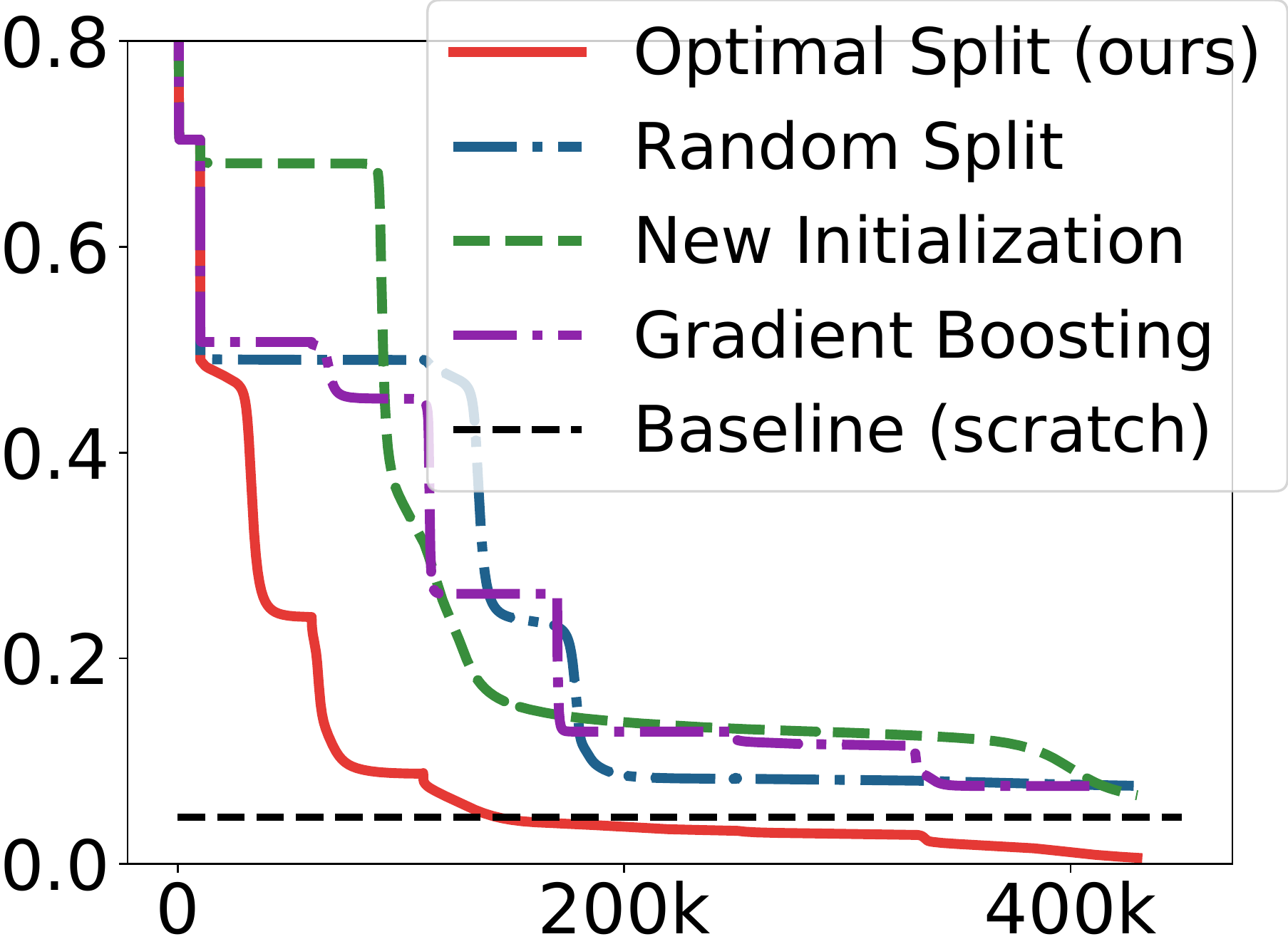} 
\vspace{-0.5em} \\
\scriptsize (a) $x$& \scriptsize (b) &\scriptsize{ \qquad (c) Angle}& {
\scriptsize (d) \#Iteration } \\
\end{tabular}
\caption{\small Results on a one-dimensional RBF network. 
(a) The true and estimated functions. (b) The eigenvalue vs. loss decrease. 
(c) The loss decrease vs. the angle of the splitting direction with the minimum eigenvector. (d) The training loss vs. the iteration (of gradient descent); the splittings happen at the cliff points.  
}
\label{fig:nn_toy}
\end{figure*}
\vspace{-0.75em}

\paragraph{Toy RBF Neural Networks}   
We apply our method to learn a one-dimensional RBF  neural network shown in Figure~\ref{fig:nn_toy}a. 
See Appendix~\ref{sec:app_two_layer_rbf_nn} for details of the setting. 
%
We start with a small neural network with $m=1$ neuron 
and gradually increase the model size by splitting neurons.
Figure~\ref{fig:nn_toy}a shows that we almost recover the true function as we split up to $m=8$ neurons. 
Figure~\ref{fig:nn_toy}b shows the top five eigenvalues and the decrease of loss 
when we split   $ m = 7$ neurons to  $m = 8$ neurons;  
we can see that the eigenvalue and loss decrease correlate linearly, confirming our results in Theorem~\ref{thm:general}. 
Figure~\ref{fig:nn_toy}c 
shows the decrease of the loss 
when we split the top one neuron  following the direction 
with different angles from the minimum eigenvector at $m=7$.  
We can see that the decrease of the loss is maximized 
when the splitting direction aligns with the eigenvector, consistent with our theory. 
In Figure~\ref{fig:nn_toy}d, 
we compare with different baselines of progressive training, including
\texttt{Random Split}, splitting a randomly chosen neuron with a random direction; 
\texttt{New Initialization}, adding a new neuron with randomly initialized weights and co-optimization it with previous neurons; 
\texttt{Gradient Boosting}, 
adding new neurons with Frank-Wolfe algorithm while fixing the previous neurons; 
\texttt{Baseline (scratch)}, training a  network of size $m=8$ from scratch. 
Figure~\ref{fig:nn_toy}d shows our method yields the best result. 

\paragraph{Learning Interpretable Neural Networks} 
To visualize the dynamics of the splitting process, 
we apply our method to incrementally train 
an interpretable neural network designed by  \citet{li2018deep},  
which contains a ``prototype layer'' whose weights are enforced to be similar to realistic images to encourage interpretablity.  
See Appendix~\ref{sec:app_explainable_nn} and \citet{li2018deep} for more detailed settings. 
We apply our method to split the prototype layer starting from a single neuron on MNIST, 
and show 
in Figure~\ref{fig:mnistvisual} 
the evolutionary tree of the neurons in our splitting process. 
We can see that the blurry (and hence less interpretable) prototypes tend to be selected and split into two off-springs that are similar yet more interpretable. 
Figure~\ref{fig:mnistvisual} (b) shows the 
decrease of loss  
when we split 
each of the five neurons at the 5-th step 
(with the decrease of loss measured at the local optima reached  dafter splitting); 
we find that the eigenvalue correlates well 
with the decrease of loss  and the interpretablity of the neurons. 
The complete evolutionary tree and quantitative comparison 
with baselines are shown in Appendix~{\ref{sec:app_explainable_nn}}. 

\begin{figure*}[ht]
\centering
\setlength{\tabcolsep}{0.0pt}
\scalebox{0.95}{
\begin{tabular}{cc}
\includegraphics[height =0.27\textwidth]{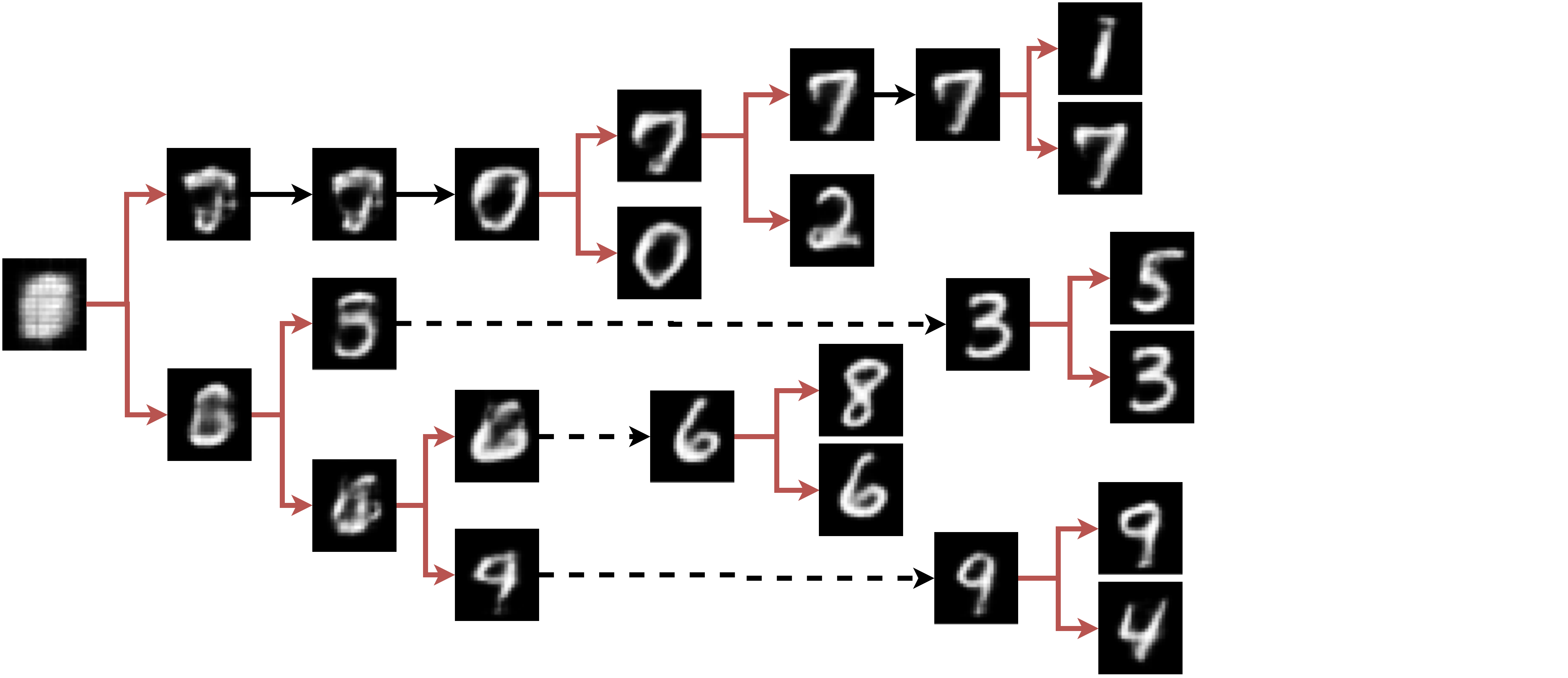}
&
\hspace{-5.2em}
\raisebox{1.2em}{\rotatebox{90}{\scriptsize Eigenvalues}}
\includegraphics[height =0.27\textwidth]{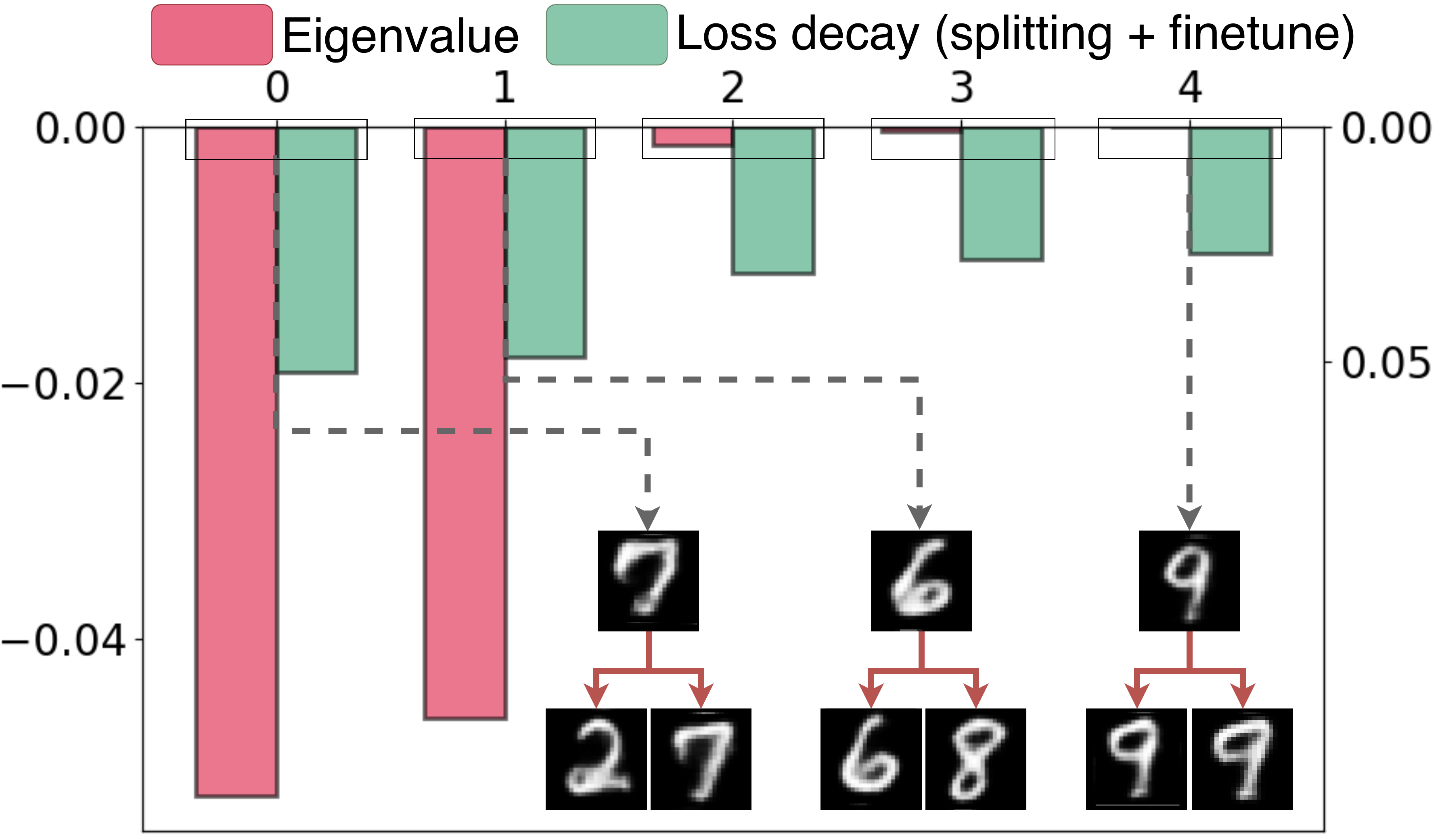} 
\hspace{-1.2em}
\raisebox{1.2em}{\rotatebox{90}{\scriptsize Loss decrease}}\\
{\scriptsize (a)} & {\scriptsize (b)} \vspace{-0.2em}\\
\end{tabular}
}
\caption{\small 
Progressive learning of the interpretable prototype network in \citet{li2018deep} on MNIST. 
(a) The evolutionary tree of our splitting process, in which the least interpretable, or most ambiguous prototypes tend to be split first. 
(b) The eigenvalue and resulting loss decay when splitting the different neurons at the 5-th step. 
}
\label{fig:mnistvisual}
\end{figure*}
\vspace{-0.5em}

\paragraph{Lightweight Neural  Architectures for Image Classification} 
We investigate the effectiveness of our methods 
in learning small and efficient network structures for image classification. 
We experiment with two popular deep neural architectures,  MobileNet~\citep{howard2017mobilenets} and  VGG19~\citep{simonyan2014very}. 
In both cases, 
we start with a relatively small network 
and gradually grow the network by splitting the convolution filters 
following Algorithm~\ref{alg:main}. 
 See Appendix~\ref{sec:image_classfication} for more details of the setting. 
Because there is no other off-the-shelf progressive growing algorithm  
that can adaptively decide the neural architectures like our method,  
we compare with pruning methods, 
which follow the opposite direction of gradually removing neurons starting from a large pre-trained network. 
We test two  state-of-the-art pruning methods, including batch-normalization-based pruning (Bn-prune) \citep{liu2017learning} and L1-based pruning (L1-prune) \citep{li2016pruning}. 
As shown in Figure~\ref{fig:cifar10}a-b, 
our splitting method 
yields higher accuracy with similar model sizes. This is surprising and significant, because the pruning methods leverage the knowledge from a large pre-train model, while our method does not. 

To further test the effect of architecture learning 
in both splitting and pruning methods, 
we 
test another setting in which 
we discard the weights of the neurons 
and retain the whole network starting from a random initialization, under the structure obtained from splitting or pruning at each iteration.  
As shown in Figure~\ref{fig:cifar10}c-d, the results of retraining 
is comparable with (or better than) the result of successive finetuning in Figure~\ref{fig:cifar10}a-b, which is consistent with the findings in \citet{liu2018rethinking}. Meanwhile, our splitting method still outperforms 
both Bn-prune and  L1-prune. 
%
%
%

\begin{figure*}[t]
\centering
\setlength{\tabcolsep}{0.5pt}
\vspace{-0.5em}
\scalebox{0.98}{
\begin{tabular}{cccc}
{\small ~~~MobileNet (finetune)} & 
{\small ~~~VGG19 (finetune)} & 
{\small ~~~MobileNet (retrain)} & 
{\small VGG19 (retrain)} \\
\hspace{-1em}
\raisebox{1.4em}{\rotatebox{90}{\scriptsize Test Accuracy}}~~
\includegraphics[height =0.22\textwidth]{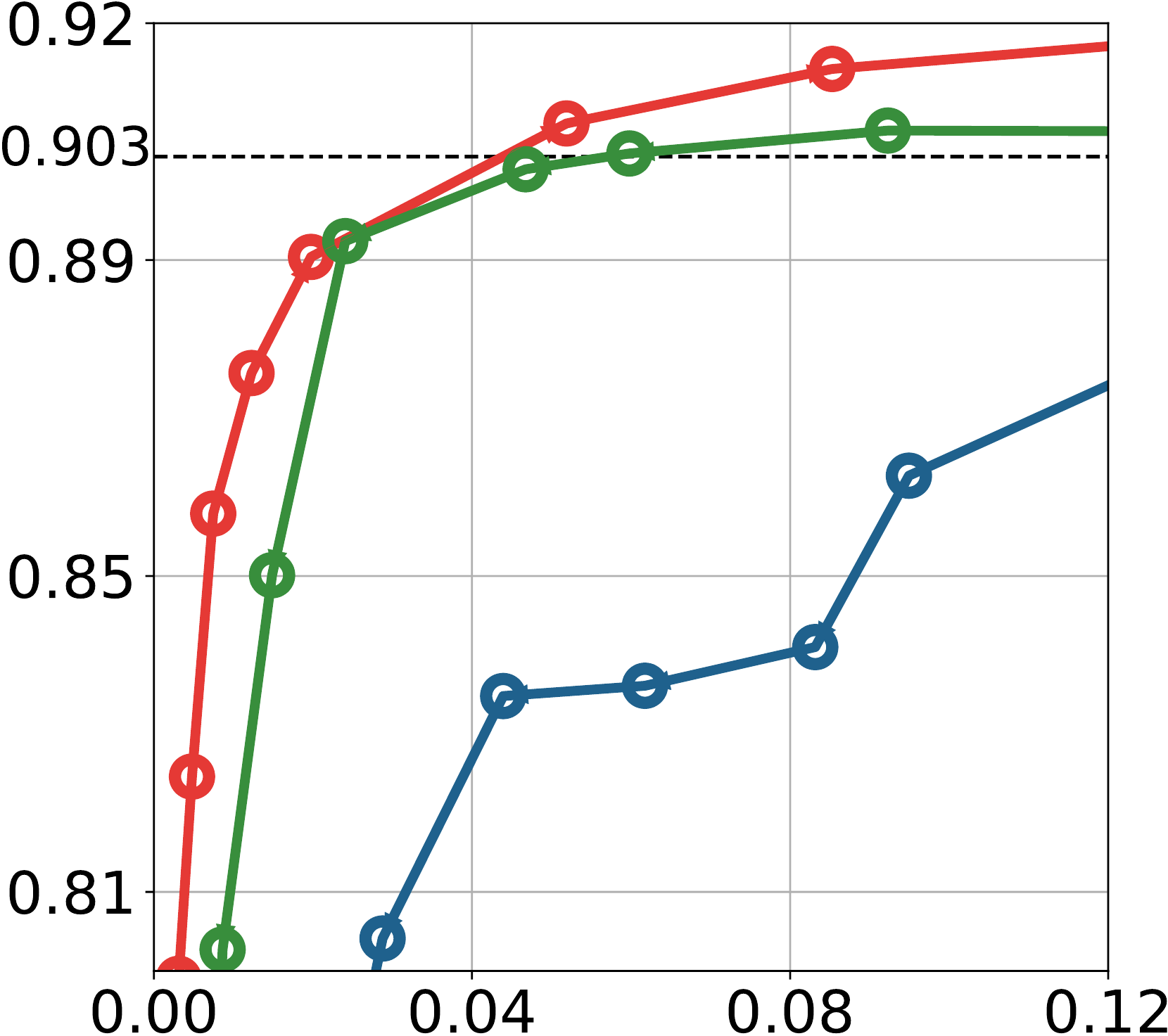} & 
\includegraphics[height =0.22\textwidth]{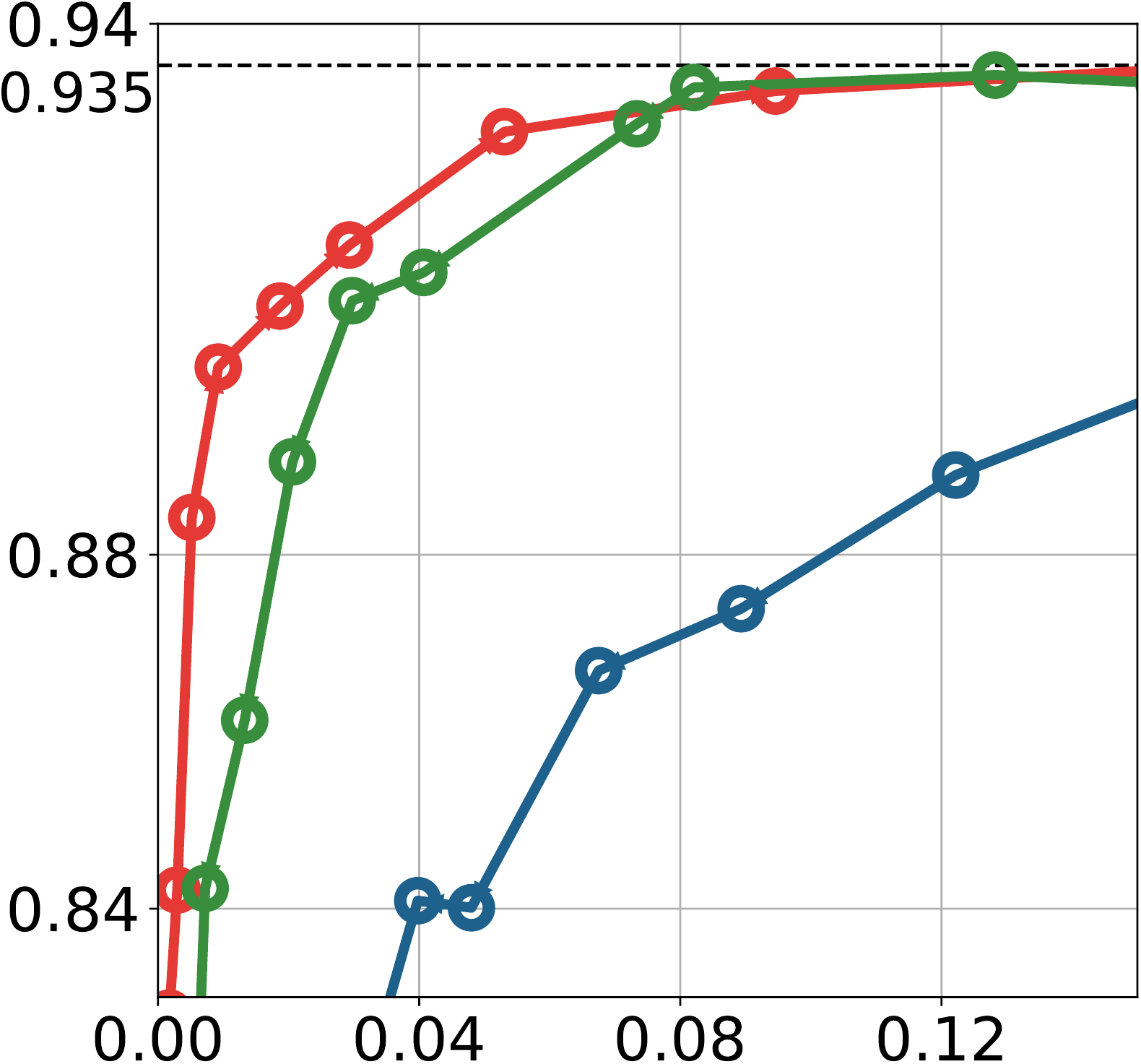} &
\includegraphics[height =0.22\textwidth]{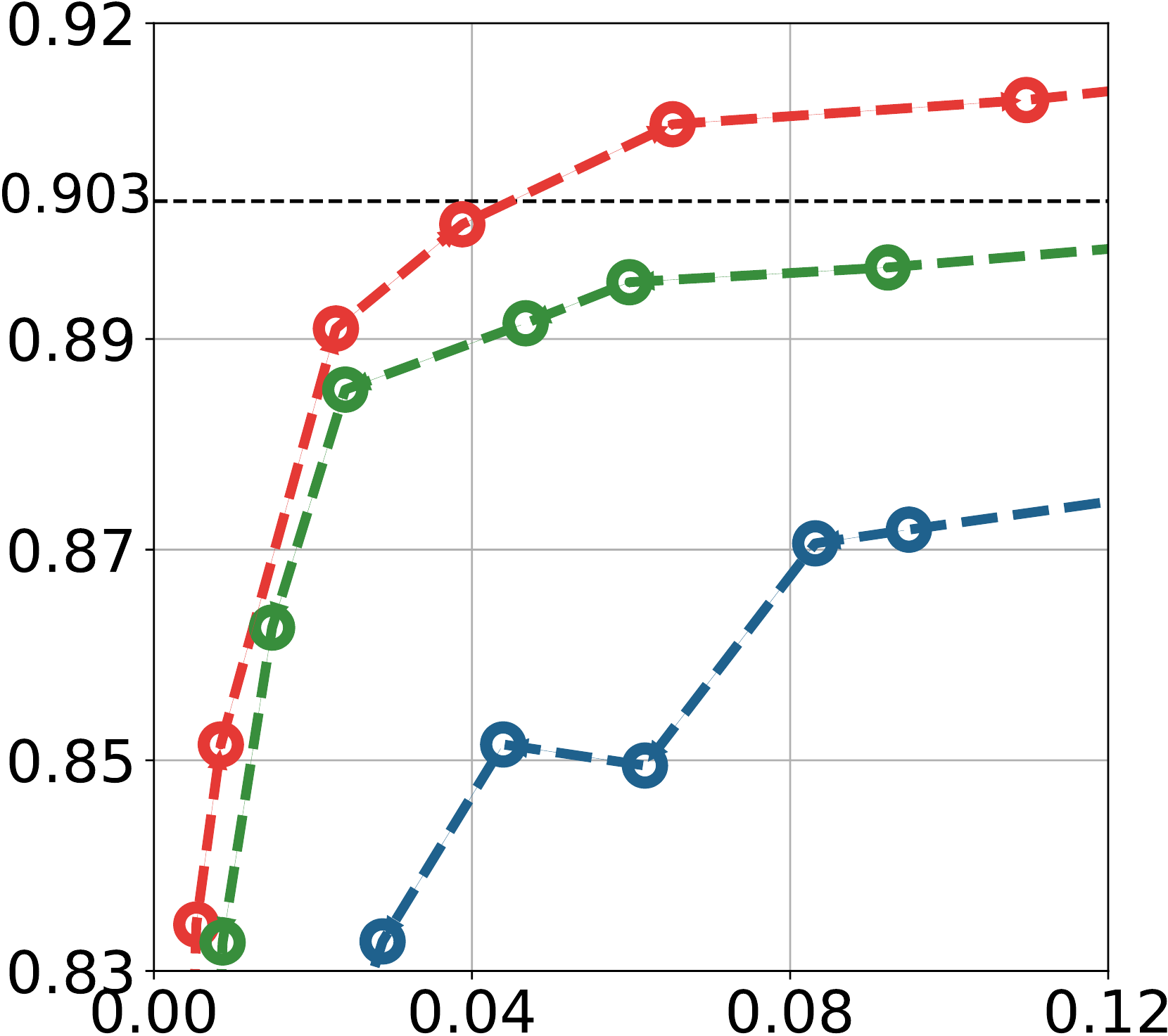} & 
\includegraphics[height =0.22\textwidth]{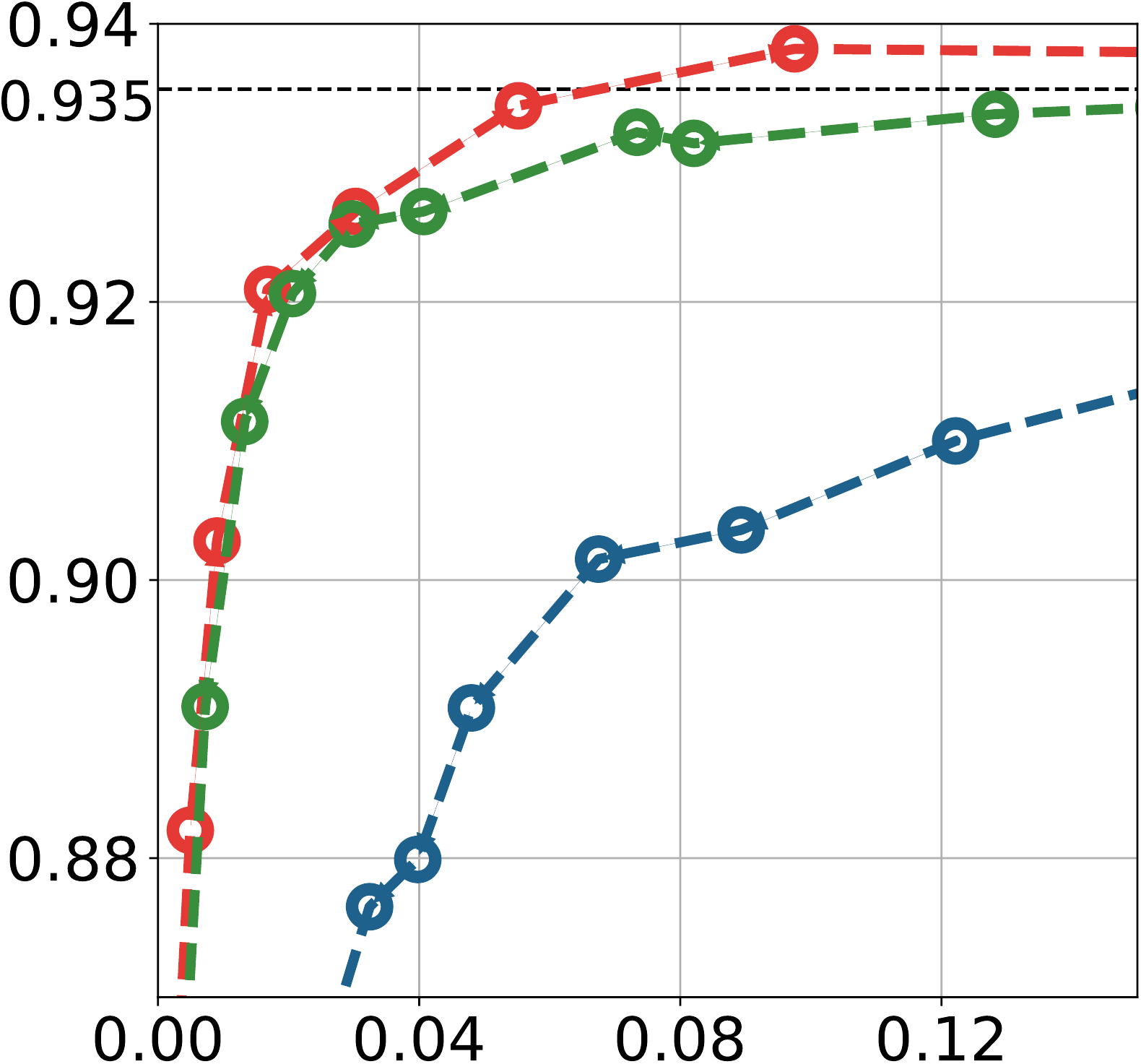}  
\hspace{-5em}
\raisebox{0.6em}{\includegraphics[width =0.12\textwidth]{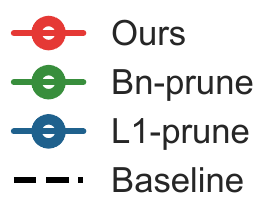}} \\
\small (a){\scriptsize ~~~~~~Ratio} 
&\small (b){\scriptsize  ~~~~~~Ratio} 
&\small (c){\scriptsize  ~~~~~~Ratio} 
&\small (d){\scriptsize  ~~~~Ratio} \\
\end{tabular}
}
\caption{\small Results on CIFAR-10. 
(a)-(b) Results of Algorithm~\ref{alg:main} 
and pruning methods (which successively finetune the neurons after pruning). 
(c)-(d) Results of Algorithm~\ref{alg:main} and prunning methods with retrainning, 
in which we retrain all the weights starting from random initialization after each splitting or pruning step.
The x-axis represents the ratio between the number parameters of the learned models and a full size baseline network.
} 
\label{fig:cifar10}
\end{figure*}
\vspace{-0.5em}

\paragraph{Resource-Efficient Keyword Spotting on Edge Devices}  
Keyword spotting systems aim to detect a particular keyword from a continuous stream of audio. It is  typically deployed on energy-constrained edge devices  
and requires real-time response and high accuracy for good user experience. 
This casts a key challenge of constructing efficient and lightweight neural architectures.  We apply our method to solve this problem, by splitting  
a small model (a compact version of DS-CNN) obtained from \citet{zhang2017hello}. 
See Appendix~\ref{sec:app_kws} for detailed settings. 

Table~\ref{tab:edge} shows the results on 
the Google speech commands benchmark dataset \citep{warden2018speech}, 
in which our method achieves 
significantly higher accuracy than 
the best model (DS-CNN) found by \citet{zhang2017hello}, 
while having ~31\% less parameters and Flops. 
Figure~\ref{fig:kws} shows further comparison 
with Bn-prune \citep{liu2017learning}, which is again inferior to our method.  

\begin{table}[ht]
\begin{minipage}[t]{0.4\linewidth}
\centering
\setlength{\tabcolsep}{2pt}
\scalebox{.9}{
\begin{tabular}{l|ccc}
        \hline
        Method &  Acc & Params (K) & Ops (M)\\
        \hline
        DNN  &  86.94  & 495.7 &  1.0 \\
        CNN &  92.64  & 476.7  & 25.3 \\
        BasicLSTM  &  93.62 &  492.6 & 47.9\\
        LSTM &  94.11  & 495.8& 48.4 \\
        GRU &  94.72  & 498.0 & 48.4 \\
        CRNN &  94.21  & 485.0 & 19.3 \\
        \hline
        DS-CNN & 94.85 & 413.7 &  56.9 \\
         Ours & \textbf{95.36} & \textbf{282.6} & \textbf{39.2} \\ 
        \hline
    \end{tabular}
    }
    \caption{\small Results on keyword spotting. All results are averaged over 5 rounds.}
    \label{tab:edge}
\end{minipage}\hfill
\begin{minipage}[t]{0.56\linewidth}
\centering
\setlength{\tabcolsep}{1pt}
\begin{tabular}{cc}
\hspace{-2.05em}
\raisebox{1.2em}{\rotatebox{90}{\scriptsize Test Accuracy}}
 \includegraphics[height =0.35\textwidth]{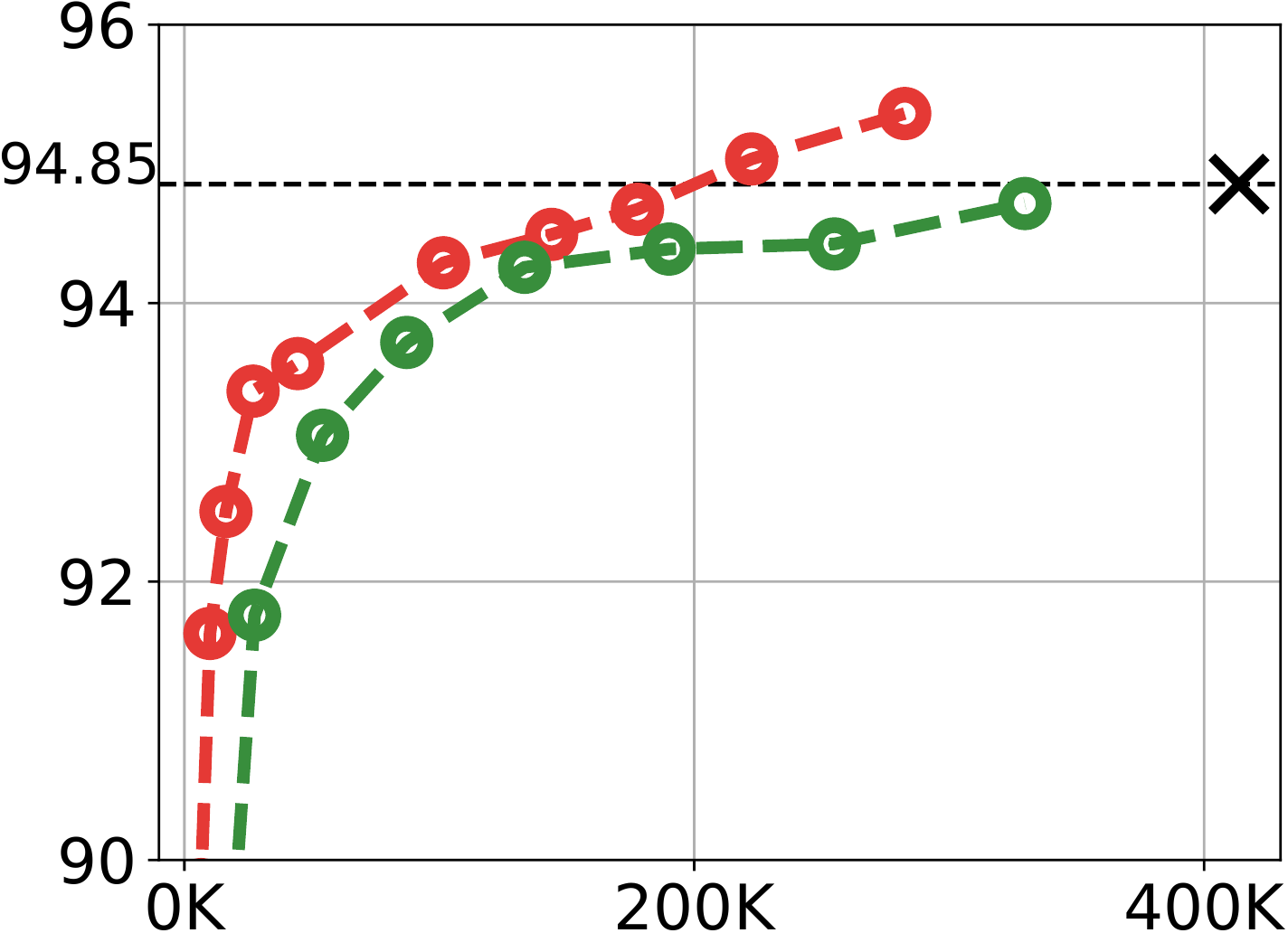} &
    \includegraphics[height =0.35\textwidth]{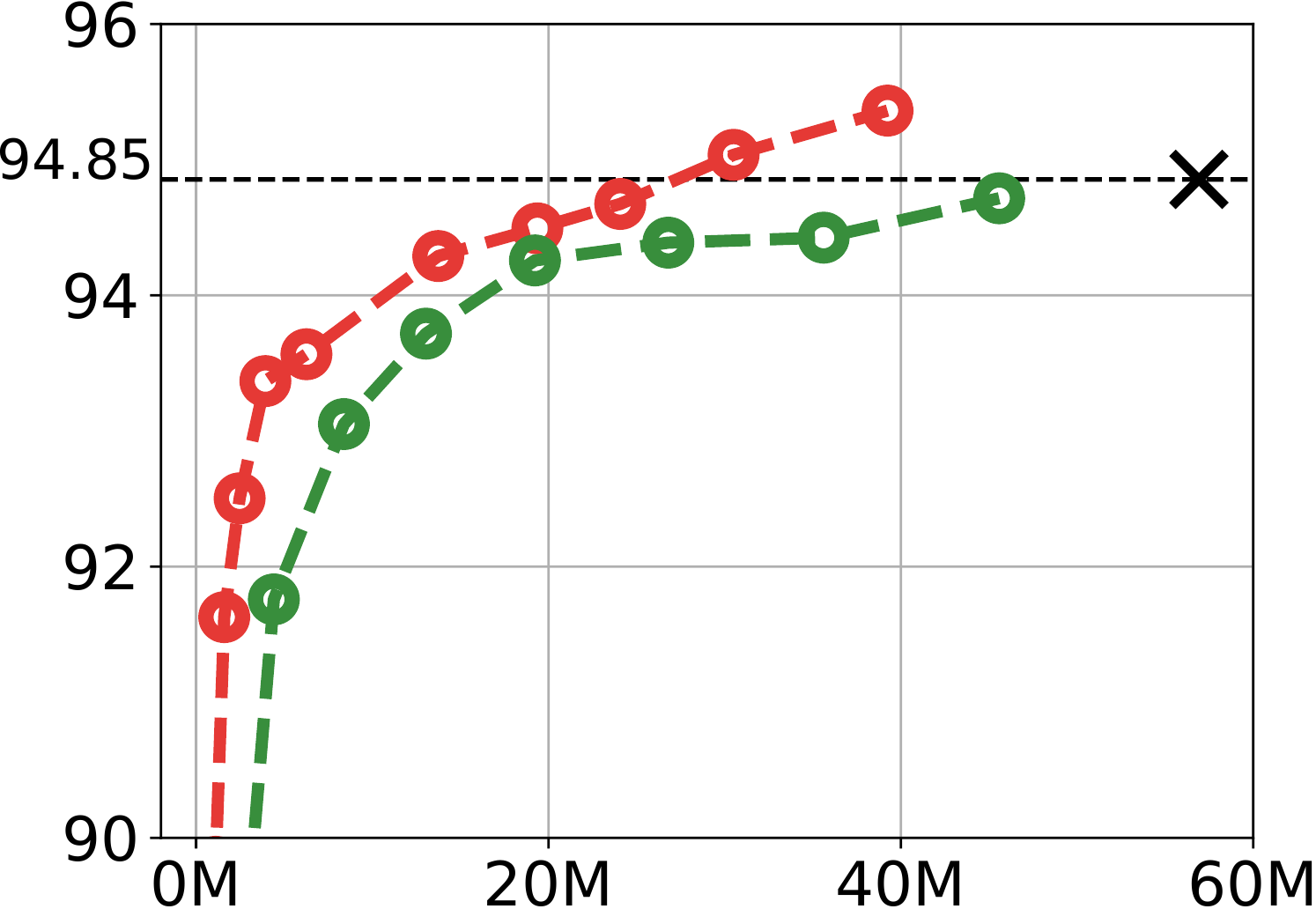} 
    \hspace{-6em}
    \raisebox{1em}{\includegraphics[height =0.10\textwidth]{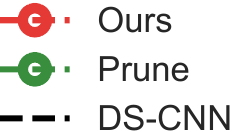}}\\
    ~~~\scriptsize \#Params & ~~~~~~~~~~~~~~\scriptsize \#Ops \\
    \end{tabular}
     \captionof{figure}{\small
     Comparison of accuracy vs. model size (\#Params)  and number of flops (\#Ops) on keyword spotting. 
    }
    \label{fig:kws}
\end{minipage}
\end{table}
\vspace{-1.8em}

\section{Conclusion}
We present a simple approach for progressively training neural networks 
via neuron splitting. 
Our approach highlights a novel view of neural structure optimization as 
continuous functional optimization, and yields a practical procedure with broad applications. For future work, we will further investigate fast gradient descent based approximation of large scale eigen-computation and more theoretical analysis, extensions and applications of our  approach.   

\section*{Acknowledgement}
This work is supported in part by NSF CRII 1830161 and NSF CAREER
1846421. We would like to acknowledge Google Cloud and Amazon Web Services (AWS) for their support.

\bibliography{zzzz}
\bibliographystyle{nips2018}

\newpage \clearpage
\appendix 
\appendix
\section{Proofs}

\subsection{Proofs of Splitting Taylor Expansion}

\begin{proof}[\textbf{Proof of Theorem~\ref{thm:Lthetasp}}]
Taking the gradient of $L(\theta)$ in \eqref{equ:Ltheta} gives $$
\nabla_\para L(\para) = 
 \E[\Phi'(\sigma(\para, x)) \nabla_\para \sigma(\para,x)], 
$$
$$
\nabla^2_{\para\para} L(\para) = 
 \E[\Phi'(\sigma(\para, x)) \nabla_{\para\para}^2 \sigma(\para,x) 
 +  \Phi''(\sigma(\para, x)) \nabla_\para \sigma(\para,x)^{\otimes 2}   ],  
$$
where $\Phi'(\cdot)$ is the derivative of $\Phi(\cdot)$ (which is a univariate function), and 
$\nabla_\para  \sigma(\para, x)^{\otimes 2} :=
\nabla_\para  \sigma(\para, x)\nabla_\para  \sigma(\para, x)^\top.  
$ 

When $\theta$ is split into $ 
\{w_i, \theta_i\}_{i=1}^m$, the augmented loss function is 
$$
\Lm(\vv \para, \vv \wts) =  \E \left [\Phi \left (  \sum_{i=1}^m \wts_i \sigma(\para_i, x)  \right ) \right],  
$$
where $\vv w = [w_1,\ldots, w_m]$ and $\vv \theta = [\theta_1, \ldots, \theta_m].$ The weights should satisfy $\sum_{i=1}^m w_i = 1$ and $w_i \geq 0$. 
In this way, we have $\Lm(\vv \para, \vv \wts) = L(\theta)$ when
 $\vv\theta = [\theta, \ldots, \theta] = \theta\one_m$. 

Taking the gradient of $\Lm(\vv\theta, \vv w)$ w.r.t. $\theta_i$ when $\vv\theta = \theta \one_m$, we have 
$$
\nabla_{\para_i}\Lm\left(\theta\one_m, ~ \vv\wts \right) =
 \E\left [\Phi' \left ( \sigma(\para, x)  \right )  
 \wts_i \nabla_\para \sigma(\para, x) \right] = \wts_i \nabla_\para L(\para).
$$
Taking the second derivative, we get 
\begin{align*} 
\nabla_{\para_i,\para_i} \Lm(\theta\one_m, ~ \vv\wts) 
& =  \E\left [ 
 \Phi' \left (  \sigma(\para, x)  \right )   \wts_i \nabla_{\para,\para}^2 \sigma(\para, x) 
 ~+~ 
 \Phi'' \left (  \sigma(\para, x)   \right )  
 \wts_i^2 \nabla_\para \sigma(\para, x)^{\otimes 2} 
  \right] \\
&  := \wts_i A(\theta) + \wts_i^2 B(\theta), 
\end{align*}
where 
\begin{align*} 
A(\theta) := \E\left [ 
 \Phi' \left (  \sigma(\para, x)  \right )    \nabla_{\para,\para}^2 \sigma(\para, x)  \right ], 
 & &&&
B(\theta) := \E\left [ \Phi'' \left (  \sigma(\para, x)   \right )  \nabla_\para \sigma(\para, x)^{\otimes 2} 
  \right]. 
\end{align*}
Note that we have $\nabla^2_{\theta\theta} L(\theta) = A(\theta) + B(\theta)$ following this definition.  

For $i \neq j$, we have 
$$
\nabla_{\para_i \para_j}\Lm(\para\one_m, ~ \vv\wts) = 
 \E\left [\Phi'' \left ( \sigma(\para, x)   \right ) 
  \wts_i \wts_j \nabla_\para \sigma(\para, x)^{\otimes 2} \right] = \wts_i \wts_j B(\theta). 
$$

For $\vv\theta = [\theta_1, \ldots, \theta_m]$, assume  
 $\theta_i=\theta + \epsilon \delta_i$, and define $\bar \delta = \sum_{i=1}^m w_i \delta_i$ to be the average displacement.
Therefore, $\vv\theta = \theta \vv 1_m + \epsilon \vv \delta $. 
Using the Taylor expansion of $\Lm(\theta \vv 1_m + \epsilon \vv \delta, \vv w)$ w.r.t. $\epsilon$ at $\epsilon = 0$, we have  
\begin{align*} 
\Lm(\vv\theta, ~\vv w) - L(\para) 
& = \Lm(\theta\vv 1_m + \epsilon \vv \delta, ~\vv w) - L(\para) \\
& =   \epsilon \sum_{i=1}^m \nabla_{\theta_i} \Lm(\theta\vv 1_m, \vv w)^\top \delta_i  
+   \frac{\epsilon^2}{2} \sum_{ij=1}^m 
\delta_i^\top (\nabla_{\theta_i,\theta_j}^2 \Lm(\theta\vv 1_m, \vv w)) \delta_i  
+ \obig(\epsilon^3)   \\
& = \epsilon\sum_{i=1}^m w_i \nabla L(\para)^\top \delta_i  
+ \frac{\epsilon^2}{2}\sum_{i=1}^m w_i \delta_i^\top A(\theta)  \delta_i 
+ \frac{\epsilon^2}{2}\sum_{ij=1}^m w_i w_j \delta_i^\top B(\theta) \delta_j + \obig(\epsilon^3) \\
& =   \epsilon \nabla L(\para)^\top \bar \delta 
+  \frac{\epsilon^2}{2}  \sum_{i=1}^m  w_i \delta_i^\top A(\theta) \delta_i
+ \frac{\epsilon^2}{2}  \bar \delta^\top B(\theta) \bar \delta 
+ \obig(\epsilon^3) \\
& = 
  \epsilon \nabla L(\theta)^\top \bar \delta 
  + \frac{\epsilon^2}{2} \bar \delta^\top (A(\theta)+B(
  \theta)) \bar \delta  
  + \frac{\epsilon^2}{2}  \sum_{i=1}^m 
  ( \delta_i^\top A \delta_i - \bar \delta^\top A \bar \delta)     + \obig(\epsilon^3) \\
  &=\epsilon \nabla L(\theta)^\top \bar \delta 
  + \frac{\epsilon^2}{2} \bar \delta^\top \nabla^2 L(\theta)  \bar \delta  
  + \frac{\epsilon^2}{2}  \sum_{i=1}^m  
  ( \delta_i - \bar\delta)^\top A(\theta) (\delta_i - \bar \delta)     + \obig(\epsilon^3). 
\end{align*}
This completes the proof. 
\end{proof}

\begin{proof}[\textbf{Proof of Theorem~\ref{thm:opt}}]
Recall that 
$$
 \II(\vv\delta, \vv w; \theta) = 
 \frac{\epsilon^2}{2}
 \sum_{i=1}^m w_i \delta_i ^\top S(\theta) \delta_i, 
$$
with $\sum_i w_i =1$, $w_i \geq 0$ and $\norm{\delta_i}= 1$. 
Since $\delta_i ^\top S(\theta) \delta_i \geq \lambda_{min}(S(\theta)) \norm{\delta_i}^2 = \lambda_{min}(S(\theta)) $, 
it is obvious that 
$$
\II(\vv\delta, \vv w; \theta) = 
 \frac{\epsilon^2}{2}
\sum_{i=1}^m w_i \delta_i ^\top S(\theta) \delta_i 
\geq 
 \frac{\epsilon^2}{2}
\sum_{i=1}^m w_i \lambda_{min}(S(\theta)) = 
 \frac{\epsilon^2}{2} \lambda_{min}(S(\theta)). 
$$
On the other hand, this lower bound is achieved by setting $m=2$, $w_1=w_2 = 1/2$ and $\delta_1 = -\delta_2 = v_{min}(S(\theta)).$ This completes the proof. 
\end{proof}

\begin{proof}[\textbf{Proof of Theorem~\ref{thm:general}}]
Step 1: We first consider the case with no average displacement, that is, $\mu^\supell = 0$. 
 In this case, Lemma~\ref{lem:decomp} below gives 
  \begin{align} 
     \label{equ:decomptmp}
 \L(\vv\theta^{[1:n]}, \vv w^{[1:n]})  = 
   L(\theta^{[1:n]})  +  \sum_{\ell=1}^n   \left (\L(\vv\theta^{[1:n]}\onlyell, \vv w^{[1:n]})  - L(\theta^{[1:n]}) \right) + \obig(\epsilon^3), 
  \end{align}
  where $\vv\theta^{[1:n]}\onlyell$ denotes the augmented parameters obtained when we only split the $\ell$-th neuron, while keeping all the neurons unchanged. 
  Applying Theorem~\ref{thm:Lthetasp}, we have for each $\ell$, 
  $$
  \L(\vv\theta^{[1:n]}\onlyell, \vv w^{[1:n]})  - L(\theta^{[1:n]}) = 
  \frac{\epsilon^2}{2}  \II_\ell(\vv\delta^\supell,  \vv w^\supell; ~\theta^{[1:n]})~+~\obig(\epsilon^3). 
  $$
  Combining this with \eqref{equ:decomptmp} yields the result. 
 
Step 2: We now consider the more general case when $\mu^{[1:n]}\neq 0$. Let $\tilde \theta^{[1:n]} = \theta^{[1:n]} + \epsilon \mu^{[1:n]}$. Applying the result above 
on $\tilde \theta^{[1:n]}$, we have 
\begin{align*} 
\L\left (\vv\theta^{[1:n]}, \vv w^{[1:n]} \right ) 
& = L\left (\tilde \theta^{[1:n]}\right ) + \frac{\epsilon^2}{2} D\left (\tilde \theta^{[1:n]}\right) +\obig(\epsilon^3)\\
\end{align*}
where $D\left (\tilde \theta^{[1:n]}\right ) : = \sum_{\ell=1}^n\II_\ell(\vv\delta^\supell,  \vv w^\supell; ~\tilde \theta^{[1:n]}).$ 
Therefore,
\begin{align*} 
\L\left (\vv\theta^{[1:n]}, \vv w^{[1:n]} \right ) 
& = L\left (\tilde \theta^{[1:n]}\right ) + \frac{\epsilon^2}{2} D(\tilde \theta^{[1:n]}) +\obig(\epsilon^3)\\
& = L\left (\tilde \theta^{[1:n]}\right ) + 
\frac{\epsilon^2}{2} D(\theta^{[1:n]})
+\frac{\epsilon^2}{2} (D(\tilde \theta^{[1:n]})-D(\theta^{[1:n]}) )+\obig(\epsilon^3)\\
& = 
L\left (\tilde \theta^{[1:n]}\right ) + \frac{\epsilon^2}{2} D(\theta^{[1:n]}) + \obig(\epsilon^3) \ant{\text{because $\theta^{[1:n]} - \tilde \theta^{[1:n]} = \obig(\epsilon)$}}
\\
& = 
L\left (\theta^{[1:n]} + \epsilon \mu^{[1:n]}\right ) + \frac{\epsilon^2}{2} D(\theta^{[1:n]}) + \obig(\epsilon^3),\\
\end{align*}
where 
$D\left ( \theta^{[1:n]}\right ) : = \sum_{\ell=1}^n\II_\ell(\vv\delta^\supell,  \vv w^\supell; ~ \theta^{[1:n]}).$ 
This completes the proof. 
\end{proof}

\begin{lem}\label{lem:decomp}
Let $\theta^{[1:n]}$ be the parameters of $n$ neurons. 
Recall that we assume $\theta^{[\ell]}$ is split into $m_\ell$ off-springs with 
parameters $\vv\theta^{[\ell]} = \{\theta_i^{[\ell]}\}_{i=1}^{m_\ell}$ and weights $\vv w^{[\ell]} = \{w_{i}^\supell\}_{i=1}^{m_\ell}$,  which satisfies $\sum_{i=1}^{m_\ell} w_i^\supell = 1$.  
Let $\theta_i^\supell  = \theta^\supell + \epsilon \delta_i^\supell$, where $\delta_i^\supell$ is the perturbation on the $i$-th off-spring of the $\ell$-th neuron.   
Assume $\bar \delta^{[\ell]} := \sum_{i=1}^{m_\ell} w_i^{[\ell]} \delta_i^\supell = 0$, that is, 
 the average displacement of all the neurons is zero.  

Denote by $\vv\theta^{[1:n]}_\ell$ the augmented parameters we obtained by only splitting the $\ell$-th neuron while keeping all the other neurons unchanged, that is,  we have 
$\theta_{\ell,i}^{[\ell]} = \theta^{[\ell]} + \epsilon \delta_i^\supell$ for $i=1,\ldots, m_\ell$, and 
$\theta_{\ell,i}^{[\ell']} = \theta^{[\ell']}$ for all $\ell' \neq \ell$ and $i = 1,\ldots, m_{\ell'}$. 
Assume the third order derivatives of $\L(\vv\theta^{[1:n]}, \vv w^{[1:n]})$ are bounded. We have 
 \begin{align*}
 \L(\vv\theta^{[1:n]}, \vv w^{[1:n]})  = 
   L(\theta^{[1:n]})  +  \sum_{\ell=1}^n   \left (\L(\vv\theta^{[1:n]}\onlyell, \vv w^{[1:n]})  - L(\theta^{[1:n]}) \right) + \obig(\epsilon^3). 
  \end{align*}
\end{lem}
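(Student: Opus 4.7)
The plan is to reduce the statement to a second-order Taylor expansion of the augmented loss in $\epsilon$, and show that the cross-neuron terms vanish under the zero-average-displacement hypothesis. Concretely, define
\[
F(\epsilon) := \L\bigl(\vv\theta^{[1:n]}(\epsilon),\, \vv w^{[1:n]}\bigr),
\qquad
F_\ell(\epsilon) := \L\bigl(\vv\theta^{[1:n]}_\ell(\epsilon),\, \vv w^{[1:n]}\bigr),
\]
where $\theta^{[\ell]}_i(\epsilon) = \theta^{[\ell]} + \epsilon \delta^{[\ell]}_i$. By the bounded third-derivative hypothesis, both $F$ and $F_\ell$ admit Taylor expansions with $\obig(\epsilon^3)$ remainders, so it is enough to prove $F'(0) = \sum_\ell F_\ell'(0)$ and $F''(0) = \sum_\ell F_\ell''(0)$, and indeed to show that both sides of each equality are zero in the first case.

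First I would compute the first-order partial $\partial_{\theta^{[\ell]}_i} \L$ at the un-split point $\vv\theta^{[\ell']}_j = \theta^{[\ell']}$ for all $\ell',j$. Because $\theta^{[\ell]}_i$ enters $\L$ only through the aggregated output $\sum_j w^{[\ell]}_j \sigma_\ell(\theta^{[\ell]}_j, \cdot)$ (following the structure in \eqref{equ:LLLLL}), this derivative equals $w^{[\ell]}_i \nabla_{\theta^{[\ell]}} L(\theta^{[1:n]})$. Summing against $\delta^{[\ell]}_i$ gives $F'(0) = \sum_\ell \nabla_{\theta^{[\ell]}} L \cdot \bar\delta^{[\ell]} = 0$ by the hypothesis $\bar\delta^{[\ell]}=0$, and the same computation restricted to a single $\ell$ yields $F_\ell'(0) = 0$.

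The crux is to show $F''(0) = \sum_\ell F_\ell''(0)$ by analyzing $\partial^2_{\theta^{[\ell]}_i,\,\theta^{[\ell']}_{i'}}\L$ at the un-split point. For $\ell = \ell'$, this is exactly the Hessian block computed in the proof of Theorem~\ref{thm:Lthetasp}: $w^{[\ell]}_i A_\ell + (w^{[\ell]}_i)^2 B_\ell$ when $i=i'$ and $w^{[\ell]}_i w^{[\ell]}_{i'} B_\ell$ otherwise, with $A_\ell = S^{[\ell]}(\theta^{[1:n]})$. For $\ell \neq \ell'$, the key structural observation is that $\theta^{[\ell]}_i$ enters $\L$ only through $w^{[\ell]}_i \sigma_\ell(\theta^{[\ell]}_i,\cdot)$ and $\theta^{[\ell']}_{i'}$ only through $w^{[\ell']}_{i'}\sigma_{\ell'}(\theta^{[\ell']}_{i'},\cdot)$, so a chain-rule computation forces the mixed second derivative at the un-split point to factor as $w^{[\ell]}_i w^{[\ell']}_{i'} C_{\ell,\ell'}$ for a matrix $C_{\ell,\ell'}$ independent of $i,i'$. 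Summed against $\delta^{[\ell]}_i \otimes \delta^{[\ell']}_{i'}$, the cross-neuron contributions collapse to $\sum_{\ell\neq\ell'} \bar\delta^{[\ell]\top} C_{\ell,\ell'}\bar\delta^{[\ell']}$ and the intra-neuron $B_\ell$ contributions collapse to $\sum_\ell \bar\delta^{[\ell]\top} B_\ell \bar\delta^{[\ell]}$; both vanish under $\bar\delta^{[\ell]} = 0$. What survives is $\sum_\ell \sum_i w^{[\ell]}_i \delta^{[\ell]\top}_i A_\ell \delta^{[\ell]}_i$, which is precisely $\sum_\ell F_\ell''(0)$ by applying the same Taylor identity to each single-neuron expansion (i.e., Theorem~\ref{thm:Lthetasp} with $\mu=0$).

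The main obstacle will be justifying the factorization $w^{[\ell]}_i w^{[\ell']}_{i'} C_{\ell,\ell'}$ of the cross-neuron mixed second derivative at the un-split point, since $\L$ is only partially specified by \eqref{equ:LLLLL}; I expect to handle this by differentiating \eqref{equ:LLLLL} once in $\theta^{[\ell]}_i$, observing that the derivative at the un-split point takes the form $w^{[\ell]}_i$ times a quantity that depends on the other neurons only through their unperturbed outputs, and then differentiating the result in $\theta^{[\ell']}_{i'}$ using the analogous structure for neuron $\ell'$. Once this factorization is in hand, the rest of the argument is the two-line cancellation above, and combining everything gives $F(\epsilon) - L = \sum_\ell(F_\ell(\epsilon) - L) + \obig(\epsilon^3)$, which is the claimed identity.
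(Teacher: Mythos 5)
Your proposal is correct and follows essentially the same route as the paper: Taylor-expand the augmented loss to second order in $\epsilon$, observe that the mixed second derivative $\partial^2_{\theta^{[\ell]}_i,\,\theta^{[\ell']}_{i'}}\L$ at the un-split point factors as $w^{[\ell]}_i w^{[\ell']}_{i'}\nabla^2_{\theta^{[\ell]}\theta^{[\ell']}}L$ when $\ell\neq\ell'$, and use $\bar\delta^{[\ell]}=0$ to kill the cross-neuron contributions. The only cosmetic difference is in bookkeeping: the paper works directly with the combined difference $F := (\L - L) - \sum_\ell(\L_\ell - L)$ so that the same-$\ell$ terms cancel identically in $\nabla_{\epsilon\epsilon}^2F$ (and the first-order term vanishes without invoking $\bar\delta=0$), whereas you expand $F''(0)$ and $\sum_\ell F_\ell''(0)$ separately and match them, which costs you a few extra cancellations via $\bar\delta^{[\ell]}=0$ but amounts to the same computation.
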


\begin{proof}
Define 
 $$
F := \Big(\L(\vv\theta^{[1:n]}, \vv w^{[1:n]}) 
 -  L(\theta^{[1:n]})   \Big)
 -  \sum_{\ell=1}^n   \left (\L(\vv\theta^{[1:n]}\onlyell, \vv w^{[1:n]})  - L(\theta^{[1:n]}) \right).  
 $$
By Taylor expansion, 
$$
 F = 
 \epsilon \nabla_\epsilon F \big|_{\epsilon=0} +\frac{\epsilon^2}{2}  
 \nabla_{\epsilon\epsilon} F \big|_{\epsilon=0} + \obig (\epsilon^3). 
$$

It is obvious to see that the first order derivation  $\nabla_\epsilon F \big|_{\epsilon=0}$ equals zero because of the correction terms.  Specifically, 
\begin{align*}
\nabla_\epsilon F\big|_{\epsilon=0} 
= \sum_{\ell=1}^n \sum_{i=1}^{m_\ell}
 \nabla_{\theta_i^\supell } \L(\vv\theta^{[1:n]}, \vv w^{[1:n]})^\top \delta_i^\supell 
 \bigg|_{\epsilon=0} 
 ~~ 
- \sum_{\ell=1}^n \sum_{i=1}^{m_\ell} 
\nabla_{\theta_i^\supell} 
\L(\vv\theta^{[1:n]}, \vv w^{[1:n]})^\top 
\delta_i^\supell 
\bigg|_{\epsilon=0}  = 0. 
\end{align*}
For the second order derivation, define 
$$
A_{\ell,\ell'} = \nabla_{\theta^{[\ell]} \theta^{[\ell']}} L(\theta^{[1:n]}). 
$$
For any $\ell \neq \ell'$, 
we have from \eqref{equ:lell}
and \eqref{equ:LLLLL} that 
$$
\nabla_{\theta^\supell_i \theta^{[\ell']}_{i'}} \L(\vv\theta^{[1:n]}, \vv w^{[1:n]})\bigg|_{\epsilon=0}    
= w_{i}^{[\ell]}  w_{i'}^{[\ell']} \nabla_{\theta^{[\ell]} \theta^{[\ell']}} L(\theta^{[1:n]}) 
=  w_{i}^{[\ell]}  w_{i'}^{[\ell']} A_{\ell,\ell'}. 
$$
Therefore, we have  
\begin{align*}
\nabla_{\epsilon\epsilon} F\big|_{\epsilon=0}
& =  \sum_{\ell\neq \ell'}\sum_{i=1}^{m_\ell} \sum_{i'=1}^{m_{\ell'}}
(\delta_i^\supell )^\top 
\nabla_{\theta^\supell_i \theta^{[\ell']}_{i'}} \L(\vv\theta^{[1:n]}, \vv w^{[1:n]})\bigg|_{\epsilon=0}    
\delta_{i'}^{[\ell']}  \\
& =  
\sum_{\ell\neq \ell'}\sum_{i=1}^{m_\ell} \sum_{i'=1}^{m_{\ell'}}
w_{i}^{[\ell]}  w_{i'}^{[\ell']}  (\delta_i^\supell )^\top A_{\ell,\ell'} \delta_{i'}^{[\ell']}  \\
& = \sum_{\ell \neq \ell'} (\bar \delta^{[\ell]} )^\top A_{\ell,\ell'} \bar\delta^{[\ell']} \\
& = 0 \ant{\text{because $\bar \delta^\supell = 0$}},
\end{align*}
where $\nabla_{\epsilon\epsilon} F\big|_{\epsilon=0}$ 
only involves cross derivatives $\nabla_{\theta^\supell_i \theta^{[\ell']}_{i'}} \L(\vv\theta^{[1:n]}, \vv w^{[1:n]})$ with $\ell \neq \ell'$,
because all the terms with $\ell = \ell'$ are cancelled due to the correction terms. 
\end{proof}

\subsection{Proofs of $\infty$-Wasserstein Steepest Descent} 
\label{sec:wassappendix}
Recall that $p$-Wasserstein distance is 
$$
W_{p}(\rho, \rho')  = \inf_{\gamma\in \Pi(\rho,\rho')} \E_{(\theta,\theta')\sim \gamma}[\norm{\theta-\theta'}^p]^{1/p}.
$$
When $p\to +\infty$, we obtain $\infty$-Wasserstein distance, 
\begin{align}\label{equ:winftyapp}
W_\infty(\rho,\rho') = 
\inf_{\gamma \in \Pi(\rho,\rho')}
\esssup_{(\theta,\theta')\sim \gamma} 
\norm{\theta - \theta'}, 
\end{align}
where $\esssup$ denotes
essential supremum; 
it is the minimum value $c$  with $\gamma(\norm{\theta-\theta'} \geq c) = 0$.  

In the proof, we denote by $\gamma_{
\rho,\rho'}$ an optimal solution of  $\gamma$ in \eqref{equ:winftyapp}, that is, 
$$
\gamma_{\rho,\rho'}\in \arg\inf_{\gamma \in \Pi(\rho,\rho')}
\esssup_{(\theta,\theta')\sim \gamma} 
\norm{\theta - \theta'}. 
$$
$\gamma_{\rho,\rho'}$ is called an
$\infty$-Wasserstein optimal coupling of $\rho$ and $\rho'$.  
Denote by $\mu_{\rho,\rho'}(\theta)$ and $\Sigma_{\rho,\rho'}(\theta)$ the mean and covariance matrix of $(\theta'-\theta)$ under $\gamma_{\rho,\rho'}$, conditional on $\theta$, that is, 
\begin{align*}
\mu_{\rho,\rho'}(\theta) = \E_{\gamma_{\rho,\rho'}}[(\theta'-\theta)~|~\theta] 
&&
\Sigma_{\rho,\rho'}(\theta) = \cov_{\gamma_{\rho,\rho'}}\left [ (
\theta'-\theta)~|~\theta\right].
\end{align*}
It is natural to expect that we can upper bound the magnitude of both $\mu_{\rho,\rho'}(\theta)$ and $\Sigma_{\rho,\rho'}(\theta)$ by
the $\infty$-Wasserstein distance. 
\begin{lem}
Following the definition above, we have 
\begin{align*}
 \norm{\mu_{\rho,\rho'}(\theta)} 
\leq  W_{\infty}(\rho,\rho'), &&
\lambda_{max}(\Sigma_{\rho,\rho'}(\theta))  \leq  W_{\infty}(\rho,\rho')^2, 
\end{align*}
almost surely for $\theta\sim \rho$. 
\end{lem}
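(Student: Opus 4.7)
The plan is to exploit the fact that, by definition of an $\infty$-Wasserstein optimal coupling $\gamma_{\rho,\rho'}$, the pointwise bound $\norm{\theta-\theta'}\leq W_\infty(\rho,\rho')$ holds $\gamma_{\rho,\rho'}$-almost surely. Once this is in hand, both inequalities in the lemma reduce to standard Jensen/variance arguments applied to the conditional distribution of $\theta'$ given $\theta$. Everything is stated almost surely in $\theta\sim\rho$, which is consistent with disintegrating $\gamma_{\rho,\rho'}$ into the marginal $\rho$ and a conditional kernel $\gamma_{\rho,\rho'}(d\theta'\mid\theta)$.

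For the mean bound, I would apply Jensen's inequality to the conditional distribution:
\[
\norm{\mu_{\rho,\rho'}(\theta)} = \norm{\E_{\gamma_{\rho,\rho'}}[\theta'-\theta\mid\theta]} \leq \E_{\gamma_{\rho,\rho'}}[\norm{\theta'-\theta}\mid\theta] \leq W_\infty(\rho,\rho'),
\]
where the last step uses that the conditional essential supremum of $\norm{\theta'-\theta}$ is bounded by $W_\infty(\rho,\rho')$ for $\rho$-almost every $\theta$ (this essentially follows from Fubini applied to the event $\{\norm{\theta'-\theta}>W_\infty(\rho,\rho')\}$, which has zero mass under $\gamma_{\rho,\rho'}$).

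For the covariance bound, I would use the variational characterization of the largest eigenvalue and the fact that centering can only decrease a second moment. For any unit vector $v$,
\[
v^\top \Sigma_{\rho,\rho'}(\theta) v = \E_{\gamma_{\rho,\rho'}}\left[\bigl(v^\top(\theta'-\theta)\bigr)^2\,\big|\,\theta\right] - \bigl(v^\top\mu_{\rho,\rho'}(\theta)\bigr)^2 \leq \E_{\gamma_{\rho,\rho'}}\left[\norm{\theta'-\theta}^2\,\big|\,\theta\right] \leq W_\infty(\rho,\rho')^2,
\]
using Cauchy--Schwarz on $(v^\top(\theta'-\theta))^2\leq\norm{\theta'-\theta}^2$ and then the same conditional essential-supremum argument. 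Taking the supremum over unit $v$ yields the claim.

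The only mildly delicate step is justifying the conditional essential supremum bound, i.e.\ that $\gamma_{\rho,\rho'}(\norm{\theta'-\theta}>W_\infty(\rho,\rho')\mid\theta)=0$ for $\rho$-a.e.\ $\theta$. This follows from disintegration: if the set $E=\{(\theta,\theta')\colon\norm{\theta-\theta'}>W_\infty(\rho,\rho')\}$ has $\gamma_{\rho,\rho'}(E)=0$ by optimality, then $\int \gamma_{\rho,\rho'}(E_\theta\mid\theta)\,\rho(d\theta)=0$, forcing the conditional mass to vanish $\rho$-a.s. Strictly speaking an $\infty$-Wasserstein optimal coupling need not always be attained, but one can either invoke existence results for $\infty$-optimal plans (see Villani) or pass to a sequence of couplings achieving $\esssup\norm{\theta-\theta'}\to W_\infty(\rho,\rho')$; either way the same bounds transfer in the limit.
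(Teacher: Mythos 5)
Your argument matches the paper's proof essentially step for step: Jensen plus the conditional essential supremum bound for the mean, and the variational characterization of $\lambda_{\max}$ followed by bounding variance by second moment and Cauchy--Schwarz for the covariance. The extra care you take with disintegration and attainment of the $\infty$-optimal coupling is good hygiene but does not change the route; the paper leaves those points implicit.
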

\begin{proof} 
We have
$$
\norm{\mu_{\rho,\rho'}(\theta)} 
\leq \esssup_{\gamma_{\rho,\rho'}} \norm{\theta-\theta'} 
= W_{\infty}(\rho,\rho'), 
$$
almost surely for $\theta \sim \rho$. And 
\begin{align*} 
\lambda_{max}(\Sigma_{\rho,\rho'}(\theta))
& =  \max_{\norm{v}=1}\var_{\gamma_{\rho,\rho'}}\left [ v^\top (\theta' - \theta) ~|~\theta  \right ] \\
& \leq  \max_{\norm{v}=1}\E_{\gamma_{\rho,\rho'}}\left [ \left (v^\top (\theta' - \theta)\right)^2 ~|~\theta  \right ] \\
& \leq \esssup_{\gamma_{\rho,\rho'}} \norm{\theta-\theta'}^2  \\
& = W_{\infty}(\rho,\rho')^2. 
\end{align*}
\end{proof} 


\begin{thm} \label{thm:wasseroneorder}
Define $G_\rho(\theta) 
= \E_{x\sim\mathcal D}\left [\nabla \Phi\left(  \E_{\rho}[\sigma( \theta,x)]  \right) \nabla\sigma( \theta,x) \right]$. 
For two distributions $\rho$ and $\rho'$ and their $\infty$-Wasserstein optimal coupling $\gamma_{\rho,\rho'}$. 
We have 
\begin{align} \label{equ:Lrhodecomp}
\begin{split} 
\L[\rho'] 
& = \L[\rho] 
~+~
\E_{\theta\sim \rho}\left [
G_\rho(\theta) ^\top  \mu_{\rho,\rho'}(\theta) \right ]
~+~ \obig((\D_\infty(\rho,\rho')^2). 
\end{split}
\end{align}
\end{thm}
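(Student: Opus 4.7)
The plan is to expand $\L[\rho'] - \L[\rho]$ via two nested first-order Taylor expansions: first of $\Phi$ around $f_\rho(x)$, and then of $\sigma(\cdot, x)$ around $\theta$ inside the integrand that defines $f_{\rho'}(x)-f_\rho(x)$. Throughout, the bound $\norm{\theta'-\theta}\leq W_\infty(\rho,\rho')$ almost surely under the optimal coupling $\gamma_{\rho,\rho'}$ will be used to control all remainder terms.

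First, I would write
\begin{align*}
\L[\rho'] - \L[\rho] = \E_{x\sim\mathcal D}\!\left[\Phi(f_{\rho'}(x)) - \Phi(f_\rho(x))\right],
\end{align*}
and use the standard first-order Taylor expansion of $\Phi$:
\begin{align*}
\Phi(f_{\rho'}(x)) - \Phi(f_\rho(x)) = \Phi'(f_\rho(x))\,(f_{\rho'}(x)-f_\rho(x)) + R_1(x),
\end{align*}
where $|R_1(x)| = \obig((f_{\rho'}(x)-f_\rho(x))^2)$ under the boundedness assumptions. Next, using the coupling $\gamma_{\rho,\rho'}$ (whose marginals are $\rho$ and $\rho'$),
\begin{align*}
f_{\rho'}(x)-f_\rho(x) = \E_{(\theta,\theta')\sim\gamma_{\rho,\rho'}}\!\left[\sigma(\theta',x)-\sigma(\theta,x)\right],
\end{align*}
and I would Taylor-expand $\sigma$ in its first argument:
\begin{align*}
\sigma(\theta',x)-\sigma(\theta,x) = \nabla_\theta\sigma(\theta,x)^\top(\theta'-\theta) + R_2(\theta,\theta',x),
\end{align*}
with $|R_2(\theta,\theta',x)| = \obig(\norm{\theta'-\theta}^2)$. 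Because $\norm{\theta'-\theta}\leq W_\infty(\rho,\rho')$ almost surely under $\gamma_{\rho,\rho'}$, both $R_1$ and $R_2$ are controlled uniformly, and in particular $|f_{\rho'}(x)-f_\rho(x)| = \obig(W_\infty(\rho,\rho'))$, so $\E_x[|R_1(x)|]=\obig(W_\infty(\rho,\rho')^2)$.

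Substituting and swapping the order of expectation (over $x$ versus over $\gamma_{\rho,\rho'}$), the leading term becomes
\begin{align*}
\E_{(\theta,\theta')\sim\gamma_{\rho,\rho'}}\!\left[(\theta'-\theta)^\top \E_{x\sim\mathcal D}\!\left[\Phi'(f_\rho(x))\nabla_\theta\sigma(\theta,x)\right]\right] = \E_{(\theta,\theta')\sim\gamma_{\rho,\rho'}}\!\left[G_\rho(\theta)^\top(\theta'-\theta)\right],
\end{align*}
and conditioning on $\theta$ under $\gamma_{\rho,\rho'}$ gives the claimed $\E_{\theta\sim\rho}[G_\rho(\theta)^\top \mu_{\rho,\rho'}(\theta)]$. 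Collecting the remainders yields the $\obig(W_\infty(\rho,\rho')^2)$ error.

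The main obstacle is bookkeeping: making sure the remainders can be genuinely bounded by $W_\infty(\rho,\rho')^2$ rather than by some weaker quantity such as a $p$-Wasserstein distance with $p<\infty$. This is exactly where the earlier lemma $\norm{\mu_{\rho,\rho'}(\theta)}\leq W_\infty(\rho,\rho')$ and the almost-sure bound $\norm{\theta'-\theta}\leq W_\infty(\rho,\rho')$ enter: they let one pull the essential supremum out of the expectation and uniformly control $|R_1|+|R_2|$. Aside from this, the proof is purely a two-step Taylor argument combined with Fubini.
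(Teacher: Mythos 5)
Your proposal is correct and reaches the right conclusion, but it organizes the Taylor argument differently from the paper. The paper introduces the interpolation path $\theta_\eta = \eta\theta' + (1-\eta)\theta$ and performs a single scalar Taylor expansion of the map $\eta \mapsto \L[\rho_\eta]$ around $\eta=0$, computing the $\eta$-derivatives by the chain rule and controlling the second derivative (evaluated at an intermediate $\xi$, via the mean-value form of the remainder) by the almost-sure bound $\norm{\theta'-\theta}\leq W_\infty(\rho,\rho')$. You instead do two nested first-order expansions, one of $\Phi$ around $f_\rho(x)$ and one of $\sigma(\cdot,x)$ around $\theta$, and track two remainders $R_1,R_2$. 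Both routes rely on the same essential ingredients — Taylor's theorem, Fubini to swap $\E_x$ and $\E_\gamma$, and the uniform a.s. bound that $W_\infty$ provides — and both identify the same leading term $\E_{\theta\sim\rho}[G_\rho(\theta)^\top\mu_{\rho,\rho'}(\theta)]$. The paper's interpolation-path arrangement is slightly cleaner for bookkeeping because the single Lagrange remainder packages everything into one $\obig(\esssup\norm{\theta'-\theta}^2)$ term, and it generalizes directly to the second-order result (Theorem~\ref{thm:wasser}) by just going one order further in $\eta$; your nested version requires propagating the cross term $\Phi'(f_\rho)\E_\gamma[R_2]$ and the outer remainder $R_1$ separately, which you handle correctly but which would become more cumbersome at higher order. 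You correctly identify the crucial point that a $p$-Wasserstein bound with $p<\infty$ would not suffice, since one needs the essential supremum to control the remainders pointwise.
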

\begin{proof}
We write $\gamma = \gamma_{\rho,\rho'}$ for convenience. 
Denote by $
\nabla \sigma(\theta,x) = \nabla_{\theta} \sigma(\theta,x)$
and $\nabla^2 \sigma(\theta,x) = \nabla_{\theta\theta} ^2\sigma(\theta,x)$ the first and second order derivatives of $\sigma$ in terms of its first variable. 

For $(\theta,\theta')\sim \gamma$, introduce $\theta_\eta = \eta\theta' + (1-\eta)\theta$, whose distribution is  denoted by $\rho_\eta$. We have $\rho_0 = \rho$ and $\rho_1 = \rho'$. Taking Taylor expansion of $\L[\rho_\eta]$ w.r.t. $\eta$, we have 
$$
\L[\rho'] = \L[\rho] + \nabla_\eta \L[\rho_\eta] \bigg |_{\eta = 0} + \frac{1}{2}\nabla_{\eta\eta}^2 \L[\rho_\eta] \bigg |_{\eta = \xi},   
$$
where $\xi$ is a number between $0$ and $1$. We just need to calculate the derivatives.  
For the first order derivative, we have 
\begin{align*}
\nabla_\eta \L[\rho_\eta] \bigg |_{\eta=0} 
& =  \nabla_\eta 
 \E_{x\sim\mathcal D}\left [ \Phi\left(  \E_{\gamma}[\sigma(\eta\theta'+(1-\eta)  \theta,x)]  \right) \right ]\bigg |_{\eta=0}  \\
 & =  
 \E_{x\sim\mathcal D}\left [  \Phi'\left(  \E_{\gamma}[\sigma(\theta_\eta,x)]  \right) 
 \E_{\gamma}[\nabla\sigma(\theta_\eta,x) ^\top (\theta' - \theta) ] 
 \right ]\bigg |_{\eta=0}  \\
 & =  
 \E_{x\sim\mathcal D}\left [  \Phi'\left(  \E_{\gamma}[\sigma( \theta,x)]  \right) 
  \E_{\gamma}[ \nabla \sigma( \theta,x) ^\top (\theta' - \theta) ] 
 \right ] \\
 & = \E_{\gamma}[G_\rho(\theta)^\top(\theta' - \theta)] \\
 & = \E_{\rho}\left [
G_\rho(\theta) ^\top  \mu_{\rho,\rho'}(\theta) \right ],
\end{align*}
where we used the derivation of $G_\rho(\theta)$. 

For the second order derivative, we have 
\begin{align*} 
 \nabla_{\eta\eta}^2 \L[\rho_\eta] \bigg |_{\eta = \xi}    
& = 
\nabla_\eta(\nabla_{\eta} \L[\rho_\eta] )\bigg |_{\eta = \xi}   \\ 
& = \nabla_{\eta} \E_{x\sim\mathcal D}\left [  \Phi'\left(  \E_{\gamma}[\sigma(\theta_\eta,x)]  \right) 
 \E_{\gamma}[\nabla\sigma(\theta_\eta,x) ^\top (\theta' - \theta) ] 
 \right ] \bigg |_{\eta=\xi}   \\
& =  \E_{x\sim\mathcal D} \left [  \Phi''\left(  \E_{\gamma}[\sigma(\theta_\eta,x)]  \right) 
 (\E_{\gamma}[\nabla\sigma(\theta_\eta,x)  (\theta' - \theta) ])^2 
 \right ]  \\
 & ~~~~~~~~~~
 +~~\E_{x\sim\mathcal D}\left [\Phi'\left( \E_{\gamma}[\sigma(\theta_\eta,x)]  \right) 
 \E_{\gamma}[(\theta'-\theta)^\top\nabla^2 \sigma(\theta_\eta,x)  (\theta' - \theta) ] 
 \right ]  
 \bigg |_{\eta=\xi}  \\
 & = 
  \E_{\gamma}[(\theta'-\theta)^\top  T_\rho(\theta_{\xi}) (\theta' - \theta) ]  ~+~ 
 \E_{\gamma}[(\theta'-\theta)^\top S_\rho(\theta_\xi) (\theta' - \theta) ] \\
 & = 
  \E_{\gamma}[(\theta'-\theta)^\top \left ( T_\rho(\theta_\xi ) + S_\rho(\theta_\xi) \right )  (\theta' - \theta) ]
\end{align*}
where we define  $T_\rho(\theta_\xi) := 
\E_{x\sim\mathcal D} 
\left [  \Phi''\left(  \E_{\gamma}[\sigma(\theta_\xi,x)]   \right )
\nabla\sigma(\theta_\xi,x)^{\otimes 2}  \right ] $. 
Denote by $\lambda_* := \sup_{\xi \in [0,1]}\lambda_{max}(T_\rho(\theta_\xi) + S_\rho(\theta_\xi)).$
We have 
\begin{align*} 
 \nabla_{\eta\eta}^2 \L[\rho_\eta] \bigg |_{\eta = \xi} 
 & \leq  \lambda_* \E_\gamma \left [\norm{\theta' -\theta}^2\right ] \\
 & = \obig\left (\E_\gamma \left [\norm{\theta' -\theta}^2\right ]\right)   \\
 & = \obig\left (\left [\esssup_\gamma \norm{\theta' -\theta}\right ]^2\right)   \\
 & = \obig(\D_\infty(\rho,\rho')^2). 
 \end{align*}
This completes the proof. 
\end{proof}

\begin{thm} \label{thm:wasser}
 For two distributions $\rho$ and $\rho'$,
 denote by $\gamma_{\rho,\rho'}$ their $\infty$-Wasserstein optimal coupling, and $\mu_{\rho,\rho'}(\theta)$ 
and $\Sigma_{\rho,\rho'}(\theta)$ the mean and covariance matrix of $(\theta'-\theta)$ under $\gamma_{\rho,\rho'}$, conditional on $\theta$, respectively. 
Denote by $(I+\mu_{\rho,\rho'})\sharp \rho$ the distribution of $\theta +\mu_{\rho,\rho'}(\theta)$ when $\theta\sim \rho$. We have 
\begin{align} \label{equ:Lrhodecomp}
\L[\rho'] = \L\left [(I+\mu_{\rho,\rho'})\sharp \rho \right ]  
~+~
\E_{\theta\sim \rho} \left [ \frac{1}{2} \trace\big ( S_\rho(\theta)^\top 
\Sigma_{\rho,\rho'}(\theta) \big )\right ] 
~+~ \obig((\D_\infty(\rho,\rho'))^3) 
\end{align}
where $S_\rho(\theta) = \E_{x\sim \mathcal D} \left [ \Phi'(f_\rho(x)) \nabla_{\theta\theta}^2\sigma(\theta,x) \right ].$ 
The first and second terms capture the effect of displacement and splitting, respectively. 
 
\end{thm}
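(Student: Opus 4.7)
The plan is to reduce Theorem~\ref{thm:wasser} to two separate second-order Taylor expansions of $\L$ around $\rho$, one along an interpolation toward $\rho'$ and one along an interpolation toward $\tilde\rho := (I+\mu_{\rho,\rho'})\sharp \rho$, and then subtract them. I expect the zeroth- and first-order contributions, together with a ``$\Phi''$ cross term'' at second order, to match exactly and cancel, leaving only the splitting contribution $\frac{1}{2}\E_\rho[\trace(S_\rho^\top \Sigma_{\rho,\rho'})]$.

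Concretely, I would introduce the interpolations $\rho_\eta$, the law of $\theta + \eta(\theta'-\theta)$ with $(\theta,\theta')\sim\gamma_{\rho,\rho'}$, and $\tilde\rho_\eta$, the law of $\theta + \eta\,\mu_{\rho,\rho'}(\theta)$ with $\theta\sim \rho$, mimicking the proof of Theorem~\ref{thm:wasseroneorder}. Differentiating $h(\eta) := \E_\gamma[\sigma(\theta_\eta,x)]$ and $\tilde h(\eta) := \E_\rho[\sigma(\tilde\theta_\eta,x)]$ and using $\frac{d^2}{d\eta^2}\Phi(h) = \Phi''(h)(h')^2 + \Phi'(h)\,h''$, each second derivative produces two contributions: one via $\nabla^2\sigma$, giving an $S_\rho$-type quadratic form, and one via $\Phi''$ multiplying $(h'(0))^2$. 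The key identity is that $h'(0)=\tilde h'(0)=\E_\gamma[\nabla\sigma(\theta,x)^\top(\theta'-\theta)]$ by the tower property, since $\mu_{\rho,\rho'}(\theta)$ is the conditional mean of $\theta'-\theta$; hence the two $\Phi''$ cross terms are identical and cancel on subtraction. What survives is $\frac{1}{2}\E_\gamma[(\theta'-\theta)^\top S_\rho(\theta)(\theta'-\theta)]-\frac{1}{2}\E_\rho[\mu_{\rho,\rho'}^\top S_\rho(\theta)\mu_{\rho,\rho'}]$, and the decomposition $\E_\gamma[(\theta'-\theta)(\theta'-\theta)^\top\mid \theta]=\mu_{\rho,\rho'}\mu_{\rho,\rho'}^\top + \Sigma_{\rho,\rho'}$ converts this into the desired $\frac{1}{2}\E_\rho[\trace(S_\rho(\theta)^\top \Sigma_{\rho,\rho'}(\theta))]$.

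The main technical hurdle will be controlling the cubic remainders uniformly so they are genuinely $O(\D_\infty(\rho,\rho')^3)$ rather than a weaker $\D_p$-bound for some finite $p$. Optimality of $\gamma_{\rho,\rho'}$ in the $\infty$-Wasserstein sense gives $\norm{\theta'-\theta}\leq \D_\infty(\rho,\rho')$ $\gamma$-almost surely, and by Jensen $\norm{\mu_{\rho,\rho'}(\theta)}\leq \D_\infty(\rho,\rho')$ $\rho$-almost surely, so along both interpolations perturbations remain uniformly $O(\D_\infty(\rho,\rho'))$. Under bounded third-order derivatives of $\sigma$ and $\Phi$ on the relevant neighborhoods, each cubic term in the Taylor expansions of $\sigma$ and of $\Phi$ is then $O(\D_\infty(\rho,\rho')^3)$ pointwise, and dominated convergence yields the stated remainder bound. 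I would also note that any optimal $\infty$-Wasserstein coupling suffices: the splitting term depends on $\gamma_{\rho,\rho'}$ only through the disintegrations $\mu_{\rho,\rho'}$ and $\Sigma_{\rho,\rho'}$, which are well-defined.
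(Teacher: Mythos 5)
Your plan is correct and would yield a valid proof, but it is organized differently from the paper's. The paper does a \emph{single} second-order Taylor expansion of $\L$, interpolating along $\theta_\eta = \tilde\theta + \eta(\theta'-\tilde\theta)$ from $\tilde\rho := (I+\mu_{\rho,\rho'})\sharp\rho$ directly to $\rho'$; because $\E_\gamma[\theta'-\tilde\theta\mid\theta]=0$, both the first-order term and the $\Phi''$ cross term at second order vanish outright, leaving only the $\Phi'\nabla^2\sigma$ contribution. You instead perform \emph{two} expansions, both centered at $\rho$ (one toward $\rho'$, one toward $\tilde\rho$), and subtract; the first-order terms and the $\Phi''$ cross terms do not individually vanish but cancel on subtraction because $\E_\gamma[\nabla\sigma(\theta,x)^\top(\theta'-\theta)] = \E_\rho[\nabla\sigma(\theta,x)^\top\mu_{\rho,\rho'}(\theta)]$ by the tower property. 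Both routes work, but yours has a small technical advantage worth noting: since you expand around $\eta=0$ where $\theta_\eta=\theta$ and $\rho_\eta=\rho$, your second-order coefficients come out \emph{directly} as $\Phi'(f_\rho(x))\nabla^2_{\theta\theta}\sigma(\theta,x)$, i.e., as $S_\rho(\theta)$. The paper's expansion around $\tilde\rho$ literally produces $\Phi'(f_{\tilde\rho}(x))\nabla^2\sigma(\tilde\theta,x)$, and the implicit replacement of $S_{\tilde\rho}(\tilde\theta)$ by $S_\rho(\theta)$ (costing an additional $\obig(\D_\infty^3)$) is used without being explicitly flagged; your version sidesteps that step. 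Against this, the paper's single expansion is a bit shorter and requires controlling only one cubic remainder rather than two. Your handling of the remainder, via the $\gamma$-a.s.\ bound $\norm{\theta'-\theta}\le\D_\infty(\rho,\rho')$ and the consequent $\rho$-a.s.\ bound $\norm{\mu_{\rho,\rho'}(\theta)}\le\D_\infty(\rho,\rho')$, matches the paper's and is exactly the point at which finite-$p$ Wasserstein would fail.
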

 
\begin{proof}[\textbf{Proof of Theorem~\ref{thm:wasser}}]
We use $\gamma := \gamma_{\rho,\rho'}$ for notation convenience. 
Denote by $\tilde \theta = \theta + \mu_{\rho,\rho'}(\theta)$ and $\tilde \rho = (I+\mu_{\rho,\rho'})\sharp \rho$ the distribution of $\tilde \theta$ when $\theta\sim \rho$. 
Recall that for $(\theta,\theta') \sim \gamma$, we have  
\begin{align}\label{equ:zerodfjdif}
\E_\gamma\left [\theta' - \tilde \theta ~\bigg|~\theta \right ] = \E_{\gamma} [\theta'-\theta - \mu_{\rho,\rho'}(\theta)~|~\theta] = 0,
\end{align}
\begin{align}
    \label{equ:zerodfjdif2}
\Sigma_{\rho,\rho'}(\theta)=\E_{\gamma}\left [ (\theta'-\tilde \theta) (\theta' - \tilde \theta )^\top \bigg|~  \theta \right ]. 
\end{align}

Introduce $\theta_\eta = \eta\theta' + (1-\eta)\tilde \theta$. Denote by $\rho_\eta$ the distribution of $\theta_\eta$. This gives $\rho' = \rho_1$ and $\tilde \rho = \rho_0$.
 We have 
$$
\L[\rho'] =  \L[\tilde \rho] 
+ \nabla_\eta \L[\rho_\eta] \bigg |_{\eta=0}  
+ \frac{1}{2} \nabla_{\eta\eta}^2 \L[\rho_\eta] \bigg |_{\eta = 0} 
+   \frac{1}{6} \nabla_{\eta\eta\eta}^3 \L[\rho_\eta] \bigg |_{\eta = \xi},
$$
where $\xi$ is a number between $0$ and $1$. We just need to evaluate these derivatives. 
For the first order derivative, we have 
\begin{align*}
\nabla_\eta \L[\rho_\eta] \bigg |_{\eta=0} 
& =  \nabla_\eta 
 \E_{x\sim\mathcal D}\left [ \Phi\left(  \E_{\gamma}[\sigma(\eta\theta'+(1-\eta)\tilde \theta,x)]  \right) \right ]\bigg |_{\eta=0}  \\
 & =  
 \E_{x\sim\mathcal D}\left [  \Phi'\left(  \E_{\gamma}[\sigma(\theta_\eta,x)]  \right) 
 \E_{\gamma}[\nabla \sigma(\theta_\eta,x) ^\top (\theta' - \tilde\theta) ] 
 \right ]\bigg |_{\eta=0}  \\
 & =  
 \E_{x\sim\mathcal D}\left [  \Phi'\left(  \E_{\gamma}[\sigma(\tilde \theta,x)]  \right) 
  \E_{\gamma}[ \nabla \sigma(\tilde \theta,x) ^\top (\theta' - \tilde\theta) ] 
 \right ] \\
 & = 0, 
\end{align*}
where the last step uses \eqref{equ:zerodfjdif}. 
Here $\nabla \sigma$ denote the derivatives w.r.t. its first variables.  

For the second order derivative, we have 
\begin{align} 
& \nabla_{\eta\eta}^2 \L[\rho_\eta] \bigg |_{\eta = 0}    
 = 
\nabla_\eta(\nabla_{\eta} \L[\rho_\eta] )\bigg |_{\eta = 0}   \notag \\ 
& = \nabla_{\eta} \E_{x\sim\mathcal D}\left [ \Phi'\left(  \E_{\gamma}[\sigma(\theta_\eta,x)]  \right) 
 \E_{\gamma}[\nabla \sigma(\theta_\eta,x) ^\top (\theta' - \tilde\theta) ] 
 \right ] \bigg |_{\eta=0}  \notag  \\
& =  \E_{x\sim\mathcal D} \left [  \Phi''\left(  \E_{\gamma}[\sigma(\theta_\eta,x)]  \right) 
 (\E_{\gamma}[\nabla \sigma(\theta_\eta,x)  (\theta' - \tilde\theta) ])^2
 \right ] \notag  \\
 & ~~~~~~~~~~
 +~~\E_{x\sim\mathcal D}\left [ \Phi'\left( \E_{\gamma}[\sigma(\theta_\eta,x)]  \right) 
 \E_{\gamma}[(\theta'-\tilde\theta)^\top\nabla^2 \sigma(\theta_\eta,x)  (\theta' - \tilde\theta) ] 
 \right ]  
 \bigg |_{\eta=0} \label{equ:goshD2}  \\
 & = 0 
 ~~ +~~
 \E_{\gamma}[(\theta'-\tilde\theta)^\top S_\rho(\theta) (\theta' - \tilde\theta) ] \notag  \\
 & = \E_{\theta\sim \rho}[\trace(S_\rho(\theta)
 \Sigma_{\rho,\rho'} (\theta)) ]. \notag 
\end{align}
Further, 
we can show that $\nabla_{\eta\eta\eta}^3 \L[\rho_\eta] \bigg |_{\eta = \xi} = \obig(\D_\infty(\rho,\rho')^3)$, since when taking the third gradient, all the terms of the derivative are bounded by $\norm{\theta-\theta'}^3$. Specifically, taking the derivative of the form of  $\nabla_{\eta\eta}^2\Lm[\rho_\eta]$ in \eqref{equ:goshD2} gives 
$$
\begin{aligned} 
& \nabla_{\eta\eta\eta}^3 \L[\rho_\eta] \bigg |_{\eta = \xi}   \\ 
& =  \E_{x\sim\mathcal D} \left [ \Phi'''\left(  \E_{\gamma}[\sigma(\theta_\xi,x)]  \right) 
 (\E_{\gamma}[\nabla \sigma(\theta_\xi,x)  (\theta' - \tilde\theta) ])^3
 \right ]  \\
 & ~~~~~~~~~~  
  +~~3\E_{x\sim\mathcal D} \left [ \Phi''\left(  \E_{\gamma}[\sigma(\theta_\xi,x)]  \right) 
 \E_{\gamma}[\nabla \sigma(\theta_\xi,x)  (\theta' - \tilde\theta) ]
 \E_{\gamma}[(\theta'-\tilde \theta)^\top\nabla^2 \sigma(\theta_\xi,x)  (\theta' - \tilde\theta) ] 
 \right ]  \\
 & ~~~~~~~~~~
 +~~\E_{x\sim\mathcal D}\left [\Phi'\left( \E_{\gamma}[\sigma(\theta_\xi,x)]  \right)
 \E_{\gamma}[ \langle \nabla^3 \sigma(\theta_\xi,x), ~ (\theta'-\tilde \theta)^{\otimes 3}\rangle  
 ] 
 \right ]   \\  
 & = \obig\left (\esssup_{(\theta,\theta')\sim \rho} \norm{\theta'-\tilde \theta}^3\right) \\
 & = \obig(\D_\infty(\rho,\rho')^3). 
\end{aligned}
$$
Here we use the notation $\langle A, ~ v^{\otimes 3} \rangle =  \sum_{ijk=1}^d A_{ijk} v_i v_j v_k$. 
This completes the proof. 
\end{proof}

\begin{proof}[\textbf{Proof of Theorem~\ref{thm:wassdescent}}]
Following Theorem~\ref{thm:wasseroneorder}, we have
$$
\Delta^*(\rho,\epsilon) = \min_{\rho'} 
\left \{
\E_{\theta\sim \rho }\left [
G_\rho(\theta) ^\top \mu_{\rho,\rho'}(\theta) \right ] \colon ~ ~~
\D_{\infty} (\rho,\rho')
\leq  \epsilon \right \} + \obig(\epsilon^2). 
$$
For $\D_\infty (\rho,\rho')\leq \epsilon$, we must have $\norm{\mu_{\rho,\rho'}}\leq \epsilon$, and hence $\E_{(\theta,\theta')\sim \gamma_{\rho,\rho'}}\left [
G_\rho(\theta) ^\top \mu_{\rho,\rho'}(\theta) \right ] \geq - \epsilon\E_{\rho}[\norm{G_\rho(\theta)}]$ by Cauchy–Schwarz inequality. 
On the other hand, this minimum is achieved when $\mu_{\rho,\rho'} = -\epsilon G_\rho(\theta)/\norm{G_\rho(\theta)}$.  The only distribution $\rho'$ that satisfies this condition is $\rho' = (I - \epsilon G_\rho(\theta)/\norm{G_\rho(\theta)})\sharp \rho$. This proves Theorem~\ref{thm:wassdescent}a. 

For Theorem~\ref{thm:wassdescent}b, we need to use the result in Theorem~\ref{thm:wasser}, which yields, in the case of stable local optima, that 
$$
\Delta^*(\rho,\epsilon) = 
\min_{\rho'}\left \{\E_{\theta\sim \rho} \left [ \frac{1}{2} \trace\big ( S_\rho(\theta)^\top 
\Sigma_{\rho,\rho'}(\theta) \big )\right ]  
\colon ~ ~~
\D_{\infty} (\rho,\rho')
\leq  \epsilon 
\right\}~+~ \obig(\epsilon^3). 
$$
Similar to the argument above, the minima  should satisfy  
 $\Sigma_{\rho,\rho'}(\theta) \propto v_{min} v_{min}^\top$, where $v_{min}$ is the eigenvector of $S_{\rho}(\theta)$ associated with its minimum eigenvalue. 
 This corresponds to splitting $\theta$ into two copies with each weights with parameter $\theta \pm \epsilon v_{min}$ when $\lambda_{min} <0$, or keep $\rho$ unchanged when $\lambda_{min}>0$.  
\end{proof}

\clearpage\newpage

\section{Experimental Settings and Additional Results}

\subsection{Two-Layer RBF Neural network}  
\label{sec:app_two_layer_rbf_nn}

We consider fitting a simple radial basis function (RBF) neural network of form 
$$
f(x) = 
\sum_{i=1}^{m}  \sigma(\theta_i, x), 
~~~~~~~~\sigma(\theta, x) := \theta_{i,3} \times \exp\left (-\frac{1}{2}(\theta_{i,1}x + \theta_{i,2})^2 \right),
$$
where $x\in \RR$ and $\theta_i = [\theta_{i,1}, \theta_{i,2},\theta_{i,3}]^\top \in \RR^{3}$.  
For the ground truth, we set $m = 15$ and sample the true values of parameters 
$\{\theta_i\}$ 
from $\normal(0, 3)$, yielding the light blue curves shown in Figure~\ref{fig:app_nn_toy}.  
We generate a training data set $\mathcal D:= \{x_i, y_i\}_{i=1}^{1000}$ by drawing $x_i$ from $\mathrm{Uniform}[-5,5]$  and set $y_i =f_*(x_i)$ without noise, where $f_*$ denotes the true network we sampled. 
%
The network is trained by minimizing the mean square loss: 
$$
\min_{f} \E_{x
\sim \mathcal D} \left [(f_*(x) - f(x))^2 \right ]. 
$$
Mapping to \eqref{equ:LLtheta}, we have $\Phi(f) = (f_*-f)^2$. 
We learn the function using our splitting method and other progressive training baselines, all starting from $m=1$ neuron. 
We add one additional neuron in each splitting/growing phase for all the methods.
The parametric descent phase is performed using typical stochastic
gradient descent until convergence.
We stop the splitting process at $m = 8$ for all the methods. 
Figure \ref{fig:app_nn_toy} shows 
curves learned by different methods 
with $m=3$ and $m=8$ neurons, respectively.
Our method yields better approximation.
\todo{but the case when $m=1$ should be the same across all the methods?}
\todo{how about having two plots, showing the curves of different method when $m=3$ and $m=8$, respectively?}
\todo{better line styles and legend fix ``$f(x)$'' to "True"} 
\todo{Without any other implementation details, it gives impression of careless. Given some more implementation details? 
Stepsize, iteration, stopping criteria, etc?
}


\begin{figure*}[ht]
\centering
\setlength{\tabcolsep}{1pt}
\scalebox{1.18}{
\begin{tabular}{cc}
\raisebox{4.3em}{\rotatebox{90}{\small $y$}}
\includegraphics[height =0.22\textwidth]{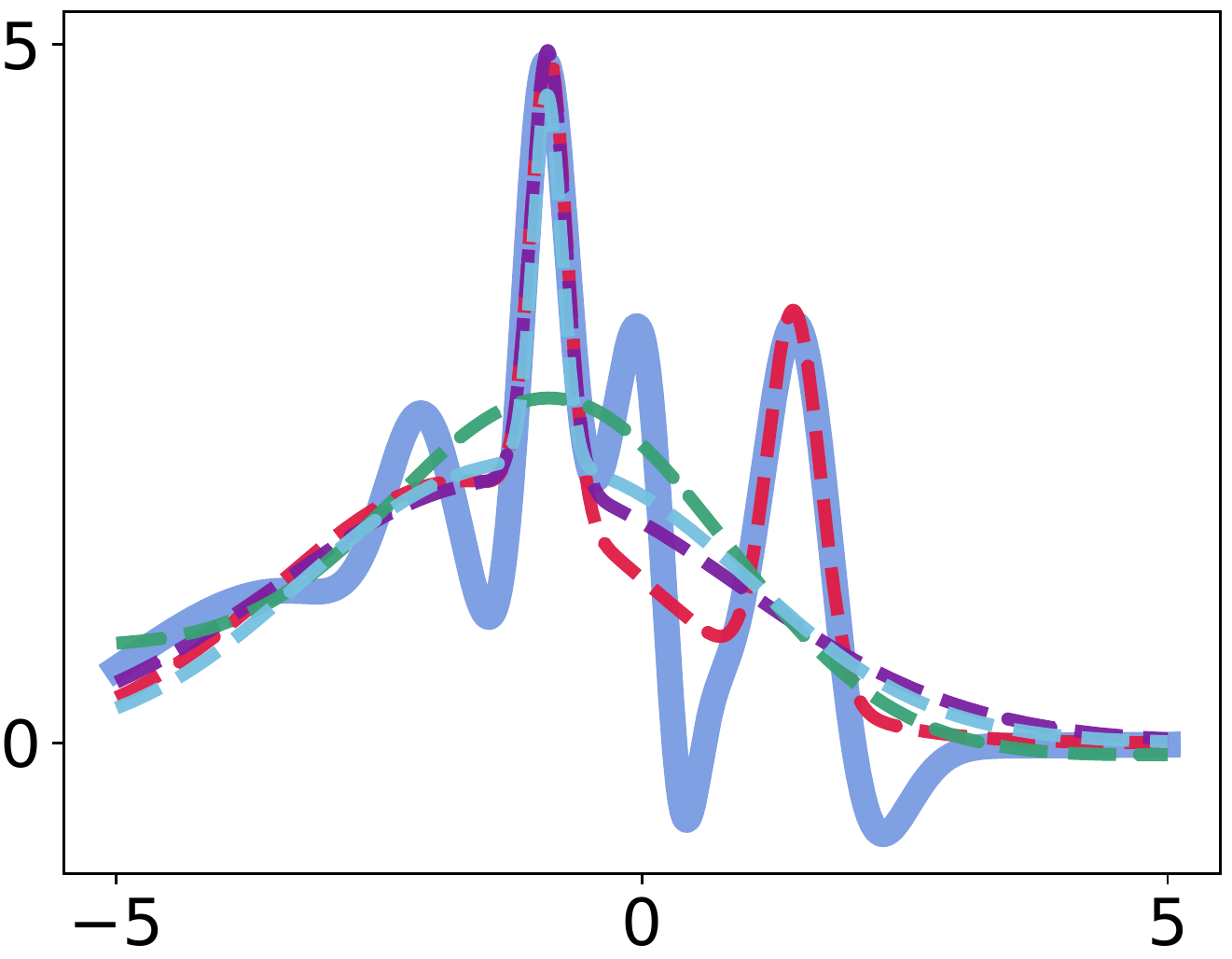} &  ~
\includegraphics[height =0.22\textwidth]{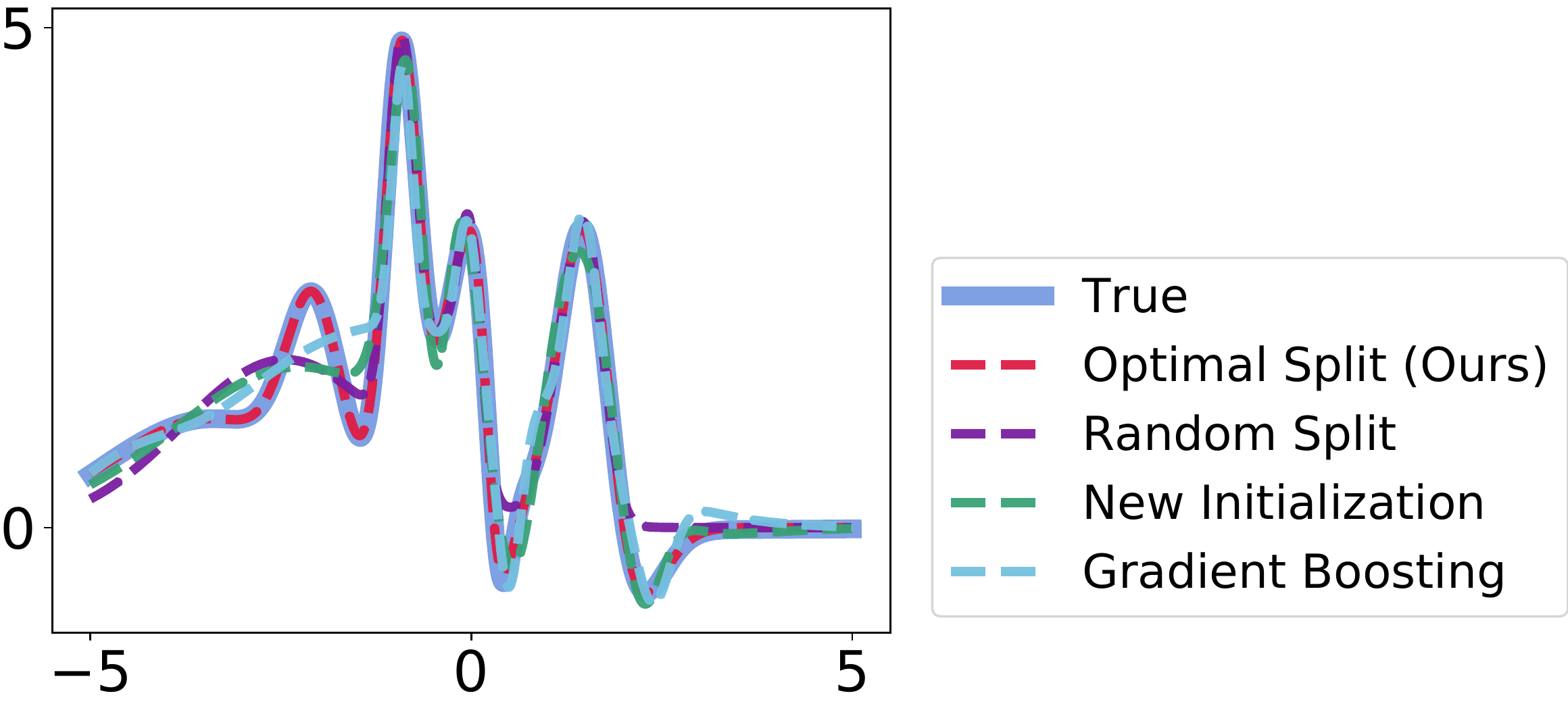}  \\ 
{\scriptsize $x$} & \hspace{-8em}{\scriptsize $x$}  \\ 
{\scriptsize (a) $m = 3$} & {\scriptsize \hspace{-12em} (b) $m = 8$}    \\
\end{tabular}
}
\caption{\small Results on the toy RBF neural network. }
\label{fig:app_nn_toy}
\end{figure*}

\clearpage \newpage
\subsection{Learning Interpretable Neural Network} 
\label{sec:app_explainable_nn}
We provide more details on learning the interpretable neural network.

\paragraph{Setting}
We adopt the interpretable neural architecture proposed in \citet{li2018deep} as our testbed.
Unlike standard black-box neural networks, 
this architecture contains a 
 special prototype layer in the classifier, 
 which includes a set of prototype neurons that are enforced to encode to realistic images for promoting interpretability.  
 In this model, each input image $x$ is first mapped to a lower-dimensional representation based on its distance $\norm{ \theta - e(x)}$ with a set of prototype vectors, 
 where $\theta\in \R^{40}$ represents a prototype vector and  $e(x)$ is an encoder function.  
  The prototype vectors are enforced to be interpretable in that they can be decoded to some realistic images; this is achieved in \citet{li2018deep} by introducing a regularization term that minimizes the minimum square distance between the prototypes and the training data, that is, 
 {$\min_{i}\norm{\theta -e(x_i)}$}, where $\{x_i\}$ denotes the training dataset. 
 
 We apply our  method to split the prototype neurons, 
 by treating $\sigma(\theta,x) := \norm{\theta- e(x)}$ as the activation function. 
 We use the MNIST dataset in our experiment. 
We visualize the prototype neurons we learned using the images that they encode,  by feeding the prototype vectors $\theta$ into a decoder function jointly trained  with the network. 
We use the same encoder and decoder architectures, as suggested in \citet{li2018deep} and 
refer the reader to \citet{li2018deep} for more implementation details. 
To better understand the splitting dynamics, we start with a  small network with just one prototype neuron and  
gradually add more prototypes via splitting. 

We  compare our method with  two baseline methods, 
 {\tt New Initialization} and {\tt Random Split}, that also progressively grow the prototype layers starting from one prototype neuron.   
In {\tt New Initialization}, we simply add one new prototype neuron with random initialization at each iteration. 
In {\tt Random Split}, we randomly pick a prototype neuron to split and split it following its splitting gradient given by our splitting matrix.
Figure~\ref{fig:app_mnistvisual} visualizes the full splitting/growing process of our method and the two baselines. 
We can see that our splitting method successfully identifies the most ambiguous (and least interpretable) prototype neurons to split at each iteration, and achieves the best final results.

\begin{figure*}[t]
\centering
\begin{tabular}{c}
Optimal Split (Ours) \\
\includegraphics[width =0.7\textwidth]{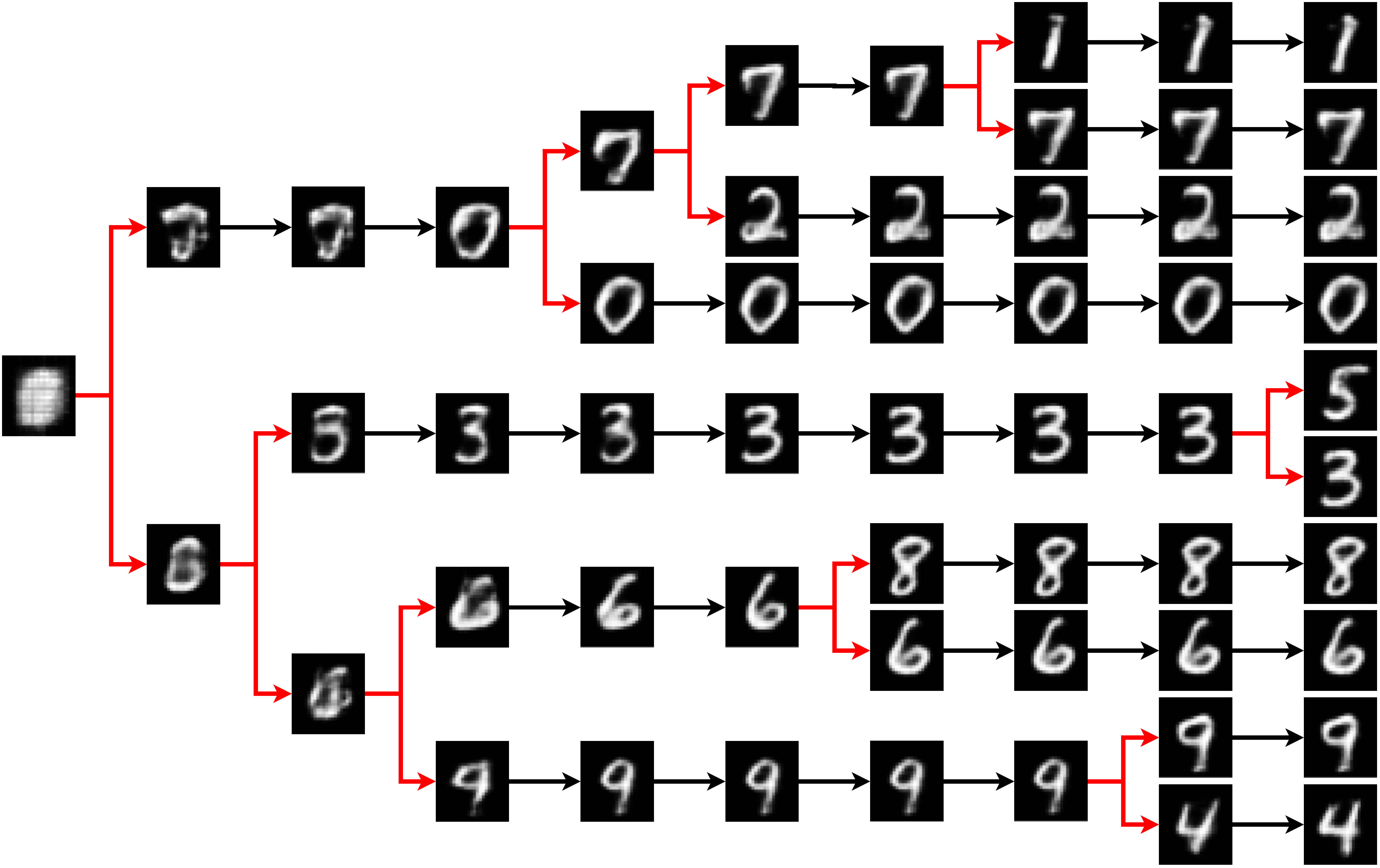} \\\\
Random Split \\
\includegraphics[width =0.7\textwidth]{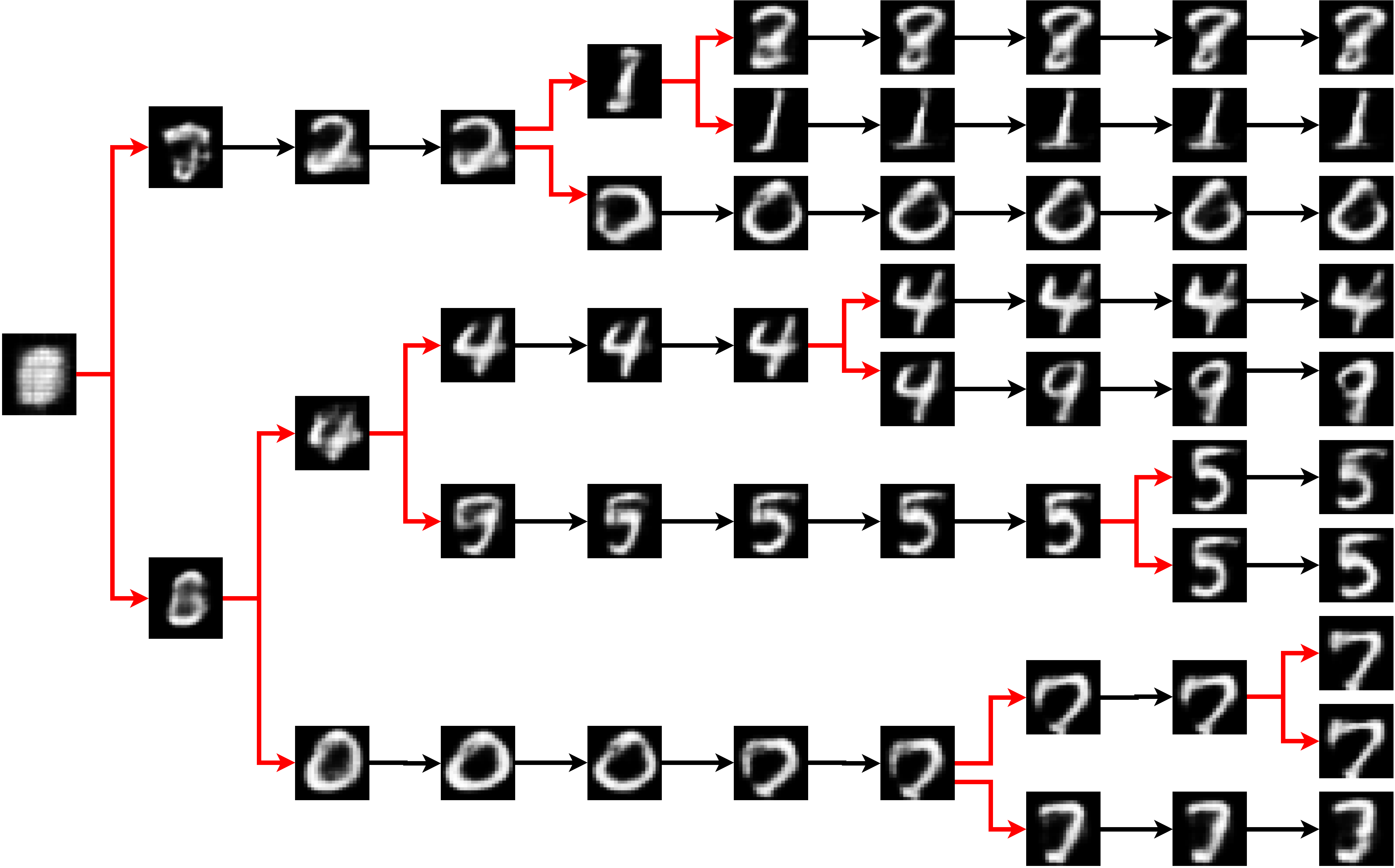} \\\\
New Initialization \\
\includegraphics[width =0.7\textwidth]{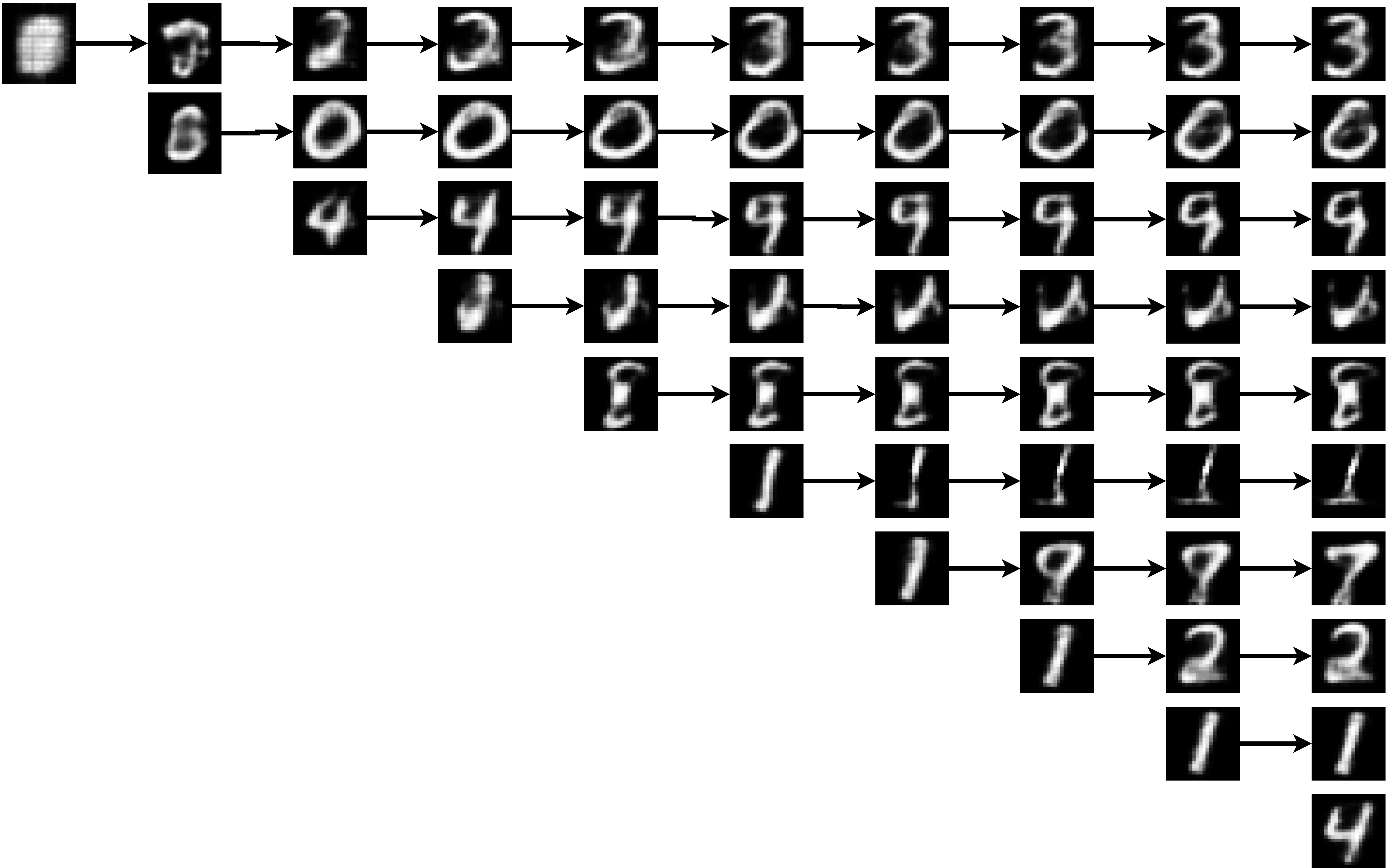} \\\\
\end{tabular}
\caption{Visualizing the growing process of the prototype neurons given by our splitting method and the two baselines.}
\label{fig:app_mnistvisual}
\end{figure*}

\clearpage \newpage
\subsection{Lightweight Neural Architectures for Image Classification}
\label{sec:image_classfication}
We 
describe details of our experiments 
on learning lightweight deep networks for image classification.

\paragraph{Dataset and Backbone Networks}
We use the CIFAR-10 benchmark dataset.
We adopt a standard data argumentation scheme (mirroring and shifting) that is widely used for this dataset \citep{liu2018rethinking, liu2017learning}.
The input images are normalized using channel means and standard derivations.
We use two popular deep neural architectures as our testbed,  MobileNet~\citep{howard2017mobilenets} and  VGG19~\citep{simonyan2014very}.

\paragraph{Training Settings}
We treat the {filters} as the neurons to split for convolutional neural networks. 
For example, consider a convolutional layer with $n_{out} \times  n_{in} \times k \times k$ parameters, where $n_{out}$ denotes the number of output channels and $n_{in}$ the number of input channels and $k$ the filter size. 
We treat it as $n_{out}$ neurons, and each neuron has a parameter of size $n_{in} \times k \times k$. 
To apply our methods, 
we start with a small variant of the MobileNet and VGG19, 
and gradually grow the network by splitting the (convolutional) neurons with the most negative splitting indexes following Algorithm \ref{alg:main}.
For MobileNet, we construct the initial network by keeping the size of the 
first convolution layer as the same (=32) as the original MobileNet
and setting the number of depthwise and pointwise channels to be 16.
For VGG19, we set the number of {channels} of the initial network to be 16 for all layers.

For the parametric descent phase, we use stochastic gradient descent
with an initial learning rate $0.1$ for 160 epochs.  
The learning rate is divided by 10 at 50\% and 75\%
of the total number of training epochs. 
We use a weight decay of $10^{-4}$ and a Nesterove momentum of 0.9 without dampening.
The batch size is set to be 64. 
In each splitting phase, 
we increase the number of channels by a percentage of 30 using our method.

Note that our splitting matrix (see Eq.~\ref{equ:sell}) involves the second-order
derivative of the activation function, which is not well defined for ReLU activation. 
Therefore, we replace the ReLU activation with Softplus to prevent numerical issues in calculating the splitting matrices. We also apply Softplus in the other experiments that contain ReLU activation function in the network.

\paragraph{Pruning}
We compare with two model pruning algorithms:  
the batch-normalization-based pruning (Bn-prune) by \citet{liu2017learning} 
and the L1-based pruning (L1-prune) by \citet{li2016pruning}.  
Bn-prune imposes L1-sparsity on the channel-wise scaling factors in the batch normalization layers during training,  and prunes channels with lower scaling factors afterwards. 
L1-prune removes the filters with weights of small L1-norm in each layer.  
%
For both pruning baselines, we use the implementation provided by \citet{liu2018rethinking}. 
For Bn-prune, we set the sparsity term to be 0.0001 for all the cases.
We initial both pruning methods from a full-size backbone network (MobileNet and VGG19) that we trained starting from scratch. 
After each pruning phase, the parameters of the pruned network are finetuned starting from the previous values using stochastic gradient descent, following the same setting as that we use in splitting steepest descent.

\paragraph{Finetuning vs. Retraining}
In both the splitting and pruning methods above, 
the parameters of the split/pruned networks are successively  \emph{finetuned} starting from  the previous values.  
In order to test the performance of the network architectures 
given by both splitting and pruning methods, 
we test another setting in which we \emph{retrain}
the network parameters after each splitting/pruning step, that is, we discard all the parameters of the network, and retrain the whole network starting from a random initialization, under the network structure obtained from splitting or pruning at each iteration.   
As shown in Figure~\ref{fig:cifar10}c-d, the results of retraining 
is comparable with (or better than) the result of successive finetuning in Figure~\ref{fig:cifar10}a-b, which is consistent with the findings in \citet{liu2018rethinking}.

\subsection{Resource-Efficient Keyword spotting}
\label{sec:app_kws}
We apply our methods on the application of keyword spotting.
Keyword spotting systems aim to detect a particular set of keywords 
from a continuous stream of audio, which is typically deployed on a wide range of edge devices with resource constraints. 

\paragraph{Dataset and Training Settings} 
We use the Google speech commands benchmark dataset \citep{warden2018speech} for comparisons.
We are interested in the setting that the model size is limited to less than 500K
and adopt the optimized architectures with tight resource constraints provided in \citet{zhang2017hello}
as our baselines.
For fair comparison, 
we closely follow the experimental settings described in \citet{zhang2017hello}.
We split the dataset into 80/10/10\% for training, validation and test, respectively.


We start with a very narrow network and progressively grow it using splitting steepest descent. 
We build our initial narrow network based on the DS-CNN architecture proposed in \citet{zhang2017hello}, by reducing the number of channels in each layer to 16. 
The backbone DS-CNN model consists of one regular convolution layer and five depthwise and pointwise convolution layers \citep{howard2017mobilenets}. We refer the reader to \citet{zhang2017hello}
for more information. 
At each splitting stage, 
we increase the number of channels by a percentage of 30\% using the approach
described in Algorithm~\ref{alg:main}.
We use the same hyper-parameters for training and evaluation as in \citet{zhang2017hello}.

\subsection{Splitting Steepest Descent for Minimizing MMD}  
\label{sec:toy_mmd}
We consider the problem of data compression. 
Given a large set of data points $\{\theta_i^*\}_{i=1}^N$, we want to find a smaller set of points $\{\theta_i\}_{i=1}^n$, 
equipped with a set of importance weights $\{w_i\}_{i=1}^n$, 
to approximate the larger dataset. This problem can be solved by minimizing maximum mean discrepancy (MMD) \citep{gretton2012kernel} using 
 conditional gradient method (a.k.a. Frank-Wolfe), 
an algorithm known as \emph{herding} 
\citep{chen2012super, bach2012equivalence}.  
In this section, we provide additional results on using splitting steepest descent to 
minimize MMD by progressively introducing new points via splitting.  

Denote by $\rho_* = \sum_{i=1}^N \delta_{\theta_i^*}/N$ the empirical distribution of the original dataset,  
and $\rho = \sum_{i=1}^n w_i \delta_{\theta_i}$ the (weighted) empirical distribution of the compressed data.
 Let $k(\theta,\theta')$ be a positive definite kernel, which can be represented using a random feature expansion of form 
 $$
 k(\theta,\theta') = \E_{x\sim \pi}[\sigma(\theta, x) \sigma(\theta', x)], 
 $$ 
  where $\sigma(\theta, x)$ is a feature map index by an auxiliary variable $x$, 
 and $\pi$ is a distribution on $x$.  
 The $\sigma(\theta, x)$ can be taken to be the cosine function for commonly used kernels such as RBF kernel; see \citet{rahimi2007random} 
 for more information on random feature expansion.   
Then the MMD between $
\rho$ and $
\rho^*$, with kernel $k(\theta,\theta')$, can be written into 
\begin{align} 
\MMD(\rho,\rho_*) 
& = \E_{\rho,\rho_*}[k(\theta, \theta') - 2k(\theta, \theta_*') + k(\theta_*, \theta_*') ] \notag \\
& = \E_{x\sim \pi}[(\E_{\theta\sim \rho}[\sigma(\theta, x)] - \E_{\theta_* \sim \rho_*}[\sigma(\theta_*, x)])^2],  \label{equ:herdingsquare}
\end{align}
where $\theta,\theta'$ are i.i.d. drawn from $\rho$ and $\theta_*, \theta_*'$ are i.i.d. drawn from $\rho_*$. The data compression problem can be viewed as minimizing the MMD: 
$$
\min_{\rho} \left \{ \L[\rho] := \MMD(\rho,\rho_*)  \right \}. 
$$
From \eqref{equ:herdingsquare}, 
this minimization can be viewed as 
performing least square regression 
on a one-hidden-layer neural network $f_\rho(x) = \E_{\theta\sim \rho}[\sigma(\theta,x)],$ where each data point $\theta_i$ is viewed as a neuron.  
Therefore, splitting steepest descent can be applied to minimize the loss function. 
This allows us to start with a small number of  data points (neurons), and gradually increase the number of points by splitting. 
The splitting matrix of $\L[\rho]$ is 
\begin{align*}
    S_\rho(\theta)
    & = 2\E_{x\sim \pi}\left [\left (\E_{\theta'\sim  \rho}[\sigma(\theta', x)] -  \E_{\theta_* \sim \rho_*}[\sigma(\theta_*, x)] \right )\nabla_{\theta\theta}^2 \sigma(\theta, x)\right ] \\
    & = 2\E_{\theta'\sim \rho,\theta_*\sim \rho_*}\left[\nabla_{\theta\theta}^2 k(\theta, \theta') - \nabla_{\theta\theta}^2 k(\theta, \theta_*) \right ].
\end{align*}

We apply splitting steepest descent ({\tt Optimal Split}) in 
Algorithm~\ref{alg:main} 
{starting from a single point (neuron)}.
We compare our method with {\tt Random Split}, {\tt Gradient Boosting} (a.k.a. Frank-Wolfe or herding),   
{\tt New Initialization}. 
In {\tt Random Split}, we randomly pick a point to split, and split it following its splitting gradient direction.  
In  {\tt Gradient Boosting}, 
a new point is introduced greedily at each iteration by minimizing the MMD loss, with all the previous points fixed. 
In {\tt {\tt New Initialization}},
a new random point is introduced and co-optimized together with all the previous points at each iteration.  

In our experiment, 
we construct $\rho_*$ by drawing  
an i.i.d. sample of size  $N=1000$ from a one-dimensional Gaussian mixture model  
$0.2\normal( -2, 0.5) + 0.3 \normal(1., 0.5) + 0.5 \normal(3, 0.5)$ as ground truth. 
We initialize all the methods from a same point drawn from $\mathrm{Uniform}[-5, -3]$, and add a new point in each splitting/growing phase. 
The parametric descent phase is performed using the adagrad optimizer with a constant learning rate $0.01$ for all the methods.  

Figure~\ref{fig:app_toy_mmd_split} plots the training dynamics of all the methods. 
The size of each dot represents the particle weight. 
Note that in {\tt Optimal Split} and {\tt Random Split}, 
each off-spring shares half of the weights of their parent points, but 
in \texttt{New Initialization} and \texttt{Gradient Boosting}, all the points evenly divide the weights all the time.

\begin{figure*}[ht]
\centering
\setlength{\tabcolsep}{1pt}
\begin{tabular}{ccccc}
\raisebox{0.3em}{\rotatebox{90}{\scriptsize \tt Optimal Split}}
\includegraphics[width =0.18\textwidth]{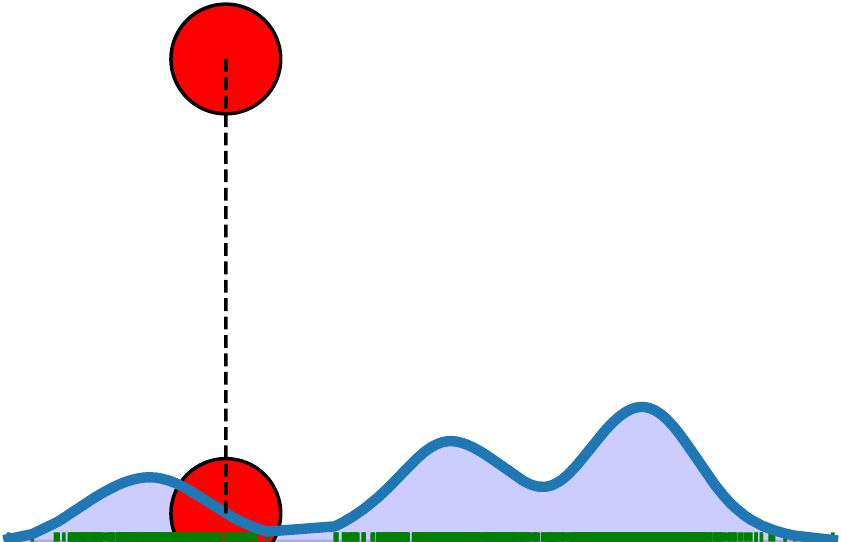} &
\includegraphics[width =0.18\textwidth]{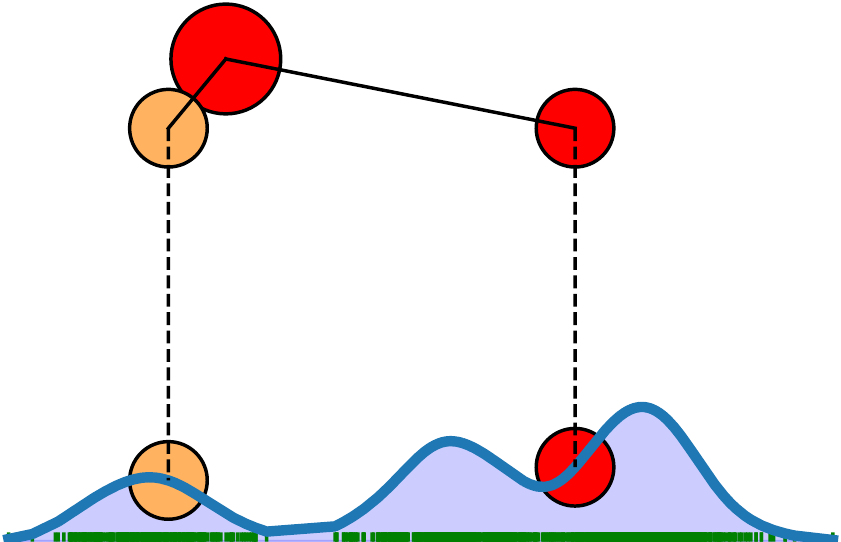} &
\includegraphics[width =0.18\textwidth]{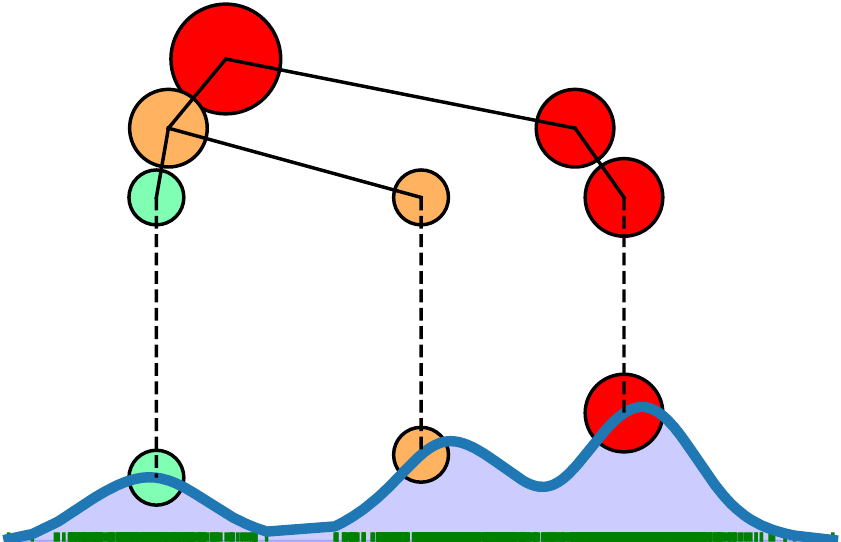} &
\includegraphics[width =0.18\textwidth]{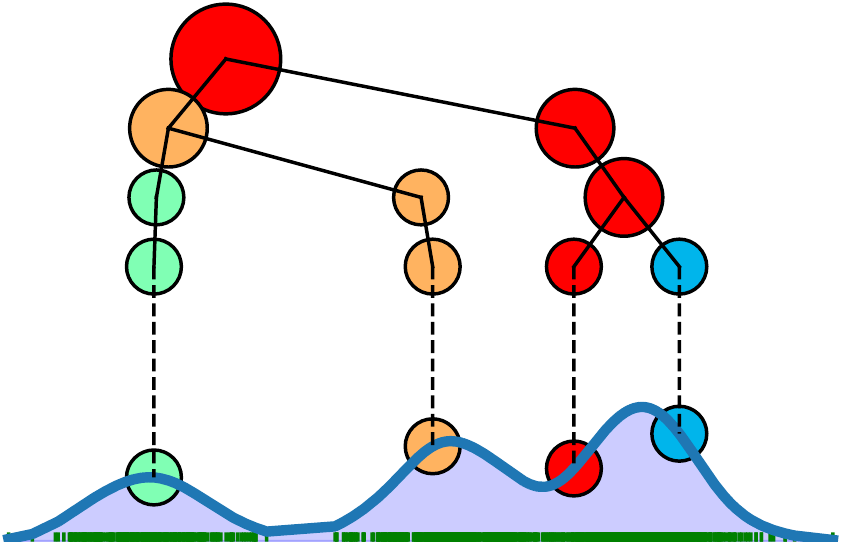} &
\includegraphics[width =0.18\textwidth]{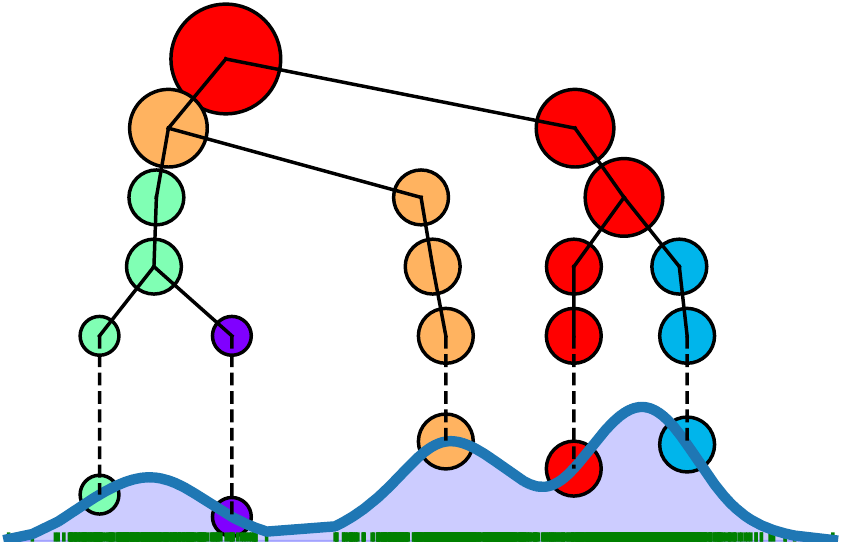} \\
\raisebox{0.3em}{\rotatebox{90}{\scriptsize \tt Random Split}}
\includegraphics[width =0.18\textwidth]{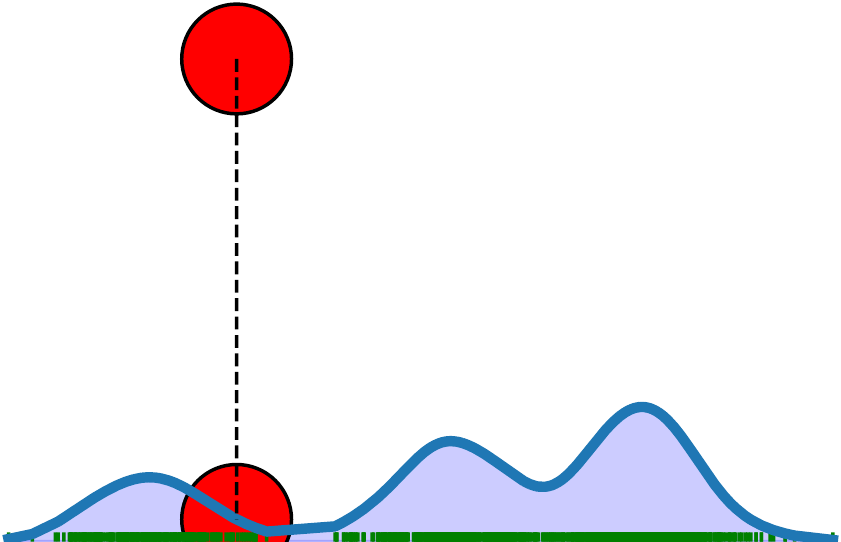} &
\includegraphics[width =0.18\textwidth]{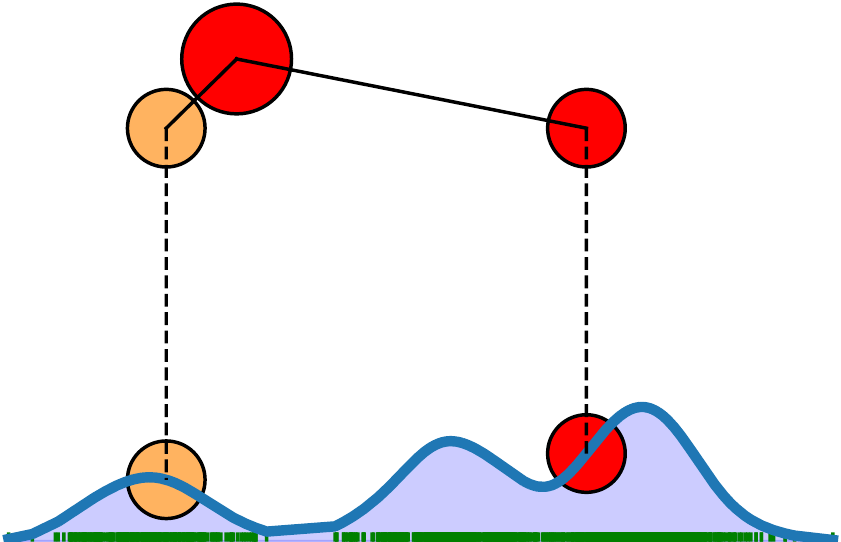} &
\includegraphics[width =0.18\textwidth]{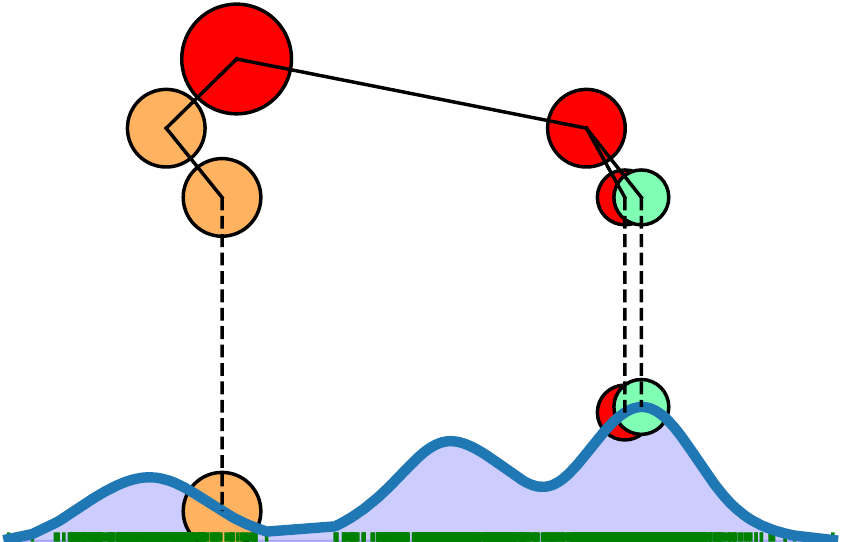} &
\includegraphics[width =0.18\textwidth]{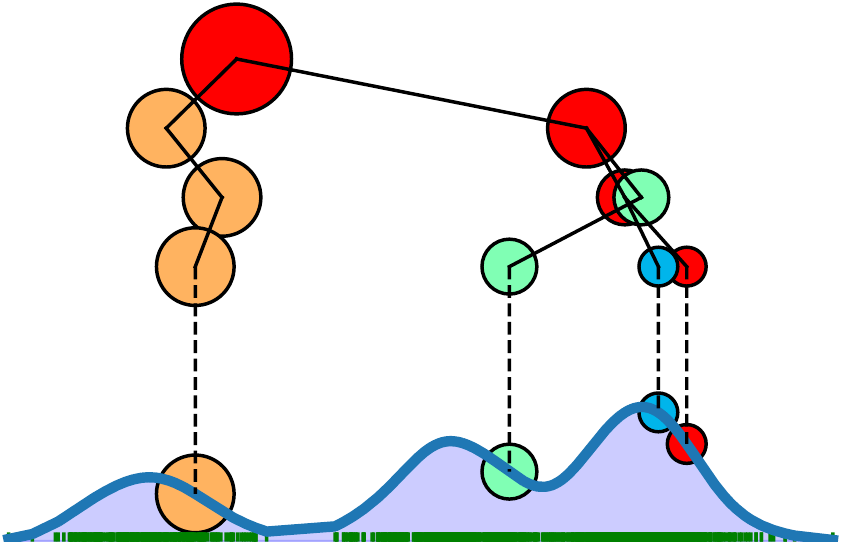} &
\includegraphics[width =0.18\textwidth]{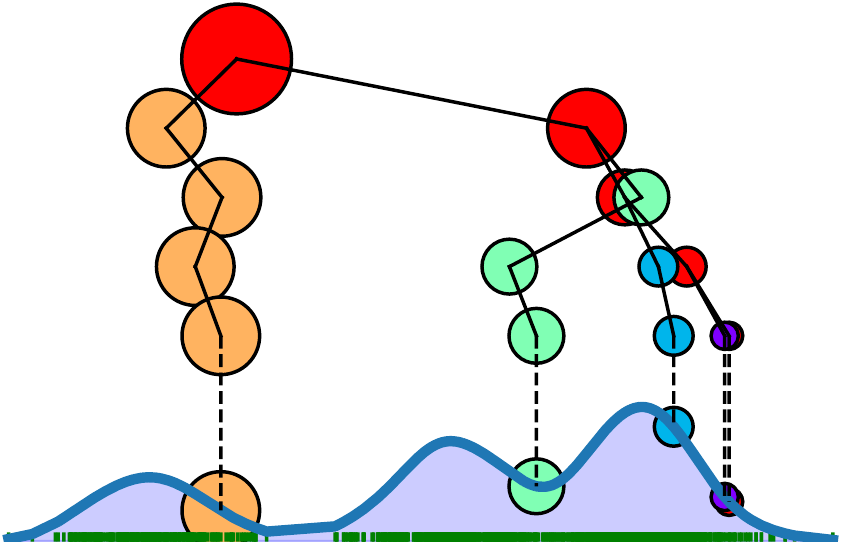} \\
\raisebox{0.3em}{\rotatebox{90}{\scriptsize \tt New Initialization}}
\includegraphics[width =0.18\textwidth]{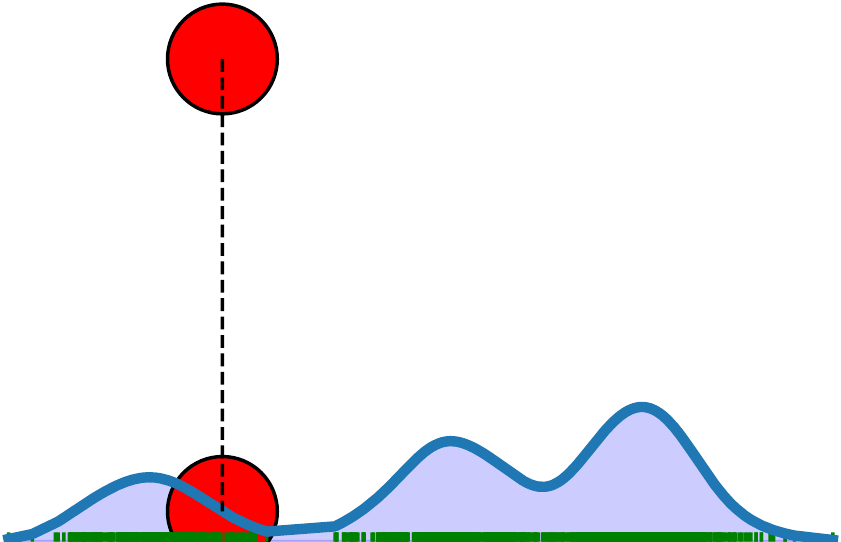} &
\includegraphics[width =0.18\textwidth]{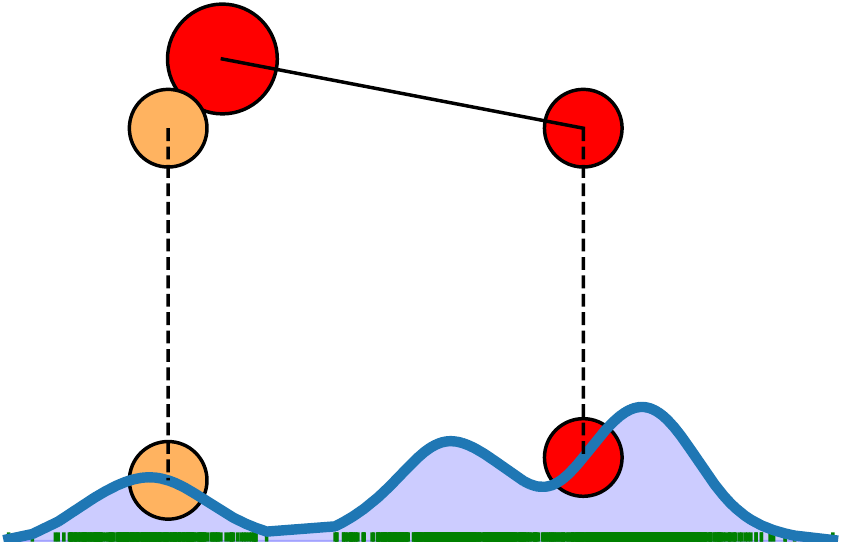} &
\includegraphics[width =0.18\textwidth]{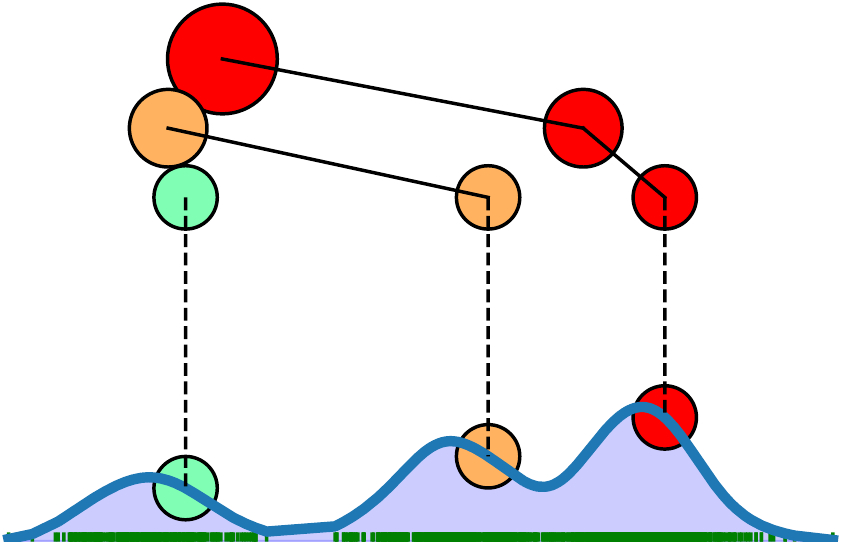} &
\includegraphics[width =0.18\textwidth]{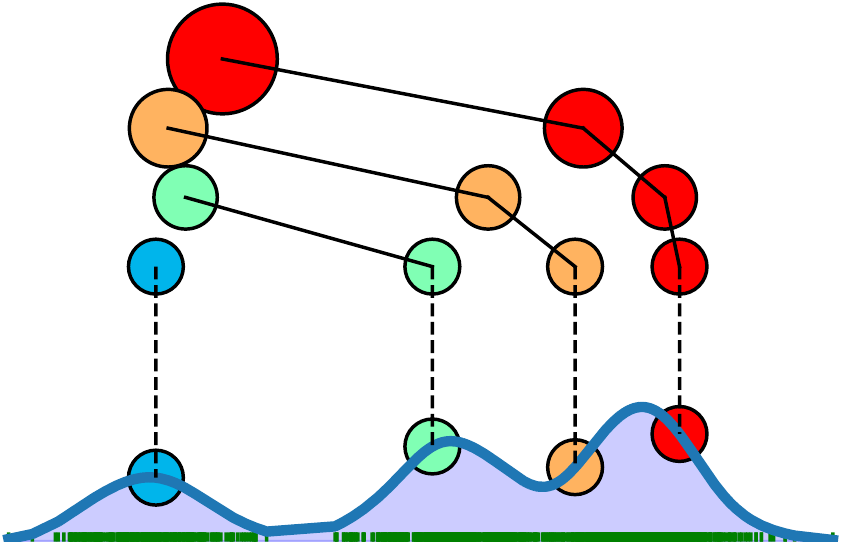} &
\includegraphics[width =0.18\textwidth]{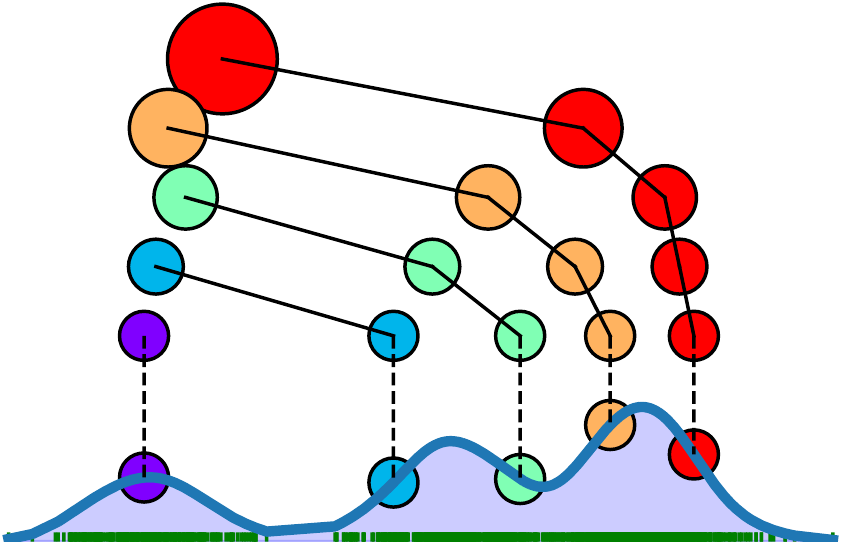} \\
\raisebox{0.1em}{\rotatebox{90}{\scriptsize \tt Gradient Boosting}}
\includegraphics[width =0.18\textwidth]{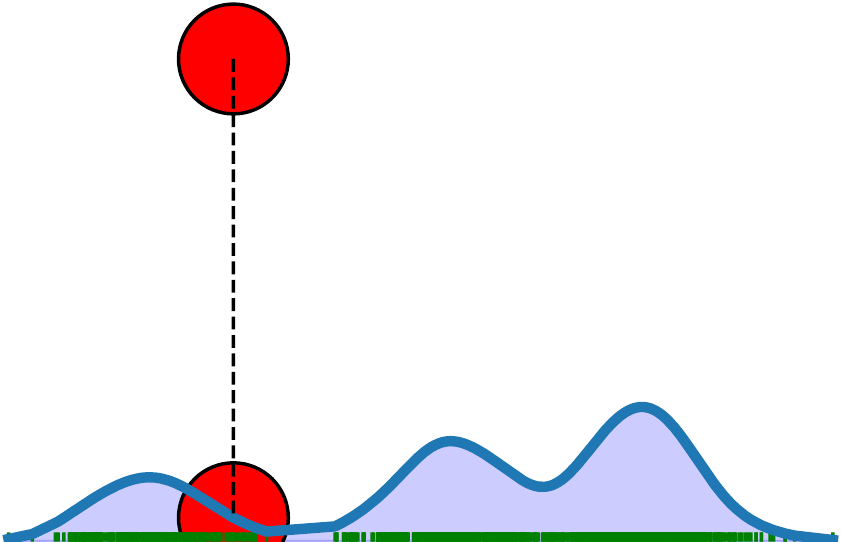} &
\includegraphics[width =0.18\textwidth]{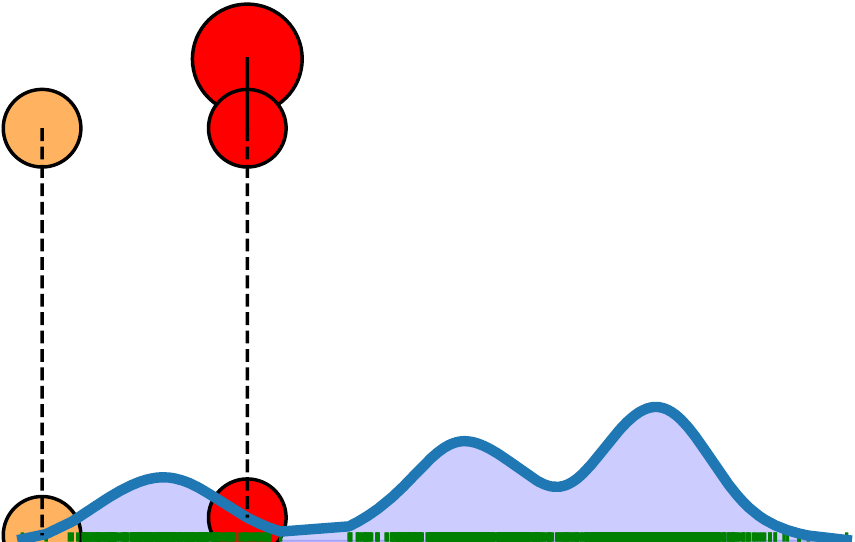} &
\includegraphics[width =0.18\textwidth]{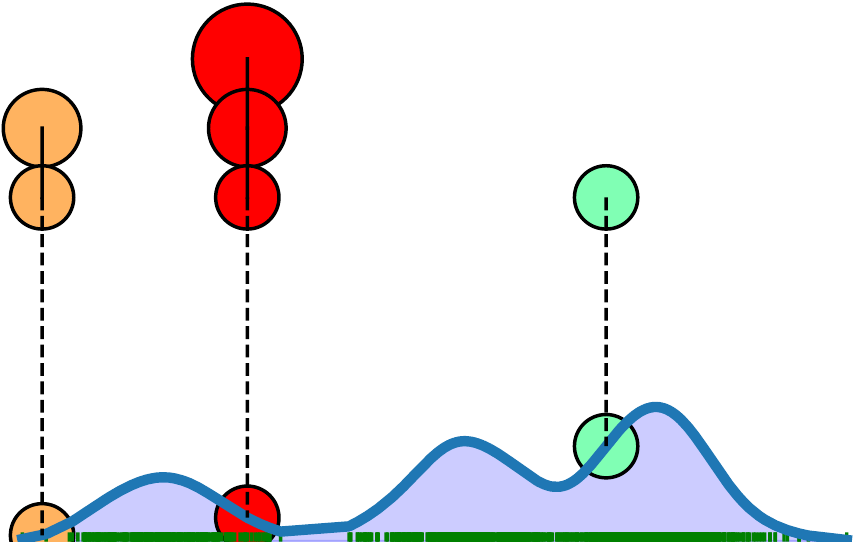} &
\includegraphics[width =0.18\textwidth]{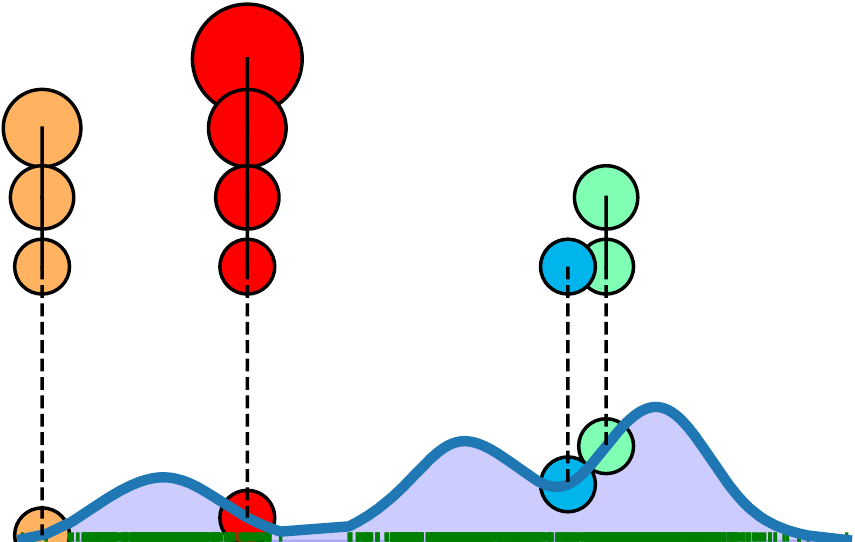} &
\includegraphics[width =0.18\textwidth]{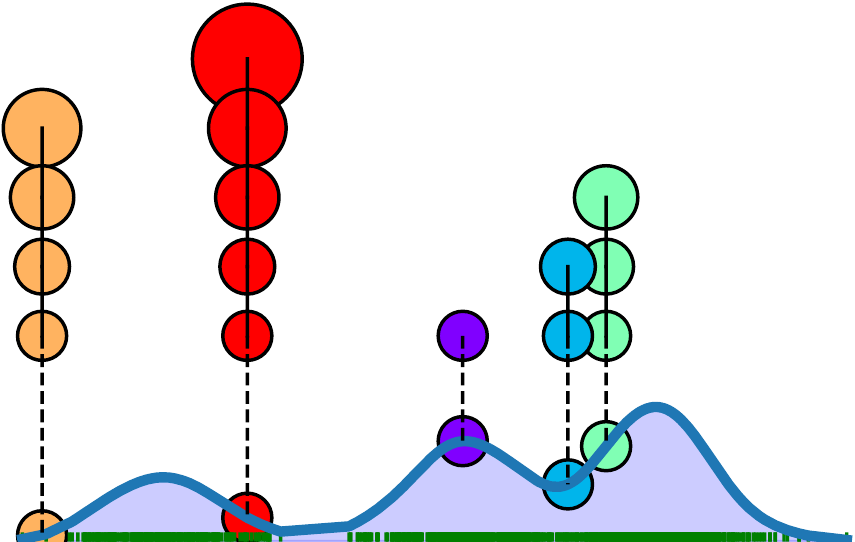} \\
{\small \#Particle=1} & {\small \#Particle=2} & {\small \#Particle=3 }& {\small \#Particle=4} & {\small \#Particle=5} \\
\end{tabular}
\caption{MMD minimization for data compression using different progressive optimization methods.} 
\label{fig:app_toy_mmd_split}
\end{figure*}

Figure~\ref{fig:app_loss_mmd}  shows the training iterations vs. the training loss (logarithm of MMD) of our method and the baseline approaches. 
As we can see from Figure~\ref{fig:app_loss_mmd}, our method yields the lowest training loss in general. 
The kicks of \texttt{New Initialization} and \texttt{Gradient Boosting} are resulted from 
re-weighting all particles after introducing new particles.

\begin{figure}
\centering
\begin{tabular}{c}
\raisebox{5em}{\rotatebox{90}{\small Log MMD}}~~
\includegraphics[height =0.4\textwidth]{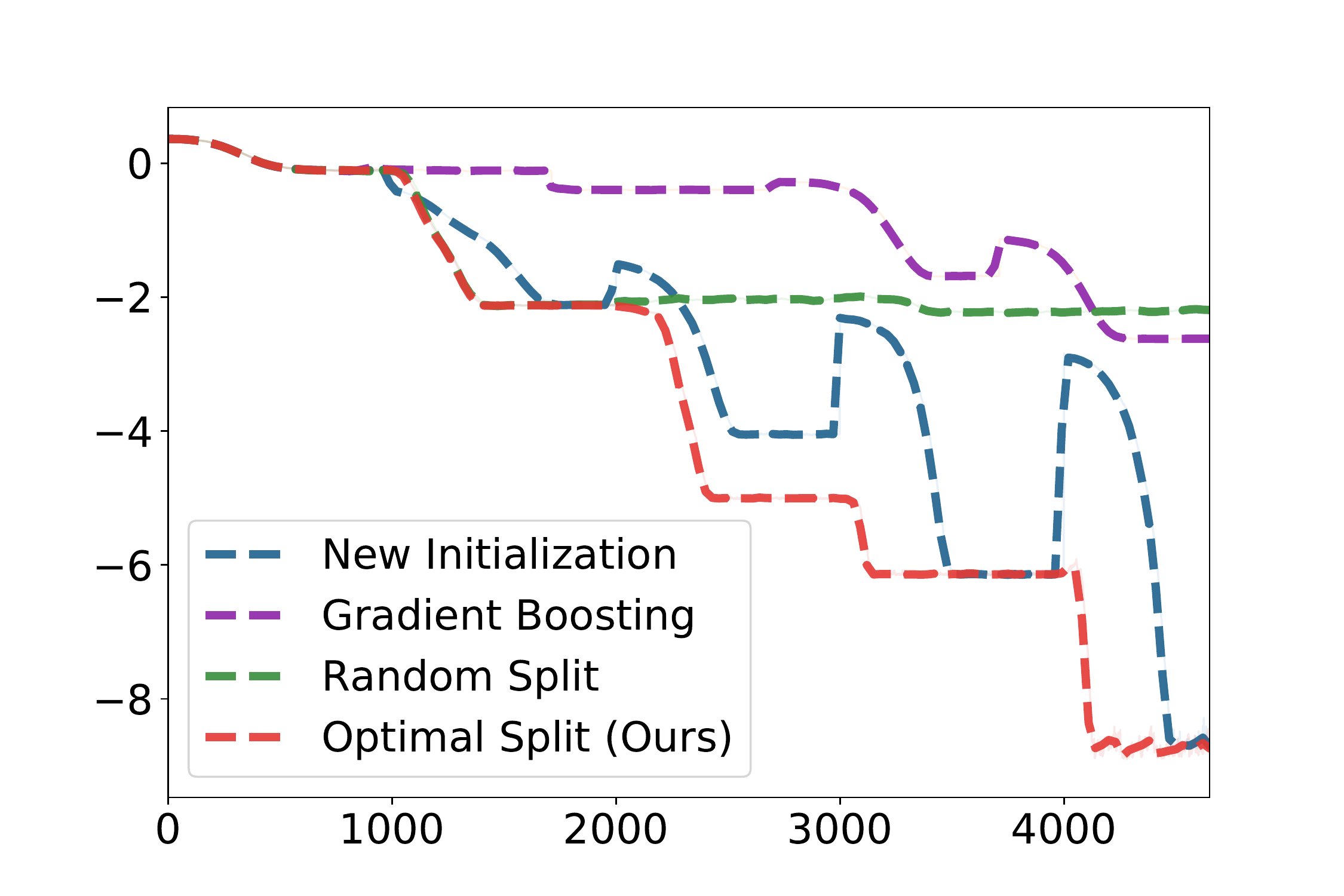} \\
~~~{\small Training Iterations} \\
\end{tabular}
\caption{Lose curve of different methods for MMD minimization.} 
\label{fig:app_loss_mmd}
\label{fig:app_transfer}
\end{figure}

\end{document}